\PassOptionsToPackage{dvipsnames,svgnames,x11names,table}{xcolor}
\documentclass{article}
\let\paperaddcontentsline\addcontentsline
\usepackage{chapters/iclr2027_conference,times}

\newif\ifpaperpreprint
\iclrfinalcopy       

\let\addcontentsline\paperaddcontentsline
\usepackage[utf8]{inputenc}
\usepackage[T1]{fontenc}
\usepackage{url}
\usepackage{booktabs}
\usepackage{amsfonts}
\usepackage{nicefrac}
\usepackage{microtype}
\usepackage{mdframed}
\usepackage{wrapfig}
\usepackage{amsthm}
\usepackage{bbm}
\usepackage{thm-restate}
\usepackage{float}
\usepackage{import}
\usepackage{pifont}
\usepackage{xcolor}
\newif\ifrevmarks \revmarksfalse                     
\newcommand{\revcolors}{\color{blue}\hypersetup{linkcolor=blue,citecolor=blue,urlcolor=blue}}
\DeclareRobustCommand{\rev}[1]{\ifrevmarks{\revcolors#1}\else#1\fi}
\newcommand{\revstart}{\ifrevmarks\revcolors\captionsetup{font+={color=blue}}\fi}  
\newcommand{\revredcolors}{\color{red}\hypersetup{linkcolor=red,citecolor=red,urlcolor=red}}  
\DeclareRobustCommand{\revred}[1]{\ifrevmarks{\revredcolors#1}\else#1\fi}
\newcommand{\revredstart}{\ifrevmarks\revredcolors\captionsetup{font+={color=red}}\fi}
\usepackage{svg}

\usepackage{amsmath}
\usepackage{mathtools}
\usepackage{amssymb}
\allowdisplaybreaks
\usepackage{bm}
\usepackage{fix-cm}
\usepackage[separate-uncertainty=true]{siunitx}
\usepackage{xfrac}
\usepackage[percent]{overpic}

\usepackage{graphicx}
\usepackage{flafter} 
\usepackage{placeins}
\usepackage{subcaption}
\usepackage{makecell}
\usepackage{multicol}
\usepackage{multirow}
\usepackage{adjustbox}
\usepackage{needspace}
\usepackage{tikz}
\usetikzlibrary{arrows.meta, positioning, fit, calc, backgrounds, decorations.pathreplacing, shapes.geometric}
\usepackage{etoc}
\usepackage{ifthen}
\usepackage{algorithm}
\usepackage{algpseudocode}
\usepackage[inline]{enumitem}

\usepackage{hyperref}
\usepackage[nameinlink]{cleveref}
\usepackage{csquotes}
\DeclareRobustCommand{\localejepa}{\textsc{LocaLeJEPA}}
\newcommand{\cmark}{\textcolor{Green}{\ding{51}}}
\newcommand{\xmark}{\textcolor{Red}{\ding{55}}}

\usepackage[most]{tcolorbox}
\tcbuselibrary{skins, breakable}

\definecolor{proofbar}{RGB}{225,225,225}
\definecolor{theorembar}{RGB}{128, 0, 128}
\definecolor{propositionbar}{RGB}{0, 100, 0}
\definecolor{definitionbar}{RGB}{0, 0, 200}
\definecolor{lemmabar}{RGB}{255, 140, 0}
\definecolor{corollarybar}{RGB}{204, 153, 255}
\definecolor{remarkbar}{RGB}{0, 139, 139}
\definecolor{assumptionbar}{RGB}{178, 34, 34}

\tcbset{
    mybarstyle/.style={
        enhanced,
        frame hidden,
        borderline west={2pt}{0pt}{#1},
        sharp corners,
        breakable,
        boxrule=0pt,        
        leftrule=0pt,       
        colframe=#1,        
        opacityback=0,      
        colback=white,
        left=6pt,           
        right=0pt,
        top=0pt,
        bottom=0pt,
        before skip=10pt,
        after skip=10pt,
    }
}

\makeatletter
\theoremstyle{plain}
\@ifundefined{theorem}{%
  \@ifundefined{c@theorem}{%
  }{%
    \@ifundefined{thetheorem}{}{}%
    \@namedef{theorem}{\@thm{\th@plain}{theorem}{Theorem}}%
    \expandafter\let\csname endtheorem\endcsname\@endtheorem
  }%
}{}

\newcommand{\DeclareSafeTheoremEnvironment}[4]{%
  \@ifundefined{#1}{%
    \expandafter\gdef\csname #1\endcsname{\@thm{#3}{#4}{#2}}%
    \expandafter\let\csname end#1\endcsname\@endtheorem
  }{}%
}
\DeclareSafeTheoremEnvironment{proposition}{Proposition}{\th@plain}{theorem}
\DeclareSafeTheoremEnvironment{lemma}{Lemma}{\th@plain}{theorem}
\DeclareSafeTheoremEnvironment{corollary}{Corollary}{\th@plain}{theorem}
\DeclareSafeTheoremEnvironment{assumption}{Assumption}{\th@plain}{theorem}
\DeclareSafeTheoremEnvironment{definition}{Definition}{\th@definition}{theorem}
\DeclareSafeTheoremEnvironment{remark}{Remark}{\th@remark}{}
\makeatother

\newif\ifpaperbars
\AtBeginDocument{%
\ificlrfinal\paperbarstrue\fi
\ifpaperpreprint\paperbarstrue\fi
\ifpaperbars
\tcolorboxenvironment{theorem}{mybarstyle=theorembar}
\tcolorboxenvironment{proposition}{mybarstyle=propositionbar}
\tcolorboxenvironment{lemma}{mybarstyle=lemmabar}
\tcolorboxenvironment{corollary}{mybarstyle=corollarybar}
\tcolorboxenvironment{definition}{mybarstyle=definitionbar}
\tcolorboxenvironment{remark}{mybarstyle=remarkbar}
\tcolorboxenvironment{assumption}{mybarstyle=assumptionbar}
\tcolorboxenvironment{proof}{mybarstyle=proofbar}
\fi
}

\newboolean{showcomments}
\setboolean{showcomments}{false}
\newcommand{\fr}[1]{\ifthenelse{\boolean{showcomments}}{\textcolor{red}{\textbf{[FR: #1]}}}{}}

\definecolor{Top1}{HTML}{0072B2} 
\definecolor{Top2}{HTML}{E69F00} 
\definecolor{Top3}{HTML}{CC79A7}

\definecolor{ufill}{HTML}{E1D5E7}
\definecolor{uborder}{HTML}{9673A6}
\definecolor{vfill}{HTML}{D5E8D4}
\definecolor{vborder}{HTML}{82B366}

\definecolor{ink}{HTML}{172D46}
\definecolor{teal}{HTML}{087F8C}
\definecolor{violet}{HTML}{7357A6}
\definecolor{amber}{HTML}{CC8431}
\definecolor{mist}{HTML}{F2F5F8}
\hypersetup{colorlinks=true,linkcolor=ink,citecolor=teal,urlcolor=teal,
  pdftitle={LocaLeJEPA: Maximum Entropy Representations under Geometric Priors},
  pdfauthor={Anonymous}}

\newcommand{\papertitle}{\localejepa{}: Maximum Entropy Representations under Geometric Priors}
\title{\localejepa{}: Maximum Entropy Representations under Geometric Priors}

\makeatletter
\newcommand{\silentthanks}[1]{%
  \begingroup\let\@makefnmark\@empty\thanks{#1}\endgroup}
\makeatother
\author{%
\parbox[t]{0.96\textwidth}{%
\raggedright
{\normalfont\small\bfseries\linespread{1.18}\selectfont
Kartik Tandon$^{*,1,\dagger}$,\hspace{0.45em}
Alessio Borgi$^{*,2,7,\dagger}$,\hspace{0.45em}
Pavel Snopov$^{3}$,\hspace{0.45em}
Inés García-Redondo$^{4}$,\hspace{0.45em}
Davide D'Ascenzo$^{5}$,\hspace{0.45em}
Sebastiano Cultrera di Montesano$^{6}$,\hspace{0.45em}
Fabrizio Silvestri$^{2}$,\hspace{0.45em}
Pietro Liò$^{7}$,\hspace{0.45em}
Francesca Dominici$^{8}$,\hspace{0.45em}
Randall Balestriero$^{9}$,\hspace{0.45em}
Claudio Battiloro$^{9,8,\dagger}$\endgraf}
\vspace{0.9em}
{\normalfont\footnotesize\linespread{1.12}\selectfont
$^1$University of Pennsylvania $\cdot$
$^2$Sapienza University of Rome $\cdot$
$^3$University of Texas Rio Grande Valley $\cdot$
$^4$University of Fribourg $\cdot$
$^5$University of Milan $\cdot$
$^6$Broad Institute $\cdot$
$^7$University of Cambridge $\cdot$
$^8$Harvard University $\cdot$
$^9$Brown University\endgraf}
}%
\silentthanks{: Equal contribution.\space
$\dagger$: Corresponding authors: \href{mailto:ktandon@sas.upenn.edu}{ktandon@sas.upenn.edu},
\href{mailto:alessio.borgi@uniroma1.it}{alessio.borgi@uniroma1.it},
\href{mailto:ab3352@cam.ac.uk}{ab3352@cam.ac.uk},
\href{mailto:claudio_battiloro@brown.edu}{claudio\_battiloro@brown.edu}.\space
}%
}

\begin{document}
\maketitle
\begin{abstract}
Joint-Embedding Predictive Architectures (JEPA) offer \rev{an appealing} approach to self-supervised learning, but are typically prone to representation collapse. LeJEPA resolves this by introducing a \rev{theoretically grounded} distribution-matching objective that drives the embedding towards an isotropic Gaussian. However, this approach is fundamentally tied to Euclidean assumptions\rev{, and its global target does not reflect} domain-specific geometric priors, such as the use of non-Euclidean embedding geometries, e.g., equivariant CNNs or hyperbolic ViTs. We thus introduce LocaLeJEPA, a framework motivated by a manifold-theoretic reformulation of LeJEPA’s entropy-maximization property. The accompanying LocaLeReg, a novel locally Euclidean distribution-matching regularizer, enables a \rev{stable and scalable} implementation of the LocaLeJEPA framework. We then perform extensive experiments in the vision setting on models trained on \revred{ImageNet-1K}. LocaLeJEPA \rev{significantly} outperforms LeJEPA when paired with hyperbolic ViT-S/8 and equivariant ResNet-50 backbones\rev{, and also with ordinary Euclidean ViT-S/8 and ViT-L/14 backbones. This holds} both in terms of linear and \revred{$k$-NN} classification performance on ImageNet and on a series of downstream tasks tested. \rev{We} view the LocaLeJEPA formalism as a principled approach to the JEPA paradigm that better respects intrinsic data geometry while also accommodating generalizations to exotic embedding geometries. Code available at: \url{https://github.com/alessioborgi/LocaLeJEPA}.
\end{abstract}

\vspace{-0.3cm}
\section{Introduction}
\vspace{-0.3cm}
\label{sec:introduction}

The Joint-Embedding Predictive Architecture (JEPA) paradigm \citep{bromley1993signature, lecun2022autonomous, Assran_2023_CVPR} provides a tantalizing alternative to both feature-space masked modeling and contrastive sample-based objectives for Self-Supervised Learning (SSL). Allowing the model to instead learn relevant features by understanding the relationship between various semantically overlapping `views' of the same object in latent space, it ideally represents a \rev{simpler} approach to SSL. However, in practice, this learning objective falls prey to representation collapse, requiring the incorporation of techniques such as stop-gradient \rev{\citep{chen2021siamsiam}}, EMA schedulers \citep{Caron_2021_ICCV}, and various whitening-based regularizers \citep{ermolov2021whitening, tian2021understanding}. The recent Latent-Euclidean JEPA (LeJEPA) \citep{balestriero2025lejepa} bypasses these ad-hoc methodologies and identifies the isotropic Gaussian as the optimal embedding distribution for downstream tasks. Optimality is minimax, \rev{i.e.,} no other embedding distribution yields a lower worst-case risk for linear regression or \revred{$k$-NN} classification on an unknown downstream task. LeJEPA \rev{therefore} adds a Gaussian distribution matching objective to the JEPA loss, preventing collapse while being stable and scalable.

Consequent to the introduction of LeJEPA, there has been a flurry of activity in adapting this methodology for use in video models \citep{kuhn2026levjepaefficientscalable}, world models \citep{maes_lelidec2026lewm}, and vision-language models \citep{kuhn2026levljepa}, as well as diverse domains such as time series \citep{chemeris2026lenepa}\rev{ and} molecular representations \citep{rottach2026moljepa}. 

\textbf{Motivation.} Concurrently, however, another line of recent work \rev{questions} the claimed optimality of LeJEPA’s isotropic Gaussian, proposing instead objectives that match to rectified Gaussians \citep{kuang2026rectifiedlpjepajointembeddingpredictive}, subspace Gaussians \citep{zhao2026subjepa}, or the uniform distribution on the sphere \citep{nicollier2026spherejepa, nicollier2026expandingspherejepa}. A throughline of these works is that they operate in regimes that violate the fundamental Euclidean-space assumptions of LeJEPA, thus voiding the theoretical optimality of the Gaussian distribution. For instance, \cite{nicollier2026spherejepa} considers embeddings constrained to a hypersphere $\mathbb{S}^d$ while \cite{kuang2026rectifiedlpjepajointembeddingpredictive} considers $l_p$-normed spaces. As the JEPA framework is increasingly adapted to diverse, and often explicitly non-Euclidean domains, this naturally raises the question whether there is a principled generalization of LeJEPA to such domains that still retains its theoretical grounding and \rev{its stability in practice}.

\vspace{-0.2cm}
\begin{figure}[hbpt]
    \centering
    \includegraphics[
        width=.85\linewidth
    ]{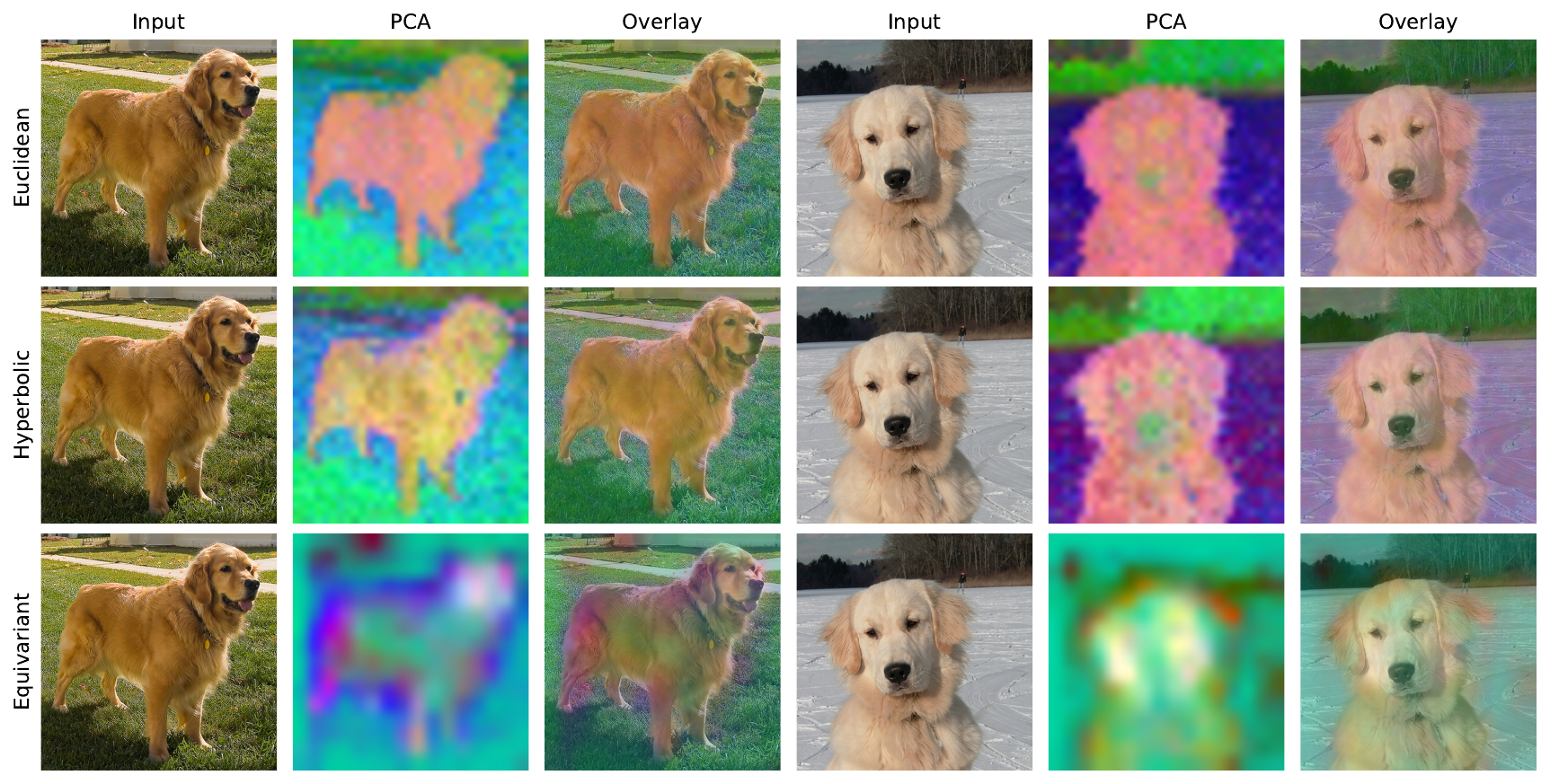}
    \caption{\emph{LocaLeJEPA learns semantically \rev{organized} representations, across geometries.}
    PCA visualizations of frozen LocaLeJEPA patch features for two ImageNet-1K validation images. Rows correspond to an ordinary (i.e. Euclidean) ViT-S/8, hyperbolic ViT-S/8, and $D_4$-equivariant ResNet-50, while columns show the input, the first three principal components mapped to RGB, and their overlay. Across all three geometries, the self-supervised features learned by LocaLeJEPA exhibit coherent object--background structure despite their \rev{very} different representation spaces. Full visualization details and comparisons with LeJEPA are provided in \Cref{app:pca}\rev{ (\Cref{fig:pca-vs-sigreg})}.}
    \label{fig:pca-teaser}
\end{figure}
\vspace{-0.2cm}

\textbf{Contribution.} In this work, we introduce \textbf{LocaLeJEPA}, a manifold-theoretic JEPA objective designed to \rev{couple} the geometry of the data manifold $\mathcal{M}$ \rev{with that} of an exotic embedding space $\mathcal{N}$. The accompanying regularizer, \textit{\textbf{Loca}lly \textbf{L}atent \textbf{e}uclidean \textbf{Reg}ularization} (LocaLeReg), drives the embeddings to be locally uniformly distributed on a Euclidean ball. Since every manifold is locally Euclidean and the uniform distribution is entropy maximization on a compact space, we view this as a local adaptation of the global entropy maximization property of LeJEPA’s Gaussian distribution. We implement this by first partitioning each batch of embeddings into $K$ \revred{neighborhoods} by $K$-means in latent space, and then applying an Epps-Pulley-based distribution-matching objective to each \revred{neighborhood}. This retains LeJEPA's sketched projections approach and thus inherits its linear cost and stability, while allowing for a far more expressive embedding distribution due to locality. \rev{While} locality allows us to consider exotic $\mathcal{N}$, even the global $K=1$ case \rev{already improves on LeJEPA trained with the same optimization settings, on ViT-S/8, hyperbolic ViT-S/8 and the $D_4$-equivariant ResNet-50 (\Cref{tab:in1k-main}). Our target distribution is also compact,} theoretically allowing the encoder to learn a continuous function with respect to $\mathcal{M}$.  

We experimentally instantiate LocaLeJEPA in the vision setting, though we note the framework is suitably general to support diverse modalities such as graph data. Across Euclidean, hyperbolic, and $D_4$-equivariant backbones pretrained on Imagenette and ImageNet-1K, we show LocaLeJEPA produces qualitatively better embeddings that inherently preserve the intrinsic topology of the data manifold $\mathcal{M}$, such as preserving known circular or toroidal features, while also accommodating non-Euclidean embedding spaces $\mathcal{N}$. We then quantitatively establish the downstream performance of LocaLeJEPA's embeddings, showing it outperforms LeJEPA under \rev{linear probing} and \revred{$k$-NN} classification using Euclidean, hyperbolic, and equivariant backbones, for all datasets tested. LocaLeJEPA expands the scope of the JEPA paradigm as a whole while retaining the stability and scalability afforded by the entropy-maximizing objective of LeJEPA.


\begin{figure}[t!]
\centering
\includegraphics[width=\linewidth]{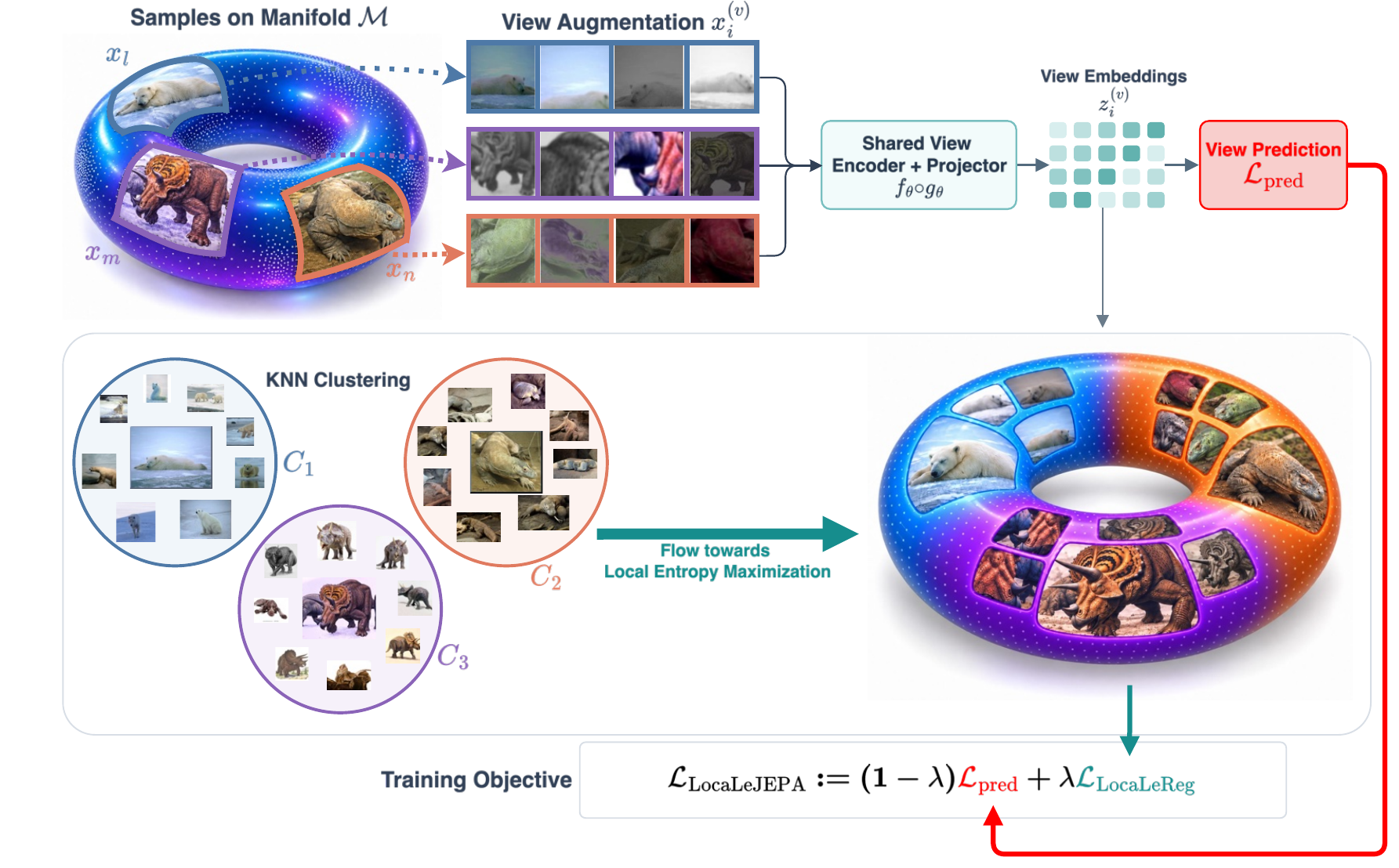}
\caption{\emph{LocaLeJEPA Pipeline.}
A shared encoder and projector feed two losses: $\mathcal{L}_{\text{pred}}$ for view prediction and $\mathcal{L}_{\text{LocaLeReg}}$ for distribution matching. $\mathcal{L}_{\text{pred}}$ drives the model to predict an image in latent space from semantically synecdochic views. Concurrently, $\mathcal{L}_{\text{LocaLeReg}}$ counteracts representation collapse, without driving to a fixed global distribution. This is achieved by clustering batches of embedded views by K-Means, and then driving each cluster towards the uniform distribution on the Euclidean ball $\mathbb{B}^d$. Locality allows for the expressivity necessary to learn the intrinsic geometry of the data manifold, while uniformity within clusters encourages entropy maximization.}
\label{fig:architecture}
\end{figure}
\vspace{-0.3cm}
\section{Related Works}
\vspace{-0.3cm}
\label{sec:related_works}

\textbf{Self-Supervised Learning} The goal of self-supervised learning (SSL) is to learn data representations that capture intrinsic features that are useful for a wide-range of downstream tasks. Initial approaches to SSL relied largely on positive-negative sample pairs combined with a contrastive loss, as in CPC \citep{oord2018representation}, MoCo \citep{he2020moco}, and
SimCLR \citep{chen2020simclr}. However, requiring such pairs can be costly and inefficient\rev{. This led} to the development of non-contrastive approaches such as BYOL \citep{grill2020bootstrap}, \rev{SimSiam} \citep{chen2021siamsiam}, Barlow Twins \citep{zbontar2021barlow}, VICReg \citep{bardes2022vicreg}, which instead involve objectives based on learning invariant features with respect to certain transformations. The JEPA paradigm extends this non-contrastive approach and may be understood as instead predicting a latent image representation from semantically overlapping views \citep{lecun2022autonomous, Assran_2023_CVPR}.

\textbf{SSL with Geometric Priors} The adaptation of principles of geometric deep learning and its incorporation of geometric priors is still a \rev{growing} subfield of \rev{SSL}. Initial works in this field have explored applications of equivariance to SSL, such as in the contrastive learning setting \citep{dangovski2022equivariant} or as alternatives to invariance-based learning objectives \citep{wang2024understanding}. In another direction, self-supervised and unsupervised approaches have also seen applications to 3D and point-cloud data, as in \cite{xie2020pointcontrast} or \cite{zhang2021self}. 
\rev{Other works have also used} geometric architectures towards representation learning for molecular data, such as \cite{stark2022infomax} and \cite{liu2022pretraining}, as well as for graph-valued data more generally \citep{velickovic2019deep, you2020graphcl, hassani2020contrastive, thakoor2021bootstrapped, xu2026hpjepa}.

\vspace{-0.3cm}
\section{Theoretical Motivation}
\vspace{-0.3cm}
\label{sec:theory}

By the manifold hypothesis \citep{FeffermanMitterNarayanan2016}, we may suppose that our data lies on a compact Riemannian manifold $\mathcal{M}$, though embedded in a high-dimensional ambient space, i.e., $\mathcal{M} \subset \mathbb{R}^{\revred{D}}$. The general objective of self-supervised learning may be understood as learning a $f: \mathbb{R}^{\revred{D}} \to \mathcal{N}$, for $\mathcal{N}$ an appropriate embedding space for the downstream tasks under consideration. For simplicity, we assume $f$ restricts to a smooth embedding on $\mathcal{M}$, so that the image $\mathcal{W} = f(\mathcal{M})$ is itself a closed Riemannian manifold, carrying the metric induced from $\mathcal{N}$. Here, taking $\mathcal{N}$ to be the Euclidean $\mathbb{R}^d$ recovers the setting of LeJEPA \citep{balestriero2025lejepa} while letting $\mathcal{N}$ be the hypersphere $\mathbb{S}^{d}$ recovers the setting of Sphere-JEPA \citep{nicollier2026spherejepa}. However, we may consider more general geometric priors. For instance, we may consider $\mathcal{N}$ to be hyperbolic space or a space equivariant under actions of a given Lie group. 

LeJEPA \citep{balestriero2025lejepa} then ensures non-collapse of the embedding by a distribution matching objective that drives the final embeddings towards an isotropic Gaussian, which they identify as the optimal distribution for downstream linear regression and \revred{$k$-NN} classification. A natural first attempt to generalize LeJEPA to geometric settings would be to consider a Gaussian distribution on $\mathcal{N}$. While there is no canonical analog of a Gaussian distribution such that it is `intrinsic' to an arbitrary geometry, perhaps the most natural extension is given by considering heat flow on a manifold. Recall that the heat equation is given by  $\frac{\partial \mathbf{U}(x,t)}{\partial t} = -\Delta_\mathcal{W}\mathbf{U}(x,t)$, where $\Delta_\mathcal{W}$ is the Laplace-Beltrami operator on the Riemannian manifold $\mathcal{W}$ with metric $g$. We then have the following.

\begin{proposition}[Gaussian as the Solution of Heat Flow] Let $(\mathcal{W},g)$ be a closed, connected Riemannian manifold. The solution to the heat equation on $\mathcal{W}$ with initial condition $\mathbf{U}$ is given by: 
\begin{equation}
    \mathbf{U}(x,t)  = \int_{\mathcal{W}} K_\mathcal{W}(x,y,t) \mathbf{U}(y,\revred{0}) d\operatorname{Vol}_g(\revred{y})
\end{equation}
where $K_\mathcal{W}(x,y,t)$ is the heat kernel and $d\operatorname{Vol}_g$ is the canonical volume form. Explicitly, the heat kernel on $\mathbb{R}^n$ is given by $K_{\mathbb{R}^n}(x,y,t) = \frac{1}{\revred{(4\pi t)}^{n/2}} \exp\left(-\frac{|x-y|^2}{4t}\right)$ and on arbitrary closed Riemannian $\mathcal{W}$, as $t \to 0$, a first order approximation is given by $K_{\mathcal{W}}(x,y,t) \approx \frac{1}{\revred{(4\pi t)}^{n/2}} \exp\left(-\frac{d(x,y)^2}{4t}\right)$, where $d(x,y)$ is the geodesic distance on $\mathcal{W}$. (See \cite{BerlineGetzlerVergne1992} for details).
\label{prop:Gaussian-heat-flow}
\end{proposition}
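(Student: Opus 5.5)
The plan has three parts: the spectral representation of the solution on a closed manifold, a direct check of the Euclidean kernel, and the small-time asymptotics for general $\mathcal{W}$. Throughout we use the sign convention under which the Euclidean kernel stated is the fundamental solution. With $\Delta_\mathcal{W}$ the nonnegative (geometer's) Laplace--Beltrami operator, the equation reads $\partial_t \mathbf{U} = -\Delta_\mathcal{W}\mathbf{U}$.

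\textbf{Spectral representation.} Since $\mathcal{W}$ is closed, $\Delta_\mathcal{W}$ is an essentially self-adjoint, nonnegative elliptic operator on $L^2(\mathcal{W}, d\operatorname{Vol}_g)$ with compact resolvent. We therefore take an orthonormal eigenbasis $\{\phi_k\}$ with eigenvalues $0=\lambda_0<\lambda_1\le\lambda_2\le\cdots\to\infty$; connectedness makes $\lambda_0$ simple. We define
\begin{equation*}
K_\mathcal{W}(x,y,t)=\sum_k e^{-\lambda_k t}\phi_k(x)\phi_k(y).
\end{equation*}
Weyl's law $\lambda_k\sim c\,k^{2/n}$, combined with Sobolev bounds $\|\phi_k\|_{C^m}\lesssim \lambda_k^{(n/2+m)/2}$, shows the series converges in $C^\infty$ for $t>0$. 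Differentiating termwise shows $K$ solves the heat equation in $(x,t)$. Expanding $\mathbf{U}(\cdot,0)=\sum_k\langle \mathbf{U}(\cdot,0),\phi_k\rangle\phi_k$ then gives the integral formula, with $\mathbf{U}(\cdot,t)\to \mathbf{U}(\cdot,0)$ in $L^2$ as $t\to0$. Uniqueness follows from the energy identity
\begin{equation*}
\tfrac{d}{dt}\|\mathbf{U}\|_{L^2}^2=-2\|\nabla \mathbf{U}\|_{L^2}^2\le0
\end{equation*}
applied to the difference of two solutions. No boundary terms arise because $\mathcal{W}$ is closed.

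\textbf{Euclidean kernel.} On $\mathbb{R}^n$ we verify directly that $(4\pi t)^{-n/2}e^{-|x-y|^2/4t}$ satisfies $\partial_t K=\sum_i\partial_{x_i}^2K$. This is a routine differentiation. Its total mass is $1$, and it concentrates at $y$ as $t\to0$, so it is an approximate identity. Alternatively, the formula drops out of the Fourier transform, since $\widehat{K}(\xi,t)=e^{-t|\xi|^2}$.

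\textbf{Small-time approximation (main obstacle).} For general $\mathcal{W}$ we build a parametrix in the style of Minakshisundaram--Pleijel, as in \cite{BerlineGetzlerVergne1992}. Fix $\varepsilon$ below the injectivity radius and a cutoff $\chi$ supported where $d(x,y)<\varepsilon$. We set
\begin{equation*}
H_N(x,y,t)=\chi\,(4\pi t)^{-n/2}e^{-d(x,y)^2/4t}\sum_{j=0}^N t^j\Phi_j(x,y).
\end{equation*}
The coefficients $\Phi_j$ are determined by transport equations along radial geodesics in normal coordinates. This yields $\Phi_0=J(x,y)^{-1/2}$, where $J$ is the Jacobian of $\exp_y$, so in particular $\Phi_0(y,y)=1$. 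The error $(\partial_t+\Delta_x)H_N$ is then $O(t^{N-n/2})$. Duhamel's formula, via a convergent Volterra series, gives $K-H_N=O(t^{N+1-n/2})$ uniformly.

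Taking leading order yields $K_\mathcal{W}\approx(4\pi t)^{-n/2}e^{-d^2/4t}\,(1+O(d^2)+O(t))$. Because $\Phi_0=1+O(d^2)$ and the Gaussian factor localizes to $d\lesssim\sqrt t$, the first-order statement follows. Away from the diagonal, both sides are $O(e^{-c/t})$. The technical crux is the Volterra/Duhamel convergence and the uniform remainder estimate; for these we would cite the referenced text rather than reproduce them.
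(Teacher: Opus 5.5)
Your proposal is correct and takes the same route as the paper. The paper gives no argument of its own for this statement and simply defers to \cite{BerlineGetzlerVergne1992}; your spectral expansion plus Minakshisundaram--Pleijel parametrix (with $\Phi_0=J^{-1/2}$) and Duhamel/Volterra remainder is the standard treatment found there. One minor point: the remainder bound as you wrote it gives only $K_{\mathcal{W}}=O(t^\infty)$ off the diagonal, not $O(e^{-c/t})$, since the latter requires tracking Gaussian factors through the Volterra series or a separate Gaussian upper bound; but $O(t^\infty)$ is already enough for your uniform first-order estimate.
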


\Cref{prop:Gaussian-heat-flow} therefore lets us define a `Gaussian' intrinsically on a manifold, as the heat kernel $K_\mathcal{W}(x,\cdot,t)$ centered at $x$ with $t$ in the role of a variance. That identification holds only as $t \to 0$, however, and in exactly that regime the kernel is concentrated near $x$. An intrinsic Gaussian is therefore a \textit{strictly local} object on $\mathcal{W}$. As such, in \rev{practice}, we may consider a local version of LeJEPA's SIGReg as an appropriate analog for the manifold setting. Unfortunately, this naive approach does not preserve LeJEPA's optimality guarantees. For instance, in Sphere-JEPA \citep{nicollier2026spherejepa}, it is shown that for the case $\mathcal{N}=\mathbb{S}^{d}$, the uniform distribution is optimal for downstream linear regression and \revred{$k$-NN} classification tasks, and they propose SUSReg, a distribution matching objective that instead drives towards the uniform distribution. Taking on a heat flow perspective allows us to interpolate between these objectives on an arbitrary $\mathcal{N}$\rev{, of which the sphere is a special case}.  In particular, we then have an entire family of distributions $K_{\mathcal{W}}(\cdot,\cdot,t)$, parameterized by time, that may reasonably be understood as targets for a distribution-matching objective that `lifts' LeJEPA to manifolds. However, this would not consider the entropy maximization property of the Gaussian distribution that has been tied to the success of LeJEPA, in works such as Rectified LpJEPA \citep{kuang2026rectifiedlpjepajointembeddingpredictive}. More generally, an entropy-maximization objective has been shown to be desirable in SSL \citep{liu2022maximum, zhang2024matrix, chakraborty2025entropy}. We recall that the usual Gaussian distribution, under a given variance, is the unique entropy-maximizing distribution in Euclidean space, while the uniform distribution is the unique entropy-maximizing distribution on an arbitrary compact Riemannian manifold. The optimal transport perspective allows us to understand heat flow as gradient flow towards entropy maximization on an arbitrary manifold. 

\begin{proposition}[Heat Flow as Wasserstein Gradient Flow] Let $(\mathcal{W},g)$ be a closed, connected Riemannian manifold. Recall that the differential entropy of a distribution $\rho$ on $\mathcal{W}$ is given by :
\begin{equation}
    h(\rho)=-\int_{\mathcal{W}} \rho(x)\log \rho(x)\,d\operatorname{Vol}_g(x)
\end{equation}

Then in the 2-Wasserstein space of measures, $\mathcal{P}_2(\mathcal{W})$, the sequence of distributions arising as solutions of $\frac{\partial \rho_t}{\partial t} = -\Delta_\mathcal{W}\rho$ constitute gradient flow of the entropy functional in $\mathcal{P}_2{(\mathcal{W})}$. (See \cite{AmbrosioGigliSavare2013HeatFlow} or \cite{JordanKinderlehrerOtto1998} for details).

\label{prop:transport-heat-flow}
\end{proposition}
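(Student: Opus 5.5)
We follow the Jordan--Kinderlehrer--Otto (JKO) strategy, in the form developed by Ambrosio--Gigli--Savar\'e for Riemannian manifolds. We work with the functional $\mathcal{E}(\rho)=-h(\rho)=\int_{\mathcal{W}}\rho\log\rho\,d\operatorname{Vol}_g$, whose gradient flow is the claimed one. (Gradient flow "of the entropy" means steepest descent of $\mathcal{E}$, i.e. ascent of $h$. The sign convention $\partial_t\rho=-\Delta_{\mathcal{W}}\rho$ matches the statement when $\Delta_{\mathcal{W}}$ is the nonnegative Laplacian, as in \Cref{prop:Gaussian-heat-flow}.)

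The argument has three parts, taken in order: a formal Otto-calculus identification, a rigorous version through the minimizing-movement scheme, and uniqueness from displacement convexity.

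\textbf{Step 1 (Otto calculus).} Tangent vectors at $\rho$ in $\mathcal{P}_2(\mathcal{W})$ are represented as $\partial_t\rho=-\operatorname{div}_g(\rho\nabla\phi)$. The metric is $\|\partial_t\rho\|^2=\int|\nabla\phi|^2\rho\,d\operatorname{Vol}_g$, following Benamou--Brenier on $\mathcal{W}$. The first variation of $\mathcal{E}$ is $\delta\mathcal{E}/\delta\rho=\log\rho+1$. Integrating by parts on the closed manifold gives the Wasserstein gradient $\operatorname{grad}\mathcal{E}(\rho)=-\operatorname{div}_g(\rho\nabla\log\rho)=-\Delta_g\rho$, where $\Delta_g=\operatorname{div}_g\nabla$. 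The equation $\partial_t\rho=-\operatorname{grad}\mathcal{E}$ is therefore the heat equation.

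\textbf{Step 2 (JKO scheme).} Fix $\tau>0$ and define iterates by
\[
\rho^{\tau}_{k+1}\in\arg\min_{\rho\in\mathcal{P}_2(\mathcal{W})}\Big\{\tfrac{1}{2\tau}W_2^2(\rho,\rho^\tau_k)+\mathcal{E}(\rho)\Big\}.
\]
A minimizer exists by weak lower semicontinuity of both terms and narrow compactness of $\mathcal{P}(\mathcal{W})$, which holds because $\mathcal{W}$ is compact.

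To derive the Euler--Lagrange equation, we perturb $\rho_{k+1}$ by the pushforward along the flow of a smooth vector field $\xi$. We then use McCann's theorem on $\mathcal{W}$: the optimal map is $T=\exp(-\nabla\psi)$ with $\psi$ $d^2/2$-concave. This yields $\nabla\rho_{k+1}=\rho_{k+1}\,\tau^{-1}\nabla\psi$ in the weak sense, which is a discrete heat equation.

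Finally, we use standard estimates. The energy inequality $\sum_k W_2^2(\rho_k,\rho_{k+1})/(2\tau)\le \mathcal{E}(\rho_0)-\inf\mathcal{E}$ gives $\tfrac12$-H\"older equicontinuity in $W_2$. Here $\inf\mathcal{E}$ is finite, attained by the normalized volume measure. Passing $\tau\to 0$ then yields a weak solution of $\partial_t\rho=\Delta_g\rho$.

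\textbf{Step 3 (identification and uniqueness).} The solution obtained this way is the curve of maximal slope, or EVI gradient flow, of $\mathcal{E}$. To identify it with the classical heat flow, we use that on a closed manifold with $\operatorname{Ric}\ge K$, which holds for some $K$ by compactness, the entropy is $K$-displacement convex. This is the theorem of Otto--Villani and Cordero-Erausquin--McCann--Schmuckenschl\"ager. Displacement convexity gives contraction $W_2(\rho_t,\sigma_t)\le e^{-Kt}W_2(\rho_0,\sigma_0)$, hence uniqueness of the gradient flow. On the other hand, the heat semigroup $\int K_{\mathcal{W}}(\cdot,y,t)\rho_0(y)\,d\operatorname{Vol}_g(y)$ from \Cref{prop:Gaussian-heat-flow} solves the equation and is unique among weak solutions. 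The two curves therefore coincide.

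\textbf{Main obstacle.} The hardest step is the limit passage in Step 2, namely showing that the discrete Euler--Lagrange relation converges on a curved space. This requires regularity of optimal maps, via McCann, and control of the cut locus. I would avoid this by proving the EVI directly from $K$-convexity, following Ambrosio--Gigli--Savar\'e, rather than handling Euler--Lagrange equations by hand.
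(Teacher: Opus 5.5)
The paper gives no argument beyond citing Jordan--Kinderlehrer--Otto and Ambrosio--Gigli--Savar\'e, so your JKO route is in line with it. Your main chain is a sound outline: Step~1, then Step~2, then identification through uniqueness of weak solutions of the heat equation. The weak point is Step~3, where you treat ``curve of maximal slope'' and ``EVI gradient flow'' as interchangeable and derive contraction from displacement convexity. The shortcut you propose under \emph{Main obstacle} depends on this.

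Geodesic $K$-convexity of $\mathcal{E}$ alone makes the slope a lower semicontinuous strong upper gradient. So minimizing movements converge, up to subsequences, to curves of maximal slope (Chapter~2 of Ambrosio--Gigli--Savar\'e's book). Their theorem that extracts EVI and the contraction $W_2(\rho_t,\sigma_t)\le e^{-Kt}W_2(\rho_0,\sigma_0)$ from the scheme (Chapter~4) needs more. It requires $\mathcal{E}+\frac{1}{2\tau}W_2^2(\cdot,\mu)$ to be $(\frac{1}{\tau}+K)$-convex along some curve joining any two measures. In $\mathbb{R}^d$, generalized geodesics supply this. On $\mathcal{P}_2(\mathcal{W})$ for a general closed $\mathcal{W}$ no substitute is available. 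Once $\mathcal{W}$ has positive sectional curvature somewhere, $\frac12 W_2^2(\cdot,\mu)$ is not even $1$-convex along geodesics; on Dirac masses it reduces to $\frac12 d^2(\cdot,x)$.

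The known proofs of EVI and contraction for the heat flow on manifolds (von Renesse--Sturm, Daneri--Savar\'e, Erbar) all use the differential structure. They rely either on gradient estimates for the semigroup or on the velocity field being $-\nabla\log\rho_t$, which in a JKO proof comes from the discrete Euler--Lagrange equation you want to skip. So proving EVI ``directly from $K$-convexity'' cannot replace Step~2. Without Step~2, nothing links the abstract flow to the PDE, and your identification needs that link.

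Keep Step~2; it is more routine than you fear. McCann's theorem, with the Cordero-Erausquin--McCann--Schmuckenschl\"ager refinement, places $T(y)$ outside the cut locus of $y$ for $\rho_{k+1}$-a.e.\ $y$. The first variation of $\frac12 W_2^2(\cdot,\rho_k)$ along $\pm\xi$ then comes from the semiconcave functions $\frac12 d^2(\cdot,T(y))$. When you test against a smooth $\zeta$, the second-order Taylor remainder along minimizing geodesics is $O(d^2(y,T(y)))$. Its sum over the steps is $O(\tau)$ by your energy inequality.

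The limit is then both a curve of maximal slope of $\mathcal{E}$ and a weak solution of the heat equation, so it is the heat semigroup. This proves the proposition in the energy-dissipation sense, and uniqueness of the gradient flow is not needed. The cited Ambrosio--Gigli--Savar\'e treatment works the same way: it uses curves of maximal slope throughout. It gets uniqueness from Gigli's argument (convexity of $W_2^2$ and strict convexity of $\mathcal{E}$ along linear interpolation of measures), not from contraction.

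Minor: if $T=\exp(-\nabla\psi)$ pushes $\rho_{k+1}$ forward to $\rho_k$, the discrete relation is $\nabla\rho_{k+1}=-\tau^{-1}\rho_{k+1}\nabla\psi$.
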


This sequence is bracketed by its two asymptotic ends: the short-time regime of \Cref{prop:Gaussian-heat-flow}, and the following long-time regime. 

\begin{proposition}[Uniform Distribution as the Solution of Heat Flow] Let $(\mathcal{W},g)$ be a closed, connected Riemannian manifold. Then in the infinite-time regime, heat flow satisfies:
\begin{equation}
    \mathbf{U}(\cdot, t) \xrightarrow[t \to \infty]{} \frac{1}{\operatorname{Vol}_{\revred{g}}(\mathcal{W})} \int_\mathcal{W} \mathbf{U}(\revred{x}, 0) d \operatorname{Vol}_g(x)
\end{equation}
in $L^2(\mathcal{W})$. Equivalently, if $\textbf{U}(\cdot,0)$ is an initial probability distribution, $
\textbf{U}(x,t) \xrightarrow[t \to \infty]{} \frac{1}{\operatorname{Vol}_{\revred{g}}(\mathcal{W})}
$

uniformly in $x$, i.e. converges to the uniform Riemannian volume measure. (See \cite{BerlineGetzlerVergne1992}).
\label{prop:Uniform-heat-flow}
\end{proposition}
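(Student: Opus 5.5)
The plan is to use the spectral decomposition of the Laplace--Beltrami operator on the closed manifold $(\mathcal{W},g)$. With the sign convention that $\Delta_\mathcal{W}$ is the nonnegative (geometer's) Laplacian, so that $\partial_t \mathbf{U} = -\Delta_\mathcal{W}\mathbf{U}$ is dissipative, the operator is self-adjoint, elliptic and nonnegative on $L^2(\mathcal{W})$. Since $\mathcal{W}$ is compact, it has discrete spectrum $0=\lambda_0\le\lambda_1\le\lambda_2\le\cdots\to\infty$ with a smooth orthonormal eigenbasis $\{\phi_k\}$.

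The first step is to identify the kernel of $\Delta_\mathcal{W}$. If $\Delta_\mathcal{W}\phi=0$, integration by parts on a manifold without boundary gives $\int_\mathcal{W}|\nabla\phi|^2\,d\operatorname{Vol}_g=0$, so $\phi$ is locally constant. Connectedness then makes $\phi$ constant. Hence $\lambda_0=0$ is simple with $\phi_0=\operatorname{Vol}_g(\mathcal{W})^{-1/2}$, and $\lambda_1>0$ is the spectral gap.

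The second step is to expand the solution given by the heat kernel representation of \Cref{prop:Gaussian-heat-flow}. Writing $K_\mathcal{W}(x,y,t)=\sum_k e^{-\lambda_k t}\phi_k(x)\phi_k(y)$ gives $\mathbf{U}(\cdot,t)=\sum_k e^{-\lambda_k t}\langle \mathbf{U}(\cdot,0),\phi_k\rangle\phi_k$. The $k=0$ term equals $\operatorname{Vol}_g(\mathcal{W})^{-1}\int_\mathcal{W}\mathbf{U}(x,0)\,d\operatorname{Vol}_g(x)$, which is exactly the claimed limit. By Parseval, the remainder satisfies $\|\mathbf{U}(\cdot,t)-\bar{\mathbf{U}}\|_{L^2}\le e^{-\lambda_1 t}\|\mathbf{U}(\cdot,0)-\bar{\mathbf{U}}\|_{L^2}\to0$. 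This proves the $L^2$ statement, with an exponential rate.

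The third step is the uniform convergence for a probability density, which I expect to be the main obstacle, since $L^2$ convergence does not control sup norms. The plan is to use the semigroup property together with boundedness of the heat kernel at a fixed positive time. Write $\mathbf{U}(\cdot,t)=e^{-\Delta}\mathbf{U}(\cdot,t-1)$ for $t\ge 1$. Then
\[
|\mathbf{U}(x,t)-\bar{\mathbf{U}}| = \Bigl|\int_\mathcal{W} K_\mathcal{W}(x,y,1)\bigl(\mathbf{U}(y,t-1)-\bar{\mathbf{U}}\bigr)\,d\operatorname{Vol}_g(y)\Bigr| \le \sup_x\|K_\mathcal{W}(x,\cdot,1)\|_{L^2}\,\|\mathbf{U}(\cdot,t-1)-\bar{\mathbf{U}}\|_{L^2}.
\]
Smoothness of the kernel on the compact set $\mathcal{W}\times\mathcal{W}\times\{1\}$ makes the first factor finite.

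If the initial datum is only an $L^1$ probability density or a measure, rather than an $L^2$ function, I first apply one more smoothing step. Since $\mathbf{U}(\cdot,1)=\int K_\mathcal{W}(\cdot,y,1)\,d\mathbf{U}_0(y)$ is bounded by $\sup K_\mathcal{W}(\cdot,\cdot,1)$, it lies in $L^2$, and the argument above applies from time $1$ onward. A probability density has total mass $1$, so $\bar{\mathbf{U}}=1/\operatorname{Vol}_g(\mathcal{W})$. This yields uniform convergence to the uniform Riemannian volume density, at rate $O(e^{-\lambda_1 t})$.
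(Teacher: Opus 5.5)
Your proof is correct and is the standard spectral argument: connectedness makes the zero eigenvalue of $\Delta_\mathcal{W}$ simple with constant eigenfunction, the spectral gap $\lambda_1>0$ gives exponential $L^2$ decay of $\mathbf{U}(\cdot,t)-\bar{\mathbf{U}}$, and smoothing by $K_\mathcal{W}(\cdot,\cdot,1)$ upgrades this to uniform convergence. The paper gives no argument beyond citing Berline--Getzler--Vergne, whose large-time analysis of the heat kernel rests on the same eigenfunction expansion, and your point that $L^2$ convergence does not control the sup norm is well taken, since the paper's ``Equivalently'' skips exactly the smoothing step you supply.
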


As such, under this additional constraint of entropy maximization, the optimal embedding distribution sits at the $t \to \infty$ end of the heat-flow family. That is, we want $f(\mathcal{M}) \subset \mathcal{N}$ to be `maximally spread out' while preserving the relevant geometry of $\mathcal{M}$. \rev{The} simple `geometry preservation' assumption already precludes the final embedding from being Gaussian even in the Euclidean setting $\mathcal{N} = \mathbb{R}^d$. Indeed, a continuous $f$ carries the compact $\mathcal{M}$ to a compact image, and thus driving towards the infinitely-supported Gaussian distributions necessarily causes us to learn an $f$ that `tears' $\mathcal{M}$. Thus, we wish to modify the LeJEPA paradigm such that \textbf{i.} we can preserve the structure of $\mathcal{M}$ and \textbf{ii.} without conditioning on a predetermined $\mathcal{N}$, while \textbf{iii.} still allowing for \rev{an} implementation that is stable and scalable\rev{ (\Cref{sec:cost-locality})}. 
To drive towards a true entropy-maximizing distribution would typically require either a static but globally formulated objective, conditioned on a fixed choice of $\mathcal{N}$ and violating \textbf{ii}, or a local-to-global heat flow style objective exploiting \Cref{prop:transport-heat-flow}, which would then require dynamic re-computations of the distribution matching objective during training, likely violating \textbf{iii}. As most reasonable notions of `geometry preserving’ for \textbf{i} would imply that $f(\mathcal{M})$ should be locally Euclidean, we propose a compromise \textit{locally entropy maximizing} objective, by driving towards the uniform distribution on a Euclidean ball $\mathbb{B}^d$ on neighborhood clusters.

\vspace{-0.3cm}
\section{LocaLeReg}
\vspace{-0.3cm}
\label{sec:LocaLeReg}
We propose \textit{locally latent Euclidean Regularization} (LocaLeReg), a novel regularizer that realizes the local target of \Cref{sec:theory} in practice. To keep the implementation stable and scalable, we obtain it through minor modifications to the sketched isotropic Gaussian regularization (SIGReg) of LeJEPA.

A batch consists of $N$ samples, each presented in $V$ augmented views $x_i^{(v)}$. As is standard in LeJEPA \citep{balestriero2025lejepa}, an encoder is followed by a projector, and it is on the embeddings $z_i^{(v)}\in\mathbb{R}^{d}$ it produces that the pretraining loss is computed, while downstream probes read the encoder outputs.  In the notation of \Cref{sec:theory}, the map $f$ is this composition and $\mathcal{N}$ the space in which the embeddings live. We write $\bar z_i=\tfrac1V\sum_{v=1}^V z_i^{(v)}$ for the mean of a sample's embeddings across views. Recall that LeJEPA's objective is given as $ (1- \lambda) \mathcal{L}_{\rm pred} + \lambda \mathcal{L}_ {\rm SIGReg} $, where $\lambda\in(0,1)$ trades the two terms off and $ \mathcal{L}_{\rm pred}$ is the usual JEPA prediction term,
\begin{equation}
    \mathcal L_{\rm pred}=\frac{1}{NVd}\sum_{i=1}^N\sum_{v=1}^V
 \bigl\|z_i^{(v)}-\bar z_i\bigr\|_2^2.
  \label{eq:lejepa}
\end{equation}
LeJEPA's SIGReg \rev{uses} a hyperspherical version of the Cramér–Wold theorem \citep{CramerWold1936}, which implies that, for $X, Y$, $\mathbb{R}^d$-valued random vectors, 
\begin{equation}
\langle{u}, X\rangle \stackrel{d}{=}\langle{u}, Y\rangle, \forall {u} \in \mathbb{S}^{d-1} \Longleftrightarrow X \stackrel{d}{=} Y 
      \label{eq:cramer-wold}.
\end{equation}
where $ \stackrel{d}{=}$ denotes equality in distribution. This allows us to bring to bear the univariate Epps-Pulley test \citep{EppsPulley1983}, which measures $L^2$ discrepancy between characteristic functions under a weight $w$, which we take throughout to be the Gaussian window $w(t)=e^{-t^2/2}$: 
\begin{equation}
    \operatorname{EP}(X, Y)=N \int_{\mathbb{R}}\left|\hat{\varphi}_X(t)-\varphi_Y(t)\right|^2 w(t) d t
    \label{eq:epps-pulley}.
\end{equation}
Here, $X$ is now a scalar random variable with empirical characteristic function $\hat{\varphi}_X(t)=\frac{1}{N} \sum_{j=1}^N e^{i t x_j}$ while $Y$ is a fixed reference distribution with characteristic function $\varphi_Y(t)=\mathbb{E}\left[e^{i t Y}\right]$. SIGReg thus consists of drawing directions ${u} \sim \operatorname{Unif}\left(\mathbb{S}^{d-1}\right)$, and for each view $v$, matching the empirical distribution of the  
$\bigl\{{u}^{\top} z_{i}^{(v)}\bigr\}_{i=1}^N$ against the standard normal $Y = \mathcal{N}(0,1)$ via Epps-Pulley. 

In the LocaLeJEPA pipeline\rev{ (\Cref{fig:architecture}, with the full algorithm in \Cref{sec:locale-algorithm})}, we begin by constructing local neighborhoods of our embeddings through cosine-similarity $K$-means on the view means $\bar z_i$, held fixed so that no gradient passes through the assignment, giving a clustering $\{C_q\}_{q=1}^{K}$ that is shared across views. Neighborhoods falling below a fixed minimum size are discarded as too sparse to test, leaving the valid set $\mathcal{Q}$ of retained neighborhoods. If it is empty, the objective reverts to the global test on the whole batch. Within a cluster, we consider the residuals of our embeddings
\vspace{-0.1cm}
\begin{equation}
 r_i^{(v)}=z_i^{(v)}-\mu_q^{(v)},
 \qquad \mu_q^{(v)}=\frac{1}{|C_q|}\sum_{j\in C_q}z_j^{(v)}.
 \label{eq:residuals}
\end{equation}

\rev{Centering is motivated by entropy maximization. Differential entropy is invariant under translation, so maximizing it within a neighborhood constrains the shape of that neighborhood's distribution but not its location. Matching the residuals therefore leaves each neighborhood free to sit anywhere in the embedding space, and the neighborhood means are not pulled to a common origin.} We then drive each cluster towards the uniform distribution on a Euclidean ball $\mathbb B^d$. Note that, like the isotropic Gaussian distribution or the uniform distribution on the sphere, this target distribution is radially symmetric and thus amenable to being precomputed for use in the Epps-Pulley test. We may explicitly derive (see \Cref{sec:derivations-characteristic-functions} for details) that the characteristic function of the uniform distribution $Y$ on the ball $\mathbb B^d_R$ of radius $R$, which we set to $R=\sqrt{d+2}$ so that the target has identity covariance and is \revred{normalized} exactly as SIGReg's isotropic Gaussian\rev{ (\Cref{sec:ball-radius})}, is given by \begin{equation}
 \varphi_{Y}(t)=\mathbb E[e^{\mathrm it\langle Y,u\rangle}]
 =2^{d/2}\Gamma\!\left(\frac d2+1\right)
 \frac{J_{d/2}(R|t|)}{(R|t|)^{d/2}},
 \qquad \varphi_{Y}(0)=1,
 \label{eq:ball-cf}
\end{equation} where $J_\nu$ is the Bessel function of the first kind. In this case, our empirical characteristic function is 

\begin{equation}
\widehat\varphi_{q,v,u}(t)=|C_q|^{-1}\sum_{i\in C_q}e^{\mathrm it\langle r_i^{(v)},u\rangle}
\label{eq:emp-cf}.
\end{equation}
\vspace{-0.1cm}


\begin{tcolorbox}[
  enhanced,
  breakable,
  colback=mist,
  colframe=ink,
  boxrule=2.2pt,
  arc=2mm,
  left=10pt,
  right=10pt,
  top=10pt,
  bottom=8pt,
  before skip=10pt,
  after skip=10pt,
  title={\textbf{LocaLeJEPA Objective}},
  fonttitle=\small,
  coltitle=white,
  attach boxed title to top left={
    xshift=0pt,
    yshift=-2.5mm
  },
  boxed title style={
    colback=ink,
    colframe=ink,
    arc=1.5mm,
    boxrule=0pt,
    left=6pt,
    right=6pt,
    top=2pt,
    bottom=2pt
  }
]

LocaLeReg aggregates the local discrepancies over the valid neighborhoods
$\mathcal Q$ and the $V$ augmented views:
\begin{equation}
\mathcal{L}_{\mathrm{LocaLeReg}}
=
\frac{1}{|\mathcal Q|V}
\sum_{q\in\mathcal Q}
\sum_{v=1}^{V}
D_{q,v},
\label{eq:objective}
\end{equation}
where:
\begin{equation}
D_{q,v}
=
|C_q|\,
\mathbb{E}_{u\sim\operatorname{Unif}(\mathbb S^{d-1})}
\int_{\mathbb R}
\left|
\widehat{\varphi}_{q,v,u}(t)
-
\varphi_Y(t)
\right|^2
e^{-t^2/2}\,dt.
\label{eq:local-discrepancy}
\end{equation}


Combining local distribution matching with the prediction objective gives:
\begin{equation}
\hspace{-1.2cm}
\boxed{
\mathcal{L}_{\mathrm{LocaLeJEPA}}
=
(1-\lambda)\mathcal{L}_{\mathrm{pred}}
+
\lambda\mathcal{L}_{\mathrm{LocaLeReg}}
}
\label{eq:LocaLEJEPA-loss}
\end{equation}

\end{tcolorbox}
The number of neighborhoods $K$ controls the discretization of the local regime of \Cref{sec:theory}, as it sets the scale at which the embedded manifold is treated as
Euclidean, and at $K=1$ the objective becomes global, differing from SIGReg in its target distribution and in that centering the residuals leaves the location free. \rev{Each neighborhood's discrepancy is weighted by its size $|C_q|$ (\Cref{eq:local-discrepancy}) and the discrepancies are averaged over neighborhoods (\Cref{eq:objective}). The regularizer therefore scales with the number of samples per neighborhood, about $N/K$, and increasing $K$ at fixed $\lambda$ also lowers the effective regularization strength $\lambda N/K$ (\Cref{sec:effective-strength}).} The construction does require each neighborhood to retain enough residuals for its statistic to be informative, since centering leaves only $|C_q|-1$ free directions and the neighborhoods are formed within the batch.
\begin{figure}[t]
  \centering
  \includegraphics[width=\textwidth]{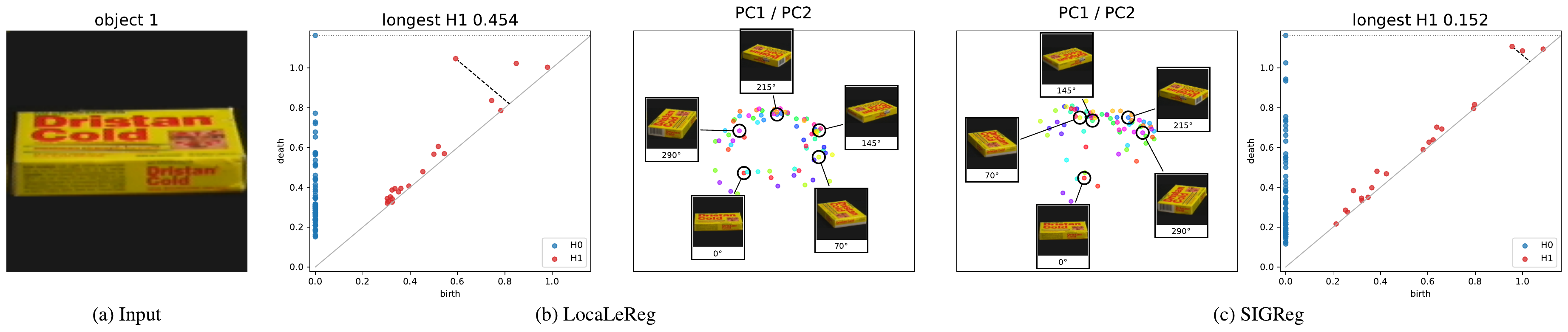}
  \caption{\emph{LocaLeJEPA embeddings better preserve intrinsic topology.} The COIL-100 consists of images of objects being rotated. As such, these images should trace out the circle, $\mathbb{S}^1$, in latent space. As measured by the persistent homology class $H_1$ and PCA visualization, the embeddings more naturally lie on $\mathbb{S}^1$ when using LocaLeReg as opposed to LeJEPA's SIGReg.
  Additional experimental details may be found in \Cref{app:coil_experiment}\rev{ and \Cref{fig:coil-grid}}.
  }
  \label{fig:coil-grid-main}
\end{figure}
\vspace{-0.3cm}

\begin{figure}[h]
\centering
\captionsetup{font=small}
\captionsetup[subfigure]{font=footnotesize}
\setlength{\tabcolsep}{0pt}

\begin{subfigure}[t]{0.235\textwidth}
  \centering
  \includegraphics[width=\linewidth]{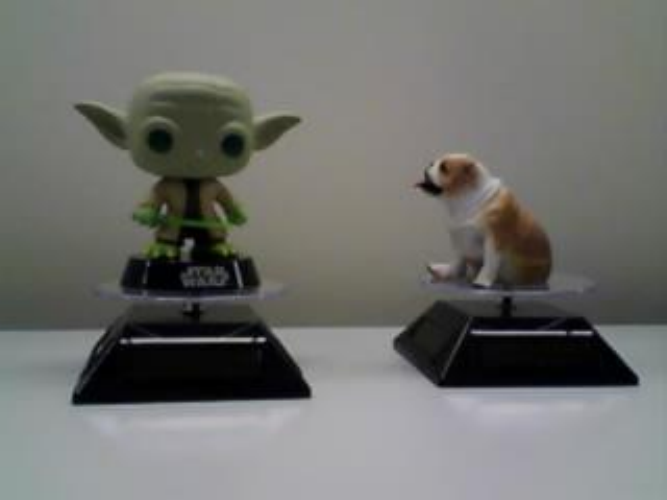}
  \caption{$\theta_0{=}129^\circ$}
  \label{fig:yoda-torus-main-f0a}
\end{subfigure}\hfill
\begin{subfigure}[t]{0.235\textwidth}
  \centering
  \includegraphics[width=\linewidth]{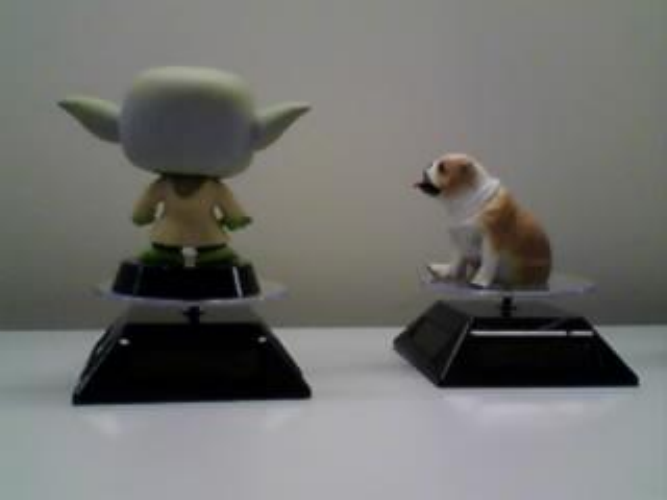}
  \caption{$\theta_0{=}327^\circ$}
  \label{fig:yoda-torus-main-f0b}
\end{subfigure}\hfill
\begin{subfigure}[t]{0.235\textwidth}
  \centering
  \includegraphics[width=\linewidth]{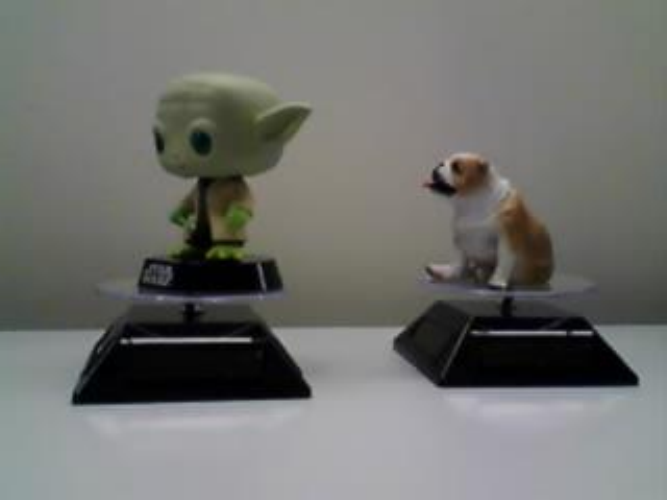}
  \caption{$\theta_1{=}169^\circ$}
  \label{fig:yoda-torus-main-f1a}
\end{subfigure}\hfill
\begin{subfigure}[t]{0.235\textwidth}
  \centering
  \includegraphics[width=\linewidth]{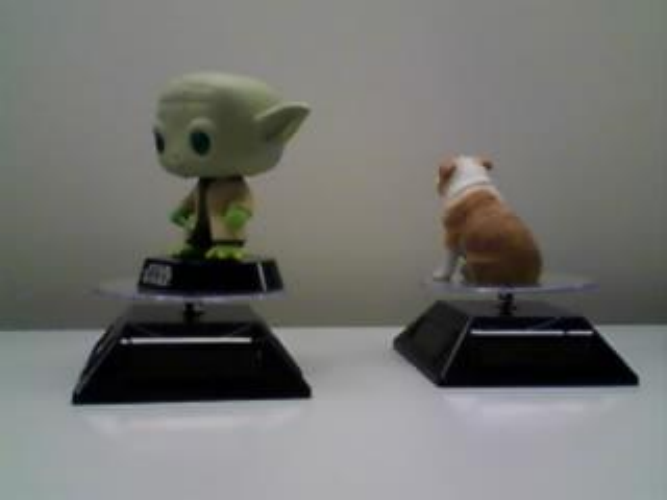}
  \caption{$\theta_1{=}333^\circ$}
  \label{fig:yoda-torus-main-f1b}
\end{subfigure}

\vspace{0.6em}

\begin{subfigure}[t]{0.196\textwidth}
  \centering
  \includegraphics[width=\linewidth]{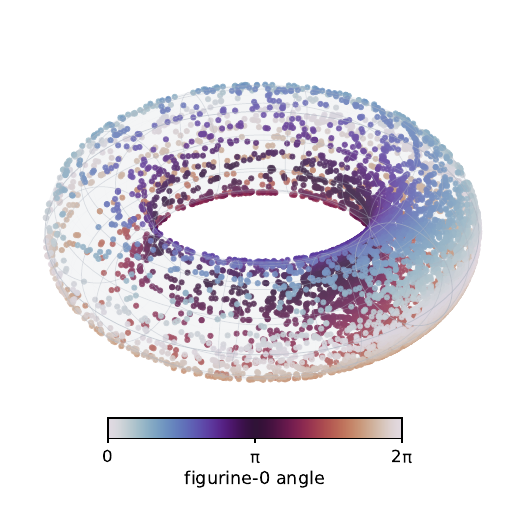}
  \caption{Ground-truth}
  \label{fig:yoda-torus-main-a}
\end{subfigure}\hfill
\begin{subfigure}[t]{0.196\textwidth}
  \centering
  \includegraphics[width=\linewidth]{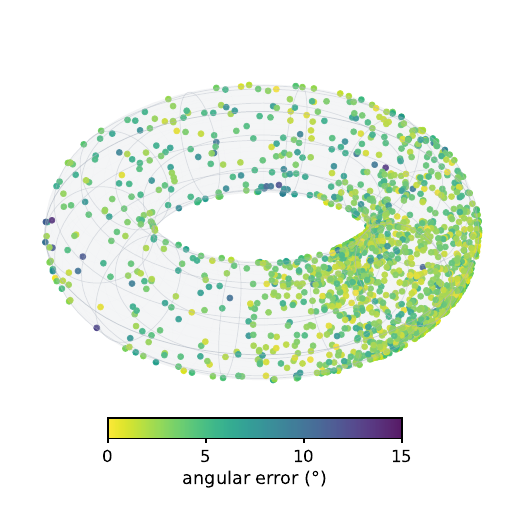}
  \caption{LeJEPA}
  \label{fig:yoda-torus-main-b}
\end{subfigure}\hfill
\begin{subfigure}[t]{0.196\textwidth}
  \centering
  \includegraphics[width=\linewidth]{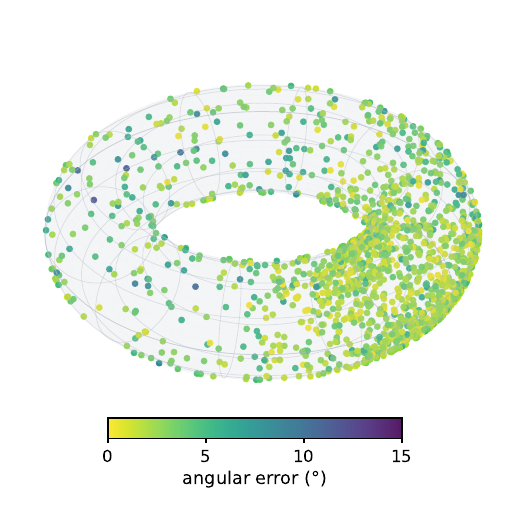}
  \caption{LocaLeJEPA}
  \label{fig:yoda-torus-main-c}
\end{subfigure}\hfill
\begin{subfigure}[t]{0.196\textwidth}
  \centering
  \includegraphics[width=\linewidth]{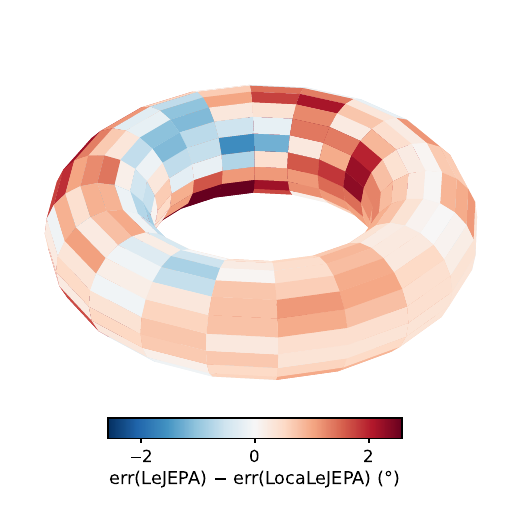}
  \caption{Full pool}
  \label{fig:yoda-torus-main-d}
\end{subfigure}\hfill
\begin{subfigure}[t]{0.196\textwidth}
  \centering
  \includegraphics[width=\linewidth]{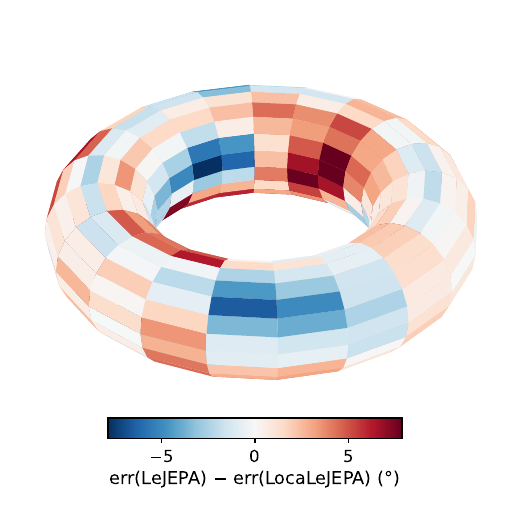}
  \caption{100 frames}
  \label{fig:yoda-torus-main-e}
\end{subfigure}

\caption{\emph{Test-time Topology Preservation.} We perform a controlled experiment to test whether LocaLeJEPA’s topology-preservation extends to datasets unseen during training.  After pretraining on Imagenette \citep{Howard_Imagenette_2019}, we expose it to a dataset consisting of \revred{two independently} rotating figurines \emph{(Top)}, which should thus lie on a latent torus $\mathbb{T}^2$. We consider performance by testing angle prediction of each of the figurines. \emph{(Bottom)}\rev{(e)} Ground-truth torus defined by the two rotation angles. \rev{(f,g)} Reconstructions obtained from frozen LeJEPA (MAE $3.28^\circ$) and LocaLeJEPA \rev{with $\lambda{=}0.1$ and $K{=}2$} ($2.68^\circ$) features, colored by angular error\rev{, on one held-out probe split}. \rev{(h,i)} Regional difference $\mathrm{err}(\mathrm{SIGReg})-\mathrm{err}(\mathrm{LocaLeJEPA})$ using the full probe pool ($+0.60^\circ$, $61\%$ wins) and 100 training frames ($+1.31^\circ$, $57\%$ wins), with red \revred{favoring} LocaLeJEPA. Full grids are in \Cref{app:yoda-torus}\rev{ (\Cref{fig:yoda-torus,fig:yoda-torus-compare,fig:yoda-torus-adv})}.}
\label{fig:yoda-torus-main}
\end{figure}

\vspace{-0.1cm}
\section{Experiments}
\vspace{-0.3cm}
\label{sec:experiments}

LocaLeJEPA is designed to better preserve the intrinsic manifold structure of data, while also being adaptable to exotic embedding geometries. While LocaLeJEPA’s embeddings can be qualitatively shown to better preserve topology (see \Cref{fig:coil-grid-main} and \Cref{fig:yoda-torus-main}\revred{, with the angle-recovery errors in \Cref{tab:yoda-budget-main}}) of data while also adapting to the geometry of the embedding space (see \Cref{fig:pca-teaser} and \Cref{fig:cluster-visualization}), our main experiments focus on the performance of the  LocaLeJEPA framework at scale. In particular, we perform pretraining on ImageNet-1K using 
Euclidean, hyperbolic, and equivariant backbones under the LocaLeJEPA objective. We then evaluate performance through frozen-feature classification on ImageNet-1K in \Cref{subsec:imagenet-classification}, using both linear and \revred{$k$-NN} classification, where it outperforms LeJEPA in all regimes tested. We then consider transferability of the resulting representations across seven downstream datasets in \Cref{subsec:imagenet-transfer-main}, where\revred{, with $K{=}4$,} it again outperforms LeJEPA in all regimes tested. Complete experimental details, including training protocols, checkpoint-selection procedures, evaluation settings, and additional experiments\rev{,} are provided in \Cref{sec:additional-experiments}, with the ImageNet-specific analysis reported in \Cref{app:imagenet-full}\rev{. Ablations over $K$, $\lambda$, and batch size are in \Cref{sec:ablations}}.
\vspace{-0.2cm}

\begin{figure}[t]
\centering

\begin{minipage}[c]{0.5\textwidth}
    \centering
    \includegraphics[width=\linewidth]{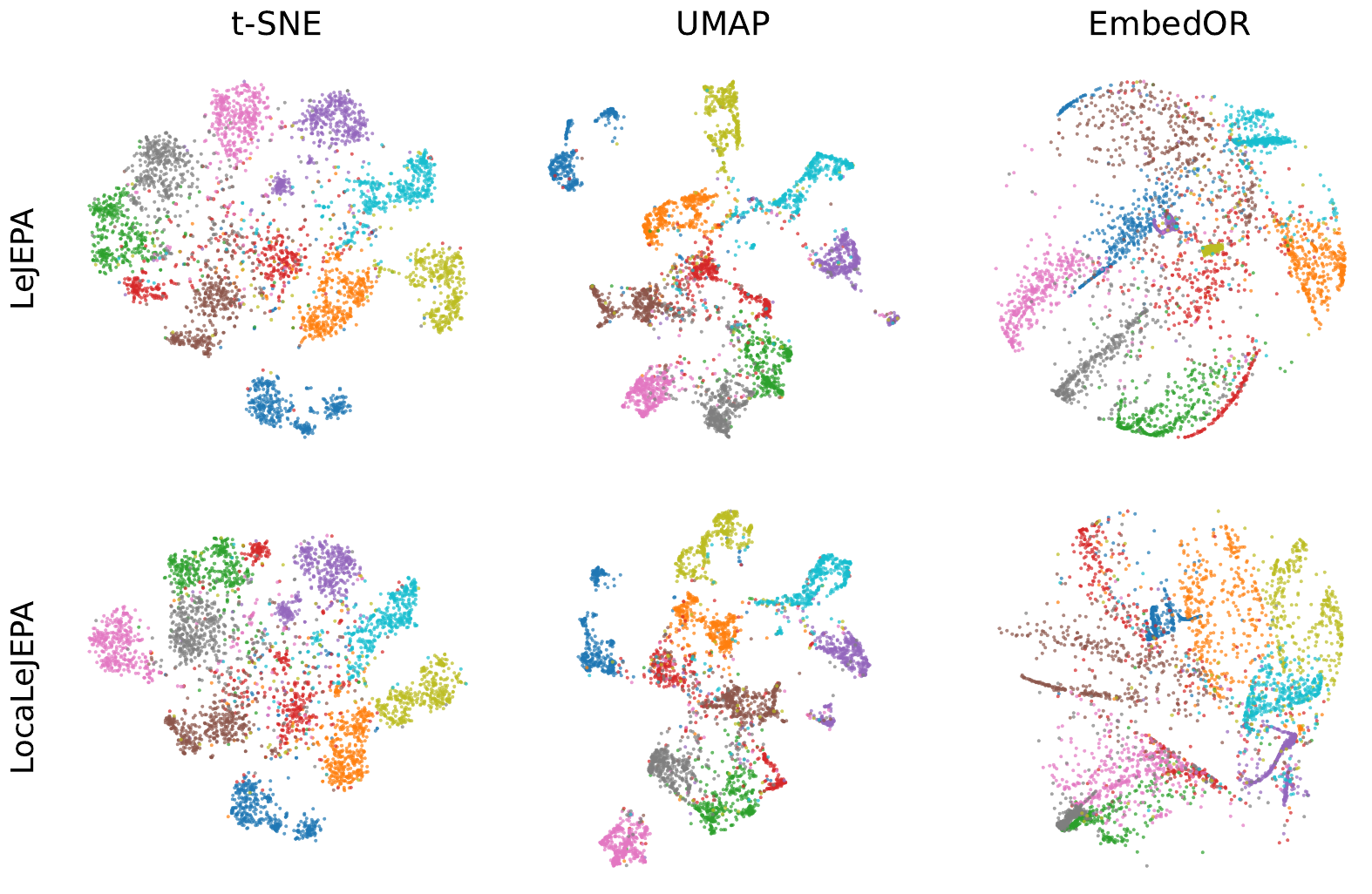}
\end{minipage}\hfill
\begin{minipage}[c]{0.5\textwidth}
    \captionof{figure}{\emph{LocaLeJEPA clusterings are curvature-aware.}
    Two-dimensional projections of frozen hyperbolic ViT-S/8 representations
    trained with LeJEPA (top) and LocaLeJEPA
    (\rev{bottom,} $\lambda{=}0.1$, $K{=}4$). Columns show t-SNE~\citep{vandermaaten08a}, UMAP~\citep{mcinnes2020umapuniformmanifoldapproximation}, and
    EmbedOR~\citep{saidi2026provable}\rev{. Colors} indicate ground-truth classes.
    While Euclidean visualizers show similar layouts, curvature-aware EmbedOR
    \rev{shows a very} different cluster structure.}
    \label{fig:cluster-visualization}
\end{minipage}

\end{figure}
\vspace{-0.1cm}

\subsection{ImageNet-1K Frozen-Feature Classification}
\label{subsec:imagenet-classification}
\begin{wrapfigure}{R}{0.60\columnwidth}
\vspace{-1.0em}

\refstepcounter{table}
\label{tab:yoda-budget-main}

{\footnotesize
{Table~\thetable: \emph{LocaLeJEPA better preserves toroidal structure across probe-label budgets.} Continuing with the setting of the toroidal angle prediction task of \Cref{fig:yoda-torus-main}, we consider performance against a range of global distribution matching objectives: \cite{balestriero2025lejepa}'s SIGReg,  \cite{nicollier2026spherejepa}'s SUSReg, \cite{zhao2026subjepa}'s SubReg, and \cite{kuang2026rectifiedlpjepajointembeddingpredictive}'s RDMReg\rev{ (\Cref{app:objectives})}. \rev{Entries are the wrapped angular MAE ($^\circ$, lower is better), averaged over the two rotation angles, for each number of labeled probe-training frames. The $\pm$ values are standard deviations over ten probe splits. All encoders are pretrained on Imagenette for 800 epochs. LocaLeJEPA uses $\lambda{=}0.1$, and Dyn-$K$ grows $K$ from 1 to 8.} LocaLeJEPA features prove superior for this task across label budgets. Additional details are provided in \Cref{app:yoda}\rev{, and the complete results in \Cref{app:torus-complete-table,app:yoda-lc}}.}
\par}
\vspace{0.3em}
\scriptsize
\setlength{\tabcolsep}{2.2pt}
\renewcommand{\arraystretch}{1.05}
\resizebox{\linewidth}{!}{%
\begin{tabular}{lccccc}
\toprule
Method & 10 & 50 & 100 & 500 & Full \\
\midrule
\rev{Spherical ViT} SUSReg
& $44.6{\pm}4.3$
& $20.2{\pm}2.0$
& $11.0{\pm}0.7$
& $4.91{\pm}.09$
& $3.64{\pm}.04$ \\

ViT SIGReg
& $39.2{\pm}3.6$
& $17.8{\pm}1.1$
& $11.0{\pm}0.4$
& $4.70{\pm}.09$
& $3.30{\pm}.05$ \\

ViT SubReg
& $37.9{\pm}4.4$
& $18.0{\pm}1.7$
& $11.2{\pm}0.5$
& $4.70{\pm}.10$
& $3.40{\pm}.04$ \\

ViT RDMReg
& $39.8{\pm}7.0$
& $19.8{\pm}2.0$
& $10.5{\pm}0.6$
& $5.47{\pm}.08$
& $3.79{\pm}.05$ \\
\midrule
LocaLeJEPA ($K{=}2$)
& $\mathbf{36.0{\pm}3.8}$
& $\mathbf{15.5{\pm}1.2}$
& $\mathbf{9.0{\pm}0.3}$
& $\mathbf{3.80{\pm}.05}$
& $\mathbf{2.67{\pm}.05}$ \\

LocaLeJEPA (Dyn-$K$)
& $37.5{\pm}4.3$
& $17.1{\pm}1.1$
& $10.9{\pm}0.5$
& $4.10{\pm}.10$
& $2.89{\pm}.04$ \\
\bottomrule
\end{tabular}%
}

\vspace{-0.6em}
\end{wrapfigure} 
\textbf{Task.}
We employ ImageNet-1K \revred{\citep{deng2009imagenet}} to test whether the advantages of LocaLeJEPA persist at scale and remain consistent across distinct representation geometries. Following \revred{\citet{balestriero2025lejepa}}, we pretrain three encoder families for 100 epochs: a standard ViT-S/8 with Euclidean outputs, a hyperbolic ViT-S/8 whose representations lie in a Poincar\'e ball of curvature $-0.1$ \revred{\citep{ermolov2022hyperbolic}}, and a $D_4$-equivariant ResNet-50 retaining the regular representation at its output \revred{\citep{klee2023comparison}}. All \rev{three} use four $128\times128$ views, a $2048$-$2048$-$16$ projector, and \rev{256 images per step in four shards of 64, within which neighborhoods are formed. We also pretrain a ViT-L/14 at $238\times238$ pixels, with LeJEPA's published ViT-L settings for both methods (\Cref{app:imagenet-pretrain}). We train LeJEPA both with its published settings and with the optimization settings of LocaLeJEPA, denoted \emph{LeJEPA (our settings)}, so that the methods can also be compared at matched settings. Besides $K{=}4$, we report $K{=}1$, a single neighborhood, and $K{=}1$ at a quarter of the weight ($\lambda{=}0.025$\revred{, \Cref{tab:in1k-indomain-full}}). The latter matches the effective regularization strength of $K{=}4$ at $\lambda{=}0.1$ (\Cref{sec:LocaLeReg}).} After pretraining, the encoder is frozen and evaluated \rev{at its final-epoch checkpoint (\Cref{app:shared-protocol})} with cosine-weighted $k$-NN ($k{=}20$) and linear probing on its 512-dimensional representation. \rev{ViT-S/8 results, except RDMReg, are means over three pretraining seeds.} In the comparison, we also include \emph{LocaLeJEPA-SIGReg}, which keeps the same local partition but replaces centered matching to $\operatorname{Unif}\bigl( \mathbb B^{\rev{d}}_{\sqrt{d+2}}\bigr)$ with Gaussian matching on uncentered clusters. This tests whether locality alone explains the gains, without disentangling centering from reference-distribution choice. Complete optimization details, $k$-NN sweeps, top-5 results, and ablations over $K$, $\lambda$, and the local reference distribution are reported in \Cref{app:imagenet-full}.

\begin{table}[t]
\revstart
\centering
\caption{\emph{ImageNet-1K Frozen-feature Evaluation across Representation Geometries.} ImageNet classification and seven-dataset transfer for the Euclidean, hyperbolic and $D_4$-equivariant encoders at 128 px, and ViT-L/14 at 238 px. Final-epoch checkpoints and one linear-probe seed, with the learning rate selected on a held-out slice of the training set. ViT-S/8 rows other than RDMReg are mean $\pm$ SD over three pretraining seeds, and all other rows are single runs. \emph{LeJEPA} uses LeJEPA's published settings, and \emph{LeJEPA (our settings)} uses the optimization settings of LocaLeJEPA. Full sweeps in \Cref{tab:in1k-indomain-full}. \textbf{Bold} denotes the best result within each encoder family and metric.}
\label{tab:in1k-main}
\tiny
\setlength{\tabcolsep}{4pt}
\resizebox{\linewidth}{!}{\begin{tabular}{l cc c c}
\toprule
 & \multicolumn{2}{c}{ImageNet-1K linear (\%)} & $k$-NN (\%) & Transfer (\%) \\
\cmidrule(lr){2-3}\cmidrule(lr){4-4}\cmidrule(lr){5-5}
Method & top-1($\uparrow$) & top-5($\uparrow$) & $k{=}20$($\uparrow$) & 7-dataset avg($\uparrow$) \\
\midrule
\multicolumn{5}{l}{\emph{Plain ViT-S/8 (Euclidean, 21.7M)}} \\ 
Rectified LpJEPA (RDMReg) & $25.7$ & $48.5$ & $12.6$ & $45.0$ \\ 
LeJEPA & $44.6 \pm 0.23$ & $70.8 \pm 0.17$ & $32.8 \pm 0.33$ & $60.5 \pm 0.52$ \\ 
LeJEPA (our settings) & $51.6 \pm 0.28$ & $77.3 \pm 0.36$ & $38.9 \pm 0.09$ & $64.3 \pm 0.13$ \\ 
LocaLeJEPA ($\lambda{=}0.1$, $K{=}1$) & $59.7 \pm 0.22$ & $83.5 \pm 0.09$ & $45.4 \pm 0.22$ & $70.3 \pm 0.04$ \\ 
LocaLeJEPA ($\lambda{=}0.1$, $K{=}4$) & $\mathbf{62.1 \pm 0.27}$ & $\mathbf{85.1 \pm 0.16}$ & $\mathbf{50.1 \pm 0.12}$ & $\mathbf{72.3 \pm 0.29}$ \\ 
\midrule
\multicolumn{5}{l}{\emph{Hyperbolic ViT-S/8 (Poincar\'e ball, 21.7M)}} \\ 
Rectified LpJEPA (RDMReg) & $26.0$ & $48.7$ & $13.1$ & $44.6$ \\ 
LeJEPA & $44.0$ & $69.8$ & $32.6$ & $60.3$ \\ 
LeJEPA (our settings) & $51.6$ & $77.5$ & $39.4$ & $64.1$ \\ 
LocaLeJEPA ($\lambda{=}0.1$, $K{=}1$) & $59.5$ & $83.4$ & $45.7$ & $70.1$ \\ 
LocaLeJEPA ($\lambda{=}0.1$, $K{=}4$) & $61.9$ & $84.9$ & $50.6$ & $72.1$ \\ 
LocaLeJEPA ($\lambda{=}0.1$, $K{=}1{\to}10$) & $\mathbf{62.2}$ & $\mathbf{85.1}$ & $\mathbf{50.9}$ & $\mathbf{72.6}$ \\ 
LocaLeJEPA-SIGReg ($\lambda{=}0.1$, $K{=}4$) & $48.2$ & $73.8$ & $35.4$ & $62.8$ \\ 
\midrule
\multicolumn{5}{l}{\emph{$D_4$-equivariant ResNet-50 (24.4M)}} \\ 
Rectified LpJEPA (RDMReg) & $31.3$ & $56.9$ & $23.4$ & $48.1$ \\ 
LeJEPA & $43.4$ & $70.7$ & $34.5$ & $54.7$ \\ 
LeJEPA (our settings) & $48.8$ & $75.7$ & $39.7$ & $58.8$ \\ 
LocaLeJEPA ($\lambda{=}0.1$, $K{=}1$) & $52.4$ & $79.0$ & $43.9$ & $60.3$ \\ 
LocaLeJEPA ($\lambda{=}0.1$, $K{=}4$) & $\mathbf{55.2}$ & $80.6$ & $\mathbf{47.5}$ & $\mathbf{63.0}$ \\ 
LocaLeJEPA ($\lambda{=}0.1$, $K{=}1{\to}10$) & $55.1$ & $\mathbf{80.8}$ & $47.4$ & $62.5$ \\ 
LocaLeJEPA-SIGReg ($\lambda{=}0.1$, $K{=}4$) & $43.2$ & $69.5$ & $34.8$ & $52.4$ \\ 
\midrule
\multicolumn{5}{l}{\emph{Plain ViT-L/14 (Euclidean, 303.2M, LeJEPA's ViT-L settings)}} \\ 
LeJEPA & $54.7$ & $79.9$ & $46.0$ & $61.3$ \\ 
LocaLeJEPA ($\lambda{=}0.05$, $K{=}1$) & $54.8$ & $80.1$ & $46.5$ & $60.1$ \\ 
LocaLeJEPA ($\lambda{=}0.05$, $K{=}4$) & $\mathbf{61.0}$ & $\mathbf{84.2}$ & $\mathbf{47.4}$ & $\mathbf{65.5}$ \\ 
\bottomrule
\end{tabular}}
\end{table}

\noindent\textbf{Results.} We consider LocaLeJEPA with both fixed numbers of local clusters, as well as a dynamic approach, $K{=}1{\to}10$, where we increase the number of clusters every ten training epochs, \rev{following} our heat-flow heuristic. LocaLeJEPA \rev{with $K{=}4$} improves frozen-feature recognition across all three representation geometries (\Cref{tab:in1k-main}), with gains of \rev{$11.9$--$17.9$} points in linear top-1 accuracy and \rev{$13.0$--$17.9$} points in $k$-NN over LeJEPA\rev{. Over LeJEPA (our settings), the gains are $6.4$--$10.5$ and $7.8$--$11.2$ points, respectively. The dynamic schedule ($K{=}1{\to}10$), run on the hyperbolic and equivariant encoders, adds $10.7$ and $6.2$ points in linear top-1 accuracy and $11.5$ and $7.7$ in $k$-NN over LeJEPA (our settings), within $0.4$ points of $K{=}4$. On ViT-L/14, where both methods use LeJEPA's published ViT-L settings, LocaLeJEPA adds $6.3$ points in linear top-1 accuracy and $1.3$ in $k$-NN.} The \rev{$k$-NN gain over LeJEPA (our settings), at least $7.8$ points on every $128$-pixel encoder,} is particularly informative because this readout operates directly on the frozen feature metric, and is consistent with the stronger local semantic organization quantified in \Cref{app:knn-neighborhood}\rev{ and \Cref{fig:knn-retrieval}}. 

\begin{table}[t]
\revstart
\centering
\caption{\emph{Linear Transfer across Seven Downstream Datasets.} Frozen-feature top-1 accuracy (\%), where \emph{Avg} is the unweighted mean. One linear-probe seed. ViT-S/8 rows are mean $\pm$ SD over three pretraining seeds, and all other rows are single runs. \textbf{Bold} denotes the best mean within each encoder family and column.}
\label{tab:in1k-transfer-main}
\small
\setlength{\tabcolsep}{4pt}
\resizebox{\linewidth}{!}{\begin{tabular}{l ccccccc c}
\toprule
Method & DTD & Aircraft & CIFAR-10 & CIFAR-100 & Flowers & Food & Pets & Avg \\
\midrule
\multicolumn{9}{l}{\emph{Plain ViT-S/8 (Euclidean, 21.7M)}} \\ 
LeJEPA & $60.0{\pm}0.6$ & $28.4{\pm}0.5$ & $86.1{\pm}0.5$ & $62.6{\pm}1.3$ & $75.4{\pm}0.4$ & $51.0{\pm}0.8$ & $59.8{\pm}1.1$ & $60.5{\pm}0.5$ \\ 
LeJEPA (our settings) & $64.1{\pm}0.6$ & $30.3{\pm}0.9$ & $88.7{\pm}0.2$ & $66.0{\pm}0.2$ & $78.9{\pm}0.2$ & $56.4{\pm}0.3$ & $65.9{\pm}0.5$ & $64.3{\pm}0.1$ \\ 
LocaLeJEPA ($\lambda{=}0.1$, $K{=}1$) & $69.3{\pm}0.3$ & $38.4{\pm}0.4$ & $91.7{\pm}0.3$ & $71.5{\pm}0.9$ & $83.7{\pm}0.2$ & $63.5{\pm}0.1$ & $73.6{\pm}0.8$ & $70.3{\pm}0.0$ \\ 
LocaLeJEPA ($\lambda{=}0.1$, $K{=}4$) & $\mathbf{69.4{\pm}1.5}$ & $\mathbf{42.1{\pm}0.6}$ & $\mathbf{92.2{\pm}0.3}$ & $\mathbf{73.6{\pm}0.4}$ & $\mathbf{85.3{\pm}0.3}$ & $\mathbf{66.0{\pm}0.1}$ & $\mathbf{77.3{\pm}1.8}$ & $\mathbf{72.3{\pm}0.3}$ \\ 
\midrule
\multicolumn{9}{l}{\emph{Hyperbolic ViT-S/8 (Poincar\'e ball, 21.7M)}} \\ 
LeJEPA & $61.5$ & $28.2$ & $85.9$ & $62.6$ & $75.2$ & $50.7$ & $58.0$ & $60.3$ \\ 
LeJEPA (our settings) & $60.8$ & $31.4$ & $88.6$ & $66.1$ & $78.7$ & $56.7$ & $66.7$ & $64.1$ \\ 
LocaLeJEPA ($\lambda{=}0.1$, $K{=}1$) & $68.8$ & $37.5$ & $92.0$ & $71.4$ & $83.7$ & $63.8$ & $73.3$ & $70.1$ \\ 
LocaLeJEPA ($\lambda{=}0.1$, $K{=}4$) & $69.1$ & $41.5$ & $92.3$ & $72.7$ & $85.3$ & $65.3$ & $\mathbf{78.6}$ & $72.1$ \\ 
LocaLeJEPA ($\lambda{=}0.1$, $K{=}1{\to}10$) & $\mathbf{70.1}$ & $\mathbf{42.5}$ & $\mathbf{92.6}$ & $\mathbf{73.3}$ & $\mathbf{85.6}$ & $\mathbf{65.9}$ & $78.5$ & $\mathbf{72.6}$ \\ 
LocaLeJEPA-SIGReg ($\lambda{=}0.1$, $K{=}4$) & $57.3$ & $30.8$ & $87.9$ & $65.1$ & $77.8$ & $55.9$ & $65.0$ & $62.8$ \\ 
\midrule
\multicolumn{9}{l}{\emph{$D_4$-equivariant ResNet-50 (24.4M)}} \\ 
LeJEPA & $55.4$ & $23.7$ & $86.2$ & $60.6$ & $63.9$ & $40.6$ & $52.2$ & $54.7$ \\ 
LeJEPA (our settings) & $59.5$ & $25.7$ & $88.9$ & $63.9$ & $67.4$ & $45.5$ & $61.1$ & $58.8$ \\ 
LocaLeJEPA ($\lambda{=}0.1$, $K{=}1$) & $60.2$ & $25.7$ & $90.2$ & $65.9$ & $67.8$ & $47.5$ & $65.0$ & $60.3$ \\ 
LocaLeJEPA ($\lambda{=}0.1$, $K{=}4$) & $61.3$ & $\mathbf{29.3}$ & $\mathbf{91.2}$ & $\mathbf{69.1}$ & $\mathbf{71.5}$ & $\mathbf{49.5}$ & $69.1$ & $\mathbf{63.0}$ \\ 
LocaLeJEPA ($\lambda{=}0.1$, $K{=}1{\to}10$) & $\mathbf{61.5}$ & $27.4$ & $90.9$ & $67.7$ & $70.4$ & $49.4$ & $\mathbf{69.9}$ & $62.5$ \\ 
LocaLeJEPA-SIGReg ($\lambda{=}0.1$, $K{=}4$) & $48.4$ & $25.5$ & $88.8$ & $60.6$ & $52.5$ & $25.9$ & $65.3$ & $52.4$ \\ 
\midrule
\multicolumn{9}{l}{\emph{Plain ViT-L/14 (Euclidean, 303.2M, LeJEPA's ViT-L settings)}} \\ 
LeJEPA & $62.3$ & $20.1$ & $90.5$ & $66.2$ & $68.2$ & $57.2$ & $64.8$ & $61.3$ \\ 
LocaLeJEPA ($\lambda{=}0.05$, $K{=}1$) & $61.3$ & $17.3$ & $90.4$ & $65.9$ & $66.8$ & $56.3$ & $62.6$ & $60.1$ \\ 
LocaLeJEPA ($\lambda{=}0.05$, $K{=}4$) & $\mathbf{66.0}$ & $\mathbf{26.1}$ & $\mathbf{92.3}$ & $\mathbf{70.6}$ & $\mathbf{74.0}$ & $\mathbf{62.8}$ & $\mathbf{66.4}$ & $\mathbf{65.5}$ \\ 
\bottomrule
\end{tabular}}
\end{table}

{\revredstart On ImageNet-1K, training LeJEPA with our optimization settings already improves it by $6.1$ / $7.1$ points ($k$-NN / linear, ViT-S/8). Against this stronger baseline, LocaLeJEPA still adds $11.2$ / $10.5$, $11.2$ / $10.3$ and $7.8$ / $6.4$ points on ViT-S/8, hyperbolic ViT-S/8 and the $D_4$-equivariant ResNet-50 (\Cref{tab:in1k-main,tab:in1k-indomain-full}). With a single neighborhood ($K{=}1$), our centered ball target alone adds $6.5$ / $8.1$, $6.3$ / $7.9$ and $4.2$ / $3.5$ points over this baseline. Dividing the batch into $K{=}4$ neighborhoods adds a further $4.6$ / $2.5$, $4.9$ / $2.4$ and $3.6$ / $2.9$. Since a larger $K$ also lowers the effective regularization strength (\Cref{sec:LocaLeReg}), we compare $K{=}4$ with $K{=}1$ at a quarter of the weight, which has the same strength. At matched strength, locality still adds $2.3$ / $2.3$ on ViT-S/8 (positive for each of the three seeds) and $2.6$ / $2.0$ and $3.0$ / $2.8$ on the other two encoders (\Cref{tab:in1k-indomain-full}). Within the same neighborhoods, replacing the centered ball target with Gaussian matching on uncentered clusters (LocaLeJEPA-SIGReg) costs $15.2$ / $13.7$ points on the hyperbolic and $12.7$ / $12.0$ on the equivariant encoder (\Cref{tab:in1k-main}). The resulting model falls below LeJEPA (our settings). The gains therefore come from the centered local target, our local entropy-maximizing reference, and locality alone does not explain them. On ViT-L/14, a single neighborhood accounts for only $0.5$ / $0.1$ of the $1.3$ / $6.3$ points that LocaLeJEPA adds.\par}

\vspace{-0.2cm}
\subsection{Transfer to Downstream Datasets}
\vspace{-0.2cm}
\label{subsec:imagenet-transfer-main}
\textbf{Task.}
We next test whether the ImageNet-1K gains persist beyond the pretraining distribution. Keeping each encoder frozen, we fit a single linear classifier on standardized features for seven downstream datasets spanning textures, fine-grained recognition, and generic object classification: DTD \revred{\citep{cimpoi14describing}}, FGVC-Aircraft \revred{\citep{maji13fine-grained}}, CIFAR-10 and CIFAR-100 \revred{\citep{krizhevsky2009learning}}, Flowers-102 \revred{\citep{nilsback2008automated}}, Food-101 \revred{\citep{bossard14}}, and Oxford-IIIT Pets \revred{\citep{parkhi12a}}. The probe learning rate is selected on a $10\%$ holdout from the training split, and top-1 accuracy is reported on the official test set. Full per-configuration results and protocol details are provided in \Cref{app:imagenet-full} and \Cref{app:imagenet-protocol}.

\noindent\textbf{Results.}
The ImageNet-1K gains \rev{also hold} beyond the pretraining distribution
(\Cref{tab:in1k-transfer-main}). Across the Euclidean, hyperbolic, and
equivariant encoder families, LocaLeJEPA \rev{with $K{=}4$} improves the seven-dataset average by
\rev{$8.3$--$11.8$} percentage points over LeJEPA\rev{, and by $4.2$--$8.0$ over LeJEPA (our settings). All three encoders improve
on all seven datasets against both}, with the largest gains \rev{over LeJEPA} on fine-grained recognition tasks
such as Pets and \rev{Food-101. The dynamic schedule's transfer average is $0.5$ points above fixed $K{=}4$ on the hyperbolic and $0.5$ points below it on the equivariant encoder. On ViT-L/14, LocaLeJEPA improves on all seven datasets and raises the seven-dataset average by $4.1$ points. The benefit of local distribution matching extends beyond ImageNet-1K classification, across the representation geometries and downstream domains tested.}

\FloatBarrier

\Needspace{8\baselineskip}
\vspace{-0.3cm}
\section{Conclusion}
\vspace{-0.3cm}
\revred{LocaLeJEPA} brings a manifold-theoretic perspective to the JEPA paradigm, introducing the heuristic that self-supervised training should behave like heat flow. \rev{To implement this heuristic}, we develop the novel LocaLeReg regularizer, based upon a locally Euclidean entropy-maximizing objective. We find that both qualitatively and quantitatively, LocaLeJEPA produces embeddings that better preserve intrinsic semantic structure and can adapt to the geometry of the representation space, while being stable and scalable. \rev{Comparisons at matched optimization settings, on four encoders up to ViT-L/14 and over three pretraining seeds on ViT-S/8, show that the gains do not come from the training settings. Within the same neighborhoods, an uncentered Gaussian target loses all of the gain over LeJEPA (our settings), so locality alone does not explain it.} While we instantiate our framework on image data in the current work, we believe LocaLeJEPA represents a general-use framework that allows for a principled extension of the JEPA paradigm to settings that require handling non-Euclidean data or \rev{using} non-Euclidean representations.

\FloatBarrier
\bibliography{iclr_ref} 
\bibliographystyle{chapters/iclr2027_conference}
\clearpage
\appendix
%

\providecolor{Green}{RGB}{0,128,0}
\providecolor{Red}{RGB}{200,0,0}
\providecommand{\cmark}{\textcolor{Green}{\ding{51}}}
\providecommand{\xmark}{\textcolor{Red}{\ding{55}}}

\providecolor{Top1}{HTML}{0072B2}
\providecolor{Top2}{HTML}{E69F00}
\providecolor{Top3}{HTML}{CC79A7}
\providecolor{ufill}{HTML}{E1D5E7}
\providecolor{uborder}{HTML}{9673A6}
\providecolor{vfill}{HTML}{D5E8D4}
\providecolor{vborder}{HTML}{82B366}

\sisetup{
    mode=text,
    table-alignment-mode=none,
    reset-text-family=false,
    reset-text-series=false,
    reset-text-shape=false,
    table-number-alignment=center,
    table-align-comparator=false,
    table-align-text-pre=false,
    table-align-text-post=false,
    round-mode=uncertainty,
    round-precision=3
}

\providecolor{proofbar}{RGB}{225,225,225}
\providecolor{theorembar}{RGB}{128,0,128}
\providecolor{propositionbar}{RGB}{0,100,0}
\providecolor{definitionbar}{RGB}{0,0,200}
\providecolor{lemmabar}{RGB}{255,140,0}
\providecolor{corollarybar}{RGB}{204,153,255}
\providecolor{remarkbar}{RGB}{0,139,139}
\providecolor{assumptionbar}{RGB}{178,34,34}

%
%

\newenvironment{restatementframe}[1]{%
  \ifpaperbars
  \begin{mdframed}[
    hidealllines=true,
    leftline=true,
    linecolor=#1,
    linewidth=2pt,
    innerleftmargin=6pt,
    innerrightmargin=0pt,
    innertopmargin=2pt,
    innerbottommargin=2pt
  ]%
  \else
    \par\addvspace{\topsep}%
  \fi
}{%
  \ifpaperbars
    \end{mdframed}%
  \else
    \par\addvspace{\topsep}%
  \fi
}

\newenvironment{restatedtheorem}[2][]{%
  \begin{restatementframe}{theorembar}%
  \noindent\textbf{Theorem~\ref{#2}}%
  \if\relax\detokenize{#1}\relax
    \textbf{.}%
  \else
    \textbf{ (#1).}%
  \fi
  \enspace\itshape\ignorespaces
}{%
  \end{restatementframe}
}

\newenvironment{restatedproposition}[2][]{%
  \begin{restatementframe}{propositionbar}%
  \noindent\textbf{Proposition~\ref{#2}}%
  \if\relax\detokenize{#1}\relax
    \textbf{.}%
  \else
    \textbf{ (#1).}%
  \fi
  \enspace\itshape\ignorespaces
}{%
  \end{restatementframe}
}

\newenvironment{restatedlemma}[2][]{%
  \begin{restatementframe}{lemmabar}%
  \noindent\textbf{Lemma~\ref{#2}}%
  \if\relax\detokenize{#1}\relax
    \textbf{.}%
  \else
    \textbf{ (#1).}%
  \fi
  \enspace\itshape\ignorespaces
}{%
  \end{restatementframe}
}

\newenvironment{restatedcorollary}[2][]{%
  \begin{restatementframe}{corollarybar}%
  \noindent\textbf{Corollary~\ref{#2}}%
  \if\relax\detokenize{#1}\relax
    \textbf{.}%
  \else
    \textbf{ (#1).}%
  \fi
  \enspace\itshape\ignorespaces
}{%
  \end{restatementframe}
}

\providecommand*{\theoremautorefname}{Theorem}
\providecommand*{\lemmaautorefname}{Lemma}
\providecommand*{\propositionautorefname}{Proposition}
\providecommand*{\definitionautorefname}{Definition}
\providecommand*{\corollaryautorefname}{Corollary}
\providecommand*{\remarkautorefname}{Remark}
\providecommand*{\exampleautorefname}{Example}
\providecommand*{\assumptionautorefname}{Assumption}

\makeatletter
\@ifundefined{ifshowcomments}{\newboolean{showcomments}}{}
\makeatother
\providecommand{\fr}[1]{\textcolor{red}{\textbf{[FR: #1]}}}

\begin{center}
  {\Large\bfseries \papertitle\par}
  \vspace{0.5em}
  {\large Supplementary Material\par}
\end{center}
\phantomsection
\addcontentsline{toc}{part}{Supplementary Material}
\etocsetnexttocdepth{subsubsection}
\etocsettocstyle{\subsection*{Contents}}{}
\localtableofcontents
\vspace{1em}
\newpage

\section{Additional Implementation Details}
\label{app:implementation-details}

This appendix derives the uniform ball distribution, the reference used by LocaLeReg. \Cref{sec:derivations-characteristic-functions} computes its characteristic function and one-dimensional marginals, including equivalent hypergeometric forms, and \Cref{sec:locale-algorithm} describes how they enter the local distribution-matching objective and summarizes the training procedure.

\subsection{Characteristic Function of the Uniform Ball}
\label{sec:derivations-characteristic-functions}

Throughout, $d$ denotes the output dimension of the projector, i.e. the space in which the regularizer acts.
For $c\in\mathbb{R}^d$ and $R>0$, we denote the ball of radius $R$ centered at $c$ as:   
\begin{equation}
     \mathbb B_R^d(c)
    =
    \{x\in\mathbb{R}^d:\|x-c\|_2\leq R\}
\end{equation}
and write $\mathbb B_R^d=\mathbb B_R^d(0)$. The volume of the unit ball is:
\begin{equation}
     V_d
    =
    \frac{\pi^{d/2}}{\Gamma(d/2+1)}
\end{equation}
where $\Gamma$ is the Gamma function. As a consequence, the uniform distribution over the ball has the following density function.

\begin{lemma}[Uniform Ball]
A random variable uniformly distributed over the ball, denoted $X\sim\operatorname{Unif}(\mathbb B_R^d)$, has density:
\begin{equation}
    f_X(x) = \frac{1}{V_dR^d}\mathbf{1}_{\mathbb B_R^d}(x)
\end{equation}
\end{lemma}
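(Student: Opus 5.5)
The plan is to unwind the definition of the uniform distribution on a set of finite positive Lebesgue measure and then compute the volume of $\mathbb B_R^d$ from the stated unit-ball volume $V_d$. By definition, $X\sim\operatorname{Unif}(\mathbb B_R^d)$ means that for every Borel set $A\subseteq\mathbb{R}^d$,
\begin{equation}
    \mathbb P(X\in A)=\frac{\lambda(A\cap\mathbb B_R^d)}{\lambda(\mathbb B_R^d)},
\end{equation}
where $\lambda$ denotes Lebesgue measure on $\mathbb{R}^d$. This law is absolutely continuous with respect to $\lambda$. Its Radon--Nikodym derivative is therefore the constant $1/\lambda(\mathbb B_R^d)$ on the ball and $0$ off it, so the density is $\lambda(\mathbb B_R^d)^{-1}\mathbf 1_{\mathbb B_R^d}$. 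It remains only to show that $\lambda(\mathbb B_R^d)=V_dR^d$.

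For this I will use the dilation $T_R(x)=Rx$. It is a linear bijection with Jacobian determinant $R^d$, and it maps $\mathbb B_1^d$ onto $\mathbb B_R^d$ because $\|x\|_2\le 1\iff\|Rx\|_2\le R$. The change-of-variables formula, or simply the scaling property of Lebesgue measure under linear maps, then gives $\lambda(\mathbb B_R^d)=R^d\lambda(\mathbb B_1^d)=R^dV_d$. Here I take $V_d=\pi^{d/2}/\Gamma(d/2+1)$ as the stated value of the unit-ball volume.

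If a self-contained justification of $V_d$ is wanted, I would use the Gaussian integral trick. Integrating in polar coordinates,
\begin{equation}
    \pi^{d/2}=\int_{\mathbb{R}^d}e^{-\|x\|^2}\,dx=dV_d\int_0^\infty r^{d-1}e^{-r^2}\,dr=\tfrac{d}{2}V_d\,\Gamma(d/2).
\end{equation}
This uses that the surface area of $\mathbb S^{d-1}$ equals $dV_d$, which follows by differentiating $r\mapsto V_dr^d$. Rearranging gives $V_d=\pi^{d/2}/\Gamma(d/2+1)$.

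Finally, I will check that $f_X$ integrates to $1$: $\int f_X=\lambda(\mathbb B_R^d)/(V_dR^d)=1$. The only step with any real content is the volume identity, and the scaling argument disposes of it directly. The rest is bookkeeping with the definition.
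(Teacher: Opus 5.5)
Your proof is correct and follows essentially the same route as the paper. The paper gives no separate argument: it presents the density as an immediate consequence of the unit-ball volume $V_d=\pi^{d/2}/\Gamma(d/2+1)$, i.e.\ of the fact that $\mathbb B_R^d$ has volume $V_dR^d$. Your dilation argument for the factor $R^d$ and your Gaussian-integral derivation of $V_d$ simply spell out steps that the paper takes for granted.
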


We recall the following definition, which we then apply to the uniform ball distribution.
\begin{definition}[Characteristic Function]
Given a random variable $X$, its characteristic function is defined as
\begin{equation}
     \varphi_X(t)=\mathbb{E}[e^{it^\top X}],
    \qquad t\in\mathbb{R}^d
\end{equation}
\end{definition}

\begin{proposition}[Bessel form of the Characteristic Function]
\label{prop:ball-cf-bessel}
Let $X\sim\operatorname{Unif}(\mathbb B_R^d)$. Then:
\begin{equation}
    \varphi_X(t)
    =
    2^{d/2}\Gamma\!\left(\frac d2+1\right)
    \frac{J_{d/2}(R\|t\|_2)}
         {(R\|t\|_2)^{d/2}}
    \label{eq:ball-cf-bessel}
\end{equation}
and at $t=0$ the expression is defined by continuity as $\varphi_X(0)=1$.
\end{proposition}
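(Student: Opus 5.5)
The plan is to compute $\varphi_X(t)$ directly from the density in the Uniform Ball lemma, using rotational invariance to reduce it to a one-dimensional integral. The density $f_X$ is invariant under orthogonal maps, so $\varphi_X(t)$ depends only on $\rho=\|t\|_2$. For $t\neq 0$ we may therefore take $t=\rho e_1$, giving
\[
\varphi_X(t)=\frac{1}{V_dR^d}\int_{\mathbb B_R^d} e^{i\rho x_1}\,dx .
\]
We then slice the ball along the first coordinate. For fixed $x_1=s\in[-R,R]$, the slice is a $(d-1)$-ball of radius $\sqrt{R^2-s^2}$, which yields
\[
\varphi_X(t)=\frac{V_{d-1}}{V_dR^d}\int_{-R}^{R} e^{i\rho s}(R^2-s^2)^{(d-1)/2}\,ds .
\]
The imaginary part vanishes by symmetry. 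Substituting $s=R\cos\theta$ or $s=Ru$ then gives
\[
\varphi_X(t)=\frac{V_{d-1}}{V_d}\int_{-1}^{1}\cos(R\rho u)(1-u^2)^{(d-1)/2}\,du .
\]

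The key step is to recognize this integral via Poisson's integral representation of the Bessel function:
\[
J_\nu(z)=\frac{(z/2)^\nu}{\sqrt{\pi}\,\Gamma(\nu+\tfrac12)}\int_{-1}^{1}\cos(zu)(1-u^2)^{\nu-1/2}\,du,
\]
valid for $\nu>-\tfrac12$. We apply it with $\nu=d/2$, so that $\nu-\tfrac12=(d-1)/2$, and $z=R\rho$. This gives
\[
\int_{-1}^1\cos(zu)(1-u^2)^{(d-1)/2}\,du=\sqrt{\pi}\,\Gamma\!\left(\tfrac{d+1}{2}\right)\frac{J_{d/2}(z)}{(z/2)^{d/2}} .
\]

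Next we simplify the constants using
\[
\frac{V_{d-1}}{V_d}=\frac{\Gamma(d/2+1)}{\sqrt{\pi}\,\Gamma\!\left(\tfrac{d+1}{2}\right)}.
\]
The factors $\sqrt{\pi}\,\Gamma\!\left(\tfrac{d+1}{2}\right)$ cancel, leaving $\Gamma(d/2+1)\,2^{d/2}J_{d/2}(z)/z^{d/2}$, which is exactly \eqref{eq:ball-cf-bessel}.

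For $t=0$, we use the series $J_\nu(z)=\sum_k (-1)^k (z/2)^{2k+\nu}/(k!\,\Gamma(k+\nu+1))$. Its leading term gives $J_{d/2}(z)/z^{d/2}\to 2^{-d/2}/\Gamma(d/2+1)$ as $z\to 0$, so the limit of the expression is $1=\mathbb{E}[1]$. This limit agrees with $\varphi_X(0)$ and also confirms continuity at the origin.

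The main obstacle is the Poisson integral identity itself. If it may not be cited, I would prove it by expanding $\cos(zu)$ in a power series. Integrating termwise gives Beta integrals $\int_{-1}^1 u^{2k}(1-u^2)^{\nu-1/2}\,du=B\!\left(k+\tfrac12,\nu+\tfrac12\right)$. Applying the duplication formula for $\Gamma(2k+1)$ then matches the resulting series with the Bessel series term by term; interchanging sum and integral is justified by uniform convergence on $[-1,1]$.

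As an alternative route, one could use polar coordinates together with the known Fourier transform of the sphere measure, $\int_{\mathbb S^{d-1}}e^{ir\rho\langle e_1,\omega\rangle}d\omega\propto J_{d/2-1}(r\rho)/(r\rho)^{d/2-1}$, followed by the identity $\int_0^z s^{\nu}J_{\nu-1}(s)\,ds=z^\nu J_\nu(z)$. The slicing route above seems cleaner and is the one I would follow.
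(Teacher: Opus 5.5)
Your proposal is correct and follows essentially the paper's route: rotational invariance reduces to $t=\|t\|_2e_1$, slicing into $(d-1)$-balls gives the one-dimensional integral with constant $V_{d-1}/V_d$, and Poisson's integral with $\nu=d/2$ finishes the proof. Your check of the $t=0$ limit via the Bessel series, and your termwise Beta-integral proof of the Poisson formula, fill in steps that the paper leaves to continuity or citation.
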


The proof of \Cref{prop:ball-cf-bessel} follows from the next two lemmas. \Cref{lem:poisson-integral} is a known fact about the Bessel functions of the first kind, hence we state it here without a proof.

\begin{lemma}[Integral Representation of the Characteristic Function]
\label{lem:ball-cf-integral}
Let $X\sim\operatorname{Unif}(\mathbb B_R^d)$ and $z=R\|t\|_2$. Then:
\begin{equation}
    \varphi_X(t)
    =
    \frac{\Gamma(d/2+1)}
         {\sqrt{\pi}\,\Gamma((d+1)/2)}
    \int_{-1}^{1}
        e^{izu}(1-u^2)^{(d-1)/2}\,du
    \label{eq:ball-cf-integral}
\end{equation}
\end{lemma}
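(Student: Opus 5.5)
The plan is to reduce to a one-dimensional integral by rotational invariance and then integrate out the directions orthogonal to $t$ by slicing the ball. Since $X\sim\operatorname{Unif}(\mathbb B_R^d)$ has a rotation-invariant law, $\varphi_X(t)$ depends only on $\|t\|_2$. For $t\neq 0$ we choose an orthogonal $Q$ with $Qt=\|t\|_2 e_1$ and use $QX\stackrel{d}{=}X$ to write $\varphi_X(t)=\mathbb E[e^{i\|t\|_2 X_1}]$. Rescaling $X=RW$ with $W\sim\operatorname{Unif}(\mathbb B_1^d)$ turns this into $\mathbb E[e^{izW_1}]$ with $z=R\|t\|_2$. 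The case $t=0$ is trivial, and the right-hand side equals $1$ there once the normalization is checked.

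The key step is to compute the marginal density of $W_1$. By Fubini, for $u\in[-1,1]$ the slice $\{w\in\mathbb B_1^d: w_1=u\}$ is a $(d-1)$-ball of radius $\sqrt{1-u^2}$. The density of $W_1$ is therefore
\begin{equation*}
 f_{W_1}(u)=\frac{V_{d-1}(1-u^2)^{(d-1)/2}}{V_d}\mathbf 1_{[-1,1]}(u),
\end{equation*}
using the density of the uniform ball from the Uniform Ball lemma. Substituting $V_d=\pi^{d/2}/\Gamma(d/2+1)$ gives
\begin{equation*}
 \frac{V_{d-1}}{V_d}=\frac{\pi^{(d-1)/2}\,\Gamma(d/2+1)}{\Gamma((d+1)/2)\,\pi^{d/2}}=\frac{\Gamma(d/2+1)}{\sqrt\pi\,\Gamma((d+1)/2)},
\end{equation*}
which is exactly the prefactor in \eqref{eq:ball-cf-integral}. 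Integrating $e^{izu}$ against this density then yields the claimed formula.

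There is no real obstacle here. The only points needing care are justifying the slicing via Fubini together with the volume of the $(d-1)$-ball, and handling the degenerate case $d=1$, where $V_0=1$ and the formula gives $\frac12\int_{-1}^1 e^{izu}\,du$, as expected. As a consistency check, setting $z=0$ gives the Beta integral $\int_{-1}^1(1-u^2)^{(d-1)/2}\,du=\sqrt\pi\,\Gamma((d+1)/2)/\Gamma(d/2+1)$, so $\varphi_X(0)=1$.
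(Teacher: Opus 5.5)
Your proof is correct and follows the paper's argument essentially step for step: rotational invariance reduces to $t=\|t\|_2 e_1$, you slice the ball into $(d-1)$-balls of radius $\sqrt{R^2-s^2}$, and you identify the prefactor as $V_{d-1}/V_d$. The only difference is cosmetic: you rescale to the unit ball before slicing, whereas the paper slices $\mathbb B_R^d$ directly and substitutes $s=Ru$ at the end, and your explicit checks of $V_{d-1}/V_d$, $d=1$, and $t=0$ fill in details the paper leaves implicit.
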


\begin{proof}
Rotational invariance implies that:
\begin{equation}
    \varphi_X(t) = \int_{\mathbb B_R^d} f_{X}(x) e^{it^\top x}\,dx
\end{equation}
depends only on $\|t\|_2$. We may therefore \rev{assume that} $t$ \rev{points along} the first coordinate axis and write $t=\tau e_1$, where $\tau=\|t\|_2$. Hence:
\begin{equation}
      \varphi_{X}(t) = \int_{\mathbb B_R^d} f_{X}(x) e^{i\tau x_1}\,dx
\end{equation}
At a fixed first coordinate $x_1=s\in[-R,R]$, the cross-section of $\mathbb B_R^d$ is a $(d-1)$-dimensional ball of radius $\sqrt{R^2-s^2}$, since $s^2 + \|y\|^2 \leq R^2$ implies $\|y\| \leq \sqrt{R^2 - s^2}$. Integrating over these cross-sections gives:
\begin{align*}
    \varphi_X(t)
    &= \frac{1}{V_dR^d}\int_{-R}^R e^{i\tau s}  \left[\int_{\|y\|\leq \sqrt{R^2-s^2}}dy \right] ds \\
    &=
    \frac{V_{d-1}}{V_dR^d}
    \int_{-R}^{R}
        e^{i\tau s}(R^2-s^2)^{(d-1)/2}\,ds \\
    &=
    \frac{\Gamma(d/2+1)}
         {\sqrt{\pi}\,\Gamma((d+1)/2)}
    \int_{-1}^{1}
        e^{izu}(1-u^2)^{(d-1)/2}\,du,
\end{align*}
where the third equality uses $s=Ru$ and $z=R\|t\|_2$.
\end{proof}

\begin{lemma}[Poisson Integral]
\label{lem:poisson-integral}
For $\nu>-1/2$ and $z>0$, the Bessel function of the first kind satisfies:
\begin{equation}
    J_\nu(z)
    =
    \frac{(z/2)^\nu}
         {\sqrt{\pi}\,\Gamma(\nu+1/2)}
    \int_{-1}^{1}
        e^{izu}(1-u^2)^{\nu-1/2}\,du
\end{equation}
\end{lemma}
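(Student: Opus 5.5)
The plan is to prove the Poisson integral of \Cref{lem:poisson-integral} by expanding the exponential in a power series, integrating term by term against the weight $(1-u^2)^{\nu-1/2}$, and matching the result with the defining series
\begin{equation*}
J_\nu(z)=\sum_{k\ge 0}\frac{(-1)^k}{k!\,\Gamma(k+\nu+1)}\left(\frac z2\right)^{2k+\nu}.
\end{equation*}

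First write $e^{izu}=\sum_{m\ge0}(izu)^m/m!$ and justify swapping sum and integral. For $u\in[-1,1]$ the partial sums are bounded in modulus by $\sum_m |z|^m/m! = e^{|z|}$. The weight $(1-u^2)^{\nu-1/2}$ is integrable on $[-1,1]$ exactly because $\nu>-1/2$. Dominated convergence therefore allows the interchange. The odd powers $u^{2k+1}$ integrate to zero against the even weight, so only the terms $m=2k$ survive, each with coefficient $(-1)^k z^{2k}/(2k)!$.

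Next compute the moments. By evenness and the substitution $s=u^2$,
\begin{equation*}
\int_{-1}^{1}u^{2k}(1-u^2)^{\nu-1/2}\,du=\int_0^1 s^{k-1/2}(1-s)^{\nu-1/2}\,ds=\frac{\Gamma(k+\tfrac12)\,\Gamma(\nu+\tfrac12)}{\Gamma(k+\nu+1)}.
\end{equation*}
Now insert the duplication identity $\Gamma(k+\tfrac12)=\sqrt{\pi}\,(2k)!/(4^k k!)$. The factor $(2k)!$ cancels against the $1/(2k)!$ from the exponential series, and the integral becomes
\begin{equation*}
\sqrt{\pi}\,\Gamma(\nu+\tfrac12)\sum_{k\ge0}\frac{(-1)^k}{k!\,\Gamma(k+\nu+1)}\left(\frac z2\right)^{2k}.
\end{equation*}
Multiplying by the prefactor $(z/2)^\nu/(\sqrt\pi\,\Gamma(\nu+\tfrac12))$ reproduces the series for $J_\nu(z)$ term by term, which proves the claim.

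The only genuinely delicate step is the moment computation. One must get the Beta-function bookkeeping and the half-integer Gamma identity exactly right so that the factors $(2k)!$, $4^k$ and $\sqrt\pi$ cancel cleanly. The hypothesis $\nu>-1/2$ is used twice: it makes the weight integrable, which justifies the interchange, and it makes the Beta integral converge at $s=1$.
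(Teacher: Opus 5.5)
Your proof is correct, and it supplies something the paper does not. The paper gives no proof of \Cref{lem:poisson-integral}: it quotes the identity as a known property of Bessel functions and uses it only to pass from the integral representation of \Cref{lem:ball-cf-integral} to the Bessel form of \Cref{prop:ball-cf-bessel}.

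Your term-by-term expansion is the standard self-contained derivation, and each step checks out:
\begin{itemize}
\item Dominated convergence with dominating function $e^{|z|}(1-u^2)^{\nu-1/2}$ is legitimate precisely because $\nu>-1/2$.
\item The moment $\int_{-1}^{1}u^{2k}(1-u^2)^{\nu-1/2}\,du=\Gamma(k+\tfrac12)\Gamma(\nu+\tfrac12)/\Gamma(k+\nu+1)$ is computed correctly.
\item The identity $\Gamma(k+\tfrac12)=\sqrt{\pi}\,(2k)!/(4^k k!)$ makes the factors cancel exactly as you claim.
\end{itemize}
It also uses only tools the paper already relies on elsewhere: the power series of $J_\nu$ and the Gamma duplication formula, both used in the proof of \Cref{prop:ball-cf-hypergeometric}.

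Your intermediate computation also has a side benefit. It shows that $\int_{-1}^{1}e^{izu}(1-u^2)^{\nu-1/2}\,du=\frac{\sqrt{\pi}\,\Gamma(\nu+\frac12)}{\Gamma(\nu+1)}\,{}_0F_1\!\left(;\nu+1;-\frac{z^2}{4}\right)$ for every real $z$. Taking $\nu=d/2$ together with the prefactor in \eqref{eq:ball-cf-integral} therefore gives the ${}_0F_1$ form \eqref{eq:ball-cf-0f1} of the characteristic function directly. The paper instead goes from the integral to the Bessel quotient (via the cited lemma) and then back to ${}_0F_1$ (via the series). Your route skips the Bessel quotient entirely, and so covers $t=0$ without the appeal to continuity that the paper needs. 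The restriction $z>0$ only matters for making sense of the factor $(z/2)^\nu$ in the Bessel form.
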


\begin{proof}[Proof of \Cref{prop:ball-cf-bessel}]
Substitute $\nu=d/2$ from \Cref{lem:poisson-integral} into \eqref{eq:ball-cf-integral}.
\end{proof}

This characteristic function admits two hypergeometric representations as alternative expressions.

\begin{proposition}[Hypergeometric forms]
\label{prop:ball-cf-hypergeometric}
Let $X\sim\operatorname{Unif}(\mathbb B_R^d)$. Then:
\begin{equation}
    \varphi_X(t)
    =
    {}_0F_1\!\left(
        ;\frac d2+1;-\frac{R^2\|t\|_2^2}{4}
    \right)
    \label{eq:ball-cf-0f1}
\end{equation}
and:
\begin{equation}
    \varphi_X(t)
    =
    e^{-iR\|t\|_2}
    {}_1F_1\!\left(
        \frac{d+1}{2};d+1;2iR\|t\|_2
    \right)
    \label{eq:ball-cf-1f1}
\end{equation}
\end{proposition}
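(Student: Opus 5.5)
We derive both forms from results already established, with $z=R\|t\|_2$ throughout. For \eqref{eq:ball-cf-0f1} we start from the Bessel form of \Cref{prop:ball-cf-bessel}. We insert the power series
\[
J_\nu(z)=\sum_{k\ge0}\frac{(-1)^k}{k!\,\Gamma(\nu+k+1)}\Bigl(\frac z2\Bigr)^{2k+\nu}
\]
with $\nu=d/2$. Multiplying by $2^{d/2}\Gamma(d/2+1)z^{-d/2}$ cancels the factor $(z/2)^{\nu}$, which leaves
\[
\sum_{k\ge 0}\frac{\Gamma(d/2+1)}{\Gamma(d/2+1+k)}\frac{(-z^2/4)^k}{k!}.
\]
Writing $\Gamma(d/2+1+k)/\Gamma(d/2+1)=(d/2+1)_k$ turns this into the defining series of ${}_0F_1(;d/2+1;-z^2/4)$. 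The series is entire in $z$ and equals $1$ at $z=0$, so it also agrees with the continuity convention $\varphi_X(0)=1$.

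For \eqref{eq:ball-cf-1f1} we use the integral representation of \Cref{lem:ball-cf-integral} rather than the Bessel form. Set $u=2s-1$ with $s\in[0,1]$, so that $1-u^2=4s(1-s)$ and $du=2\,ds$. Then
\[
\int_{-1}^1 e^{izu}(1-u^2)^{(d-1)/2}du
=2^{d}e^{-iz}\int_0^1 e^{2izs}\,s^{(d-1)/2}(1-s)^{(d-1)/2}\,ds .
\]
We then apply Kummer's integral representation
\[
{}_1F_1(a;b;w)=\frac{\Gamma(b)}{\Gamma(a)\Gamma(b-a)}\int_0^1 e^{ws}s^{a-1}(1-s)^{b-a-1}ds,
\]
valid for $\operatorname{Re} b>\operatorname{Re} a>0$, with $a=(d+1)/2$, $b=d+1$, $w=2iz$. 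Since $a=b-a=(d+1)/2>0$, the hypothesis holds for every $d\ge1$. The remaining integral equals $\Gamma((d+1)/2)^2\,{}_1F_1\bigl(\tfrac{d+1}{2};d+1;2iz\bigr)/\Gamma(d+1)$.

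The main obstacle is checking that the constants collapse to exactly $1$. That is, we must show
\[
\frac{\Gamma(d/2+1)}{\sqrt\pi\,\Gamma((d+1)/2)}\cdot 2^d\cdot\frac{\Gamma((d+1)/2)^2}{\Gamma(d+1)}=1,
\]
which simplifies to $2^d\Gamma(d/2+1)\Gamma((d+1)/2)=\sqrt\pi\,\Gamma(d+1)$. This is the Legendre duplication formula $\Gamma(x)\Gamma(x+\tfrac12)=2^{1-2x}\sqrt\pi\,\Gamma(2x)$ evaluated at $x=(d+1)/2$: it gives $\Gamma(\tfrac{d+1}{2})\Gamma(\tfrac d2+1)=2^{-d}\sqrt\pi\,\Gamma(d+1)$, exactly the required identity. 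As a cross-check we will also verify the result against Kummer's second transformation $e^{-w/2}{}_1F_1(a;2a;w)={}_0F_1(;a+\tfrac12;w^2/16)$. With $a=(d+1)/2$ and $w=2iz$ this yields ${}_0F_1(;d/2+1;-z^2/4)$, so the two forms agree and together reduce to \eqref{eq:ball-cf-bessel}.
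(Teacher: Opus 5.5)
Your proposal is correct and follows the paper's proof essentially step for step. The ${}_0F_1$ form comes from the Bessel series rewritten with the rising factorial $(d/2+1)_k$, and the ${}_1F_1$ form comes from the substitution $u=2s-1$ in the integral representation, Kummer's integral with $a=(d+1)/2$, $b=2a=d+1$, and the Legendre duplication formula cancelling the constants, while your checks of the $t=0$ case, Kummer's hypothesis, and the cross-check via Kummer's second transformation are sound additions the paper omits.
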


\begin{proof}
\emph{Expression in terms of ${}_0F_1$.}
Using the rising factorial $(a)_k=\Gamma(a+k)/\Gamma(a)$, with $(a)_0=1$, the Bessel series can be written as:
\begin{align*}
    J_\nu(z)
    &=
    \sum_{k=0}^{\infty}
    \frac{(-1)^k}{k!\,\Gamma(k+\nu+1)}
    \left(\frac z2\right)^{2k+\nu}\\
    &=
    \frac{(z/2)^\nu}{\Gamma(\nu+1)}
    \sum_{k=0}^{\infty}
    \frac{(-z^2/4)^k}{(\nu+1)_k\,k!}\\
    &=
    \frac{(z/2)^\nu}{\Gamma(\nu+1)}
    {}_0F_1\!\left(;\nu+1;-\frac{z^2}{4}\right)
\end{align*}
Substitution into \eqref{eq:ball-cf-bessel} gives \eqref{eq:ball-cf-0f1}.

\emph{Expression in terms of ${}_1F_1$.}
Starting from \eqref{eq:ball-cf-integral}, let $u=2w-1$ and set $a=(d+1)/2$. Since $1-u^2=4w(1-w)$, we obtain:
\begin{equation}
     \varphi_X(t)
    =
    \frac{2^d\Gamma(d/2+1)}
         {\sqrt{\pi}\,\Gamma(a)}
    e^{-iz}
    \int_0^1 e^{2izw}w^{a-1}(1-w)^{a-1}\,dw
\end{equation}
Kummer's integral representation gives:
\begin{equation}
    \int_0^1 e^{2izw}w^{a-1}(1-w)^{a-1}\,dw
    =
    \frac{\Gamma(a)^2}{\Gamma(2a)}
    {}_1F_1(a;2a;2iz)
\end{equation}
The gamma duplication formula implies:
\begin{equation}
      \Gamma(d+1)
    =
    \frac{2^d}{\sqrt{\pi}}
    \Gamma\!\left(\frac{d+1}{2}\right)
    \Gamma\!\left(\frac d2+1\right)
\end{equation}
so the prefactors cancel, yielding \eqref{eq:ball-cf-1f1}.
\end{proof}

\begin{remark}
Unlike the Bessel quotient \eqref{eq:ball-cf-bessel}, the expression \eqref{eq:ball-cf-0f1} is directly defined at $t=0$. Although \eqref{eq:ball-cf-1f1} involves complex quantities, the characteristic function is real-valued because the reference distribution is symmetric about the origin.
\end{remark}

\subsubsection{One-Dimensional Projections}
The sliced distribution-matching objective compares scalar projections of the embeddings with the corresponding projections of the reference distribution.

\begin{proposition}[Projected characteristic function]
\label{prop:projected-ball-cf}
Let $X\sim\operatorname{Unif}(\mathbb B_R^d(c))$, fix $u\in\mathbb{S}^{d-1}$, and define $Y_u=\langle u,X\rangle$. Then:
\begin{equation}
    \varphi_{Y_u}(s)
    =
    e^{is\langle u,c\rangle}
    2^{d/2}\Gamma\!\left(\frac d2+1\right)
    \frac{J_{d/2}(R|s|)}
         {(R|s|)^{d/2}},
    \qquad s\in\mathbb{R}
    \label{eq:projected-ball-cf}
\end{equation}
with $\varphi_{Y_u}(0)=1$.
\end{proposition}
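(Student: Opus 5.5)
The plan is to reduce the projected characteristic function to the full $d$-dimensional characteristic function already computed in \Cref{prop:ball-cf-bessel}, after first translating the ball to the origin. Write $X = c + X_0$ with $X_0 \sim \operatorname{Unif}(\mathbb B_R^d)$. This decomposition is valid because translation by $c$ is a measure-preserving bijection from $\mathbb B_R^d$ onto $\mathbb B_R^d(c)$. In particular, the density $\frac{1}{V_dR^d}\mathbf 1_{\mathbb B_R^d(c)}$ pulls back to the density of the Uniform Ball lemma.

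The key steps, in order, are as follows.

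\begin{enumerate}
\item For $s\in\mathbb{R}$, compute
\begin{equation*}
\varphi_{Y_u}(s)=\mathbb E\bigl[e^{is\langle u,X\rangle}\bigr]
= e^{is\langle u,c\rangle}\,\mathbb E\bigl[e^{i(su)^\top X_0}\bigr]
= e^{is\langle u,c\rangle}\,\varphi_{X_0}(su).
\end{equation*}
This uses only linearity of the inner product and the fact that $e^{is\langle u,c\rangle}$ is deterministic.
\item Apply \Cref{prop:ball-cf-bessel} at the vector $t = su\in\mathbb{R}^d$. Since $\|u\|_2=1$, we have $\|su\|_2=|s|$, so $R\|t\|_2 = R|s|$. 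This gives exactly the Bessel factor in \eqref{eq:projected-ball-cf}.
\item Treat $s=0$ separately. There $\varphi_{X_0}(0)=1$ by the continuity convention of \Cref{prop:ball-cf-bessel}, and $e^{0}=1$, so $\varphi_{Y_u}(0)=1$. One can confirm this independently from the small-$z$ behaviour $J_{d/2}(z)\sim (z/2)^{d/2}/\Gamma(d/2+1)$, which is also immediate from the ${}_0F_1$ form \eqref{eq:ball-cf-0f1}.
\end{enumerate}

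I expect no step to be a genuine obstacle; the argument is essentially a change of variables. The only point requiring care is the use of $|s|$ rather than $s$ in the Bessel quotient. This follows because \Cref{prop:ball-cf-bessel} depends on $t$ only through $\|t\|_2$, a consequence of the rotational invariance established in \Cref{lem:ball-cf-integral}. As a consistency check, the centered factor is an even real function of $s$, matching the remark that the centered reference is symmetric. All dependence on the center therefore sits in the phase $e^{is\langle u,c\rangle}$, which is exactly what centering the residuals in \eqref{eq:residuals} removes.
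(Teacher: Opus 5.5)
Your proposal is correct and follows essentially the same route as the paper's proof. The paper also writes $X=c+X_0$ and factors out the deterministic phase to get $\varphi_{Y_u}(s)=e^{is\langle u,c\rangle}\varphi_{X_0}(su)$, then applies \eqref{eq:ball-cf-bessel} using $\|su\|_2=|s|$.
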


\begin{proof}
Writing $X=c+X_0$, with $X_0\sim\operatorname{Unif}(\mathbb B_R^d)$, gives:
\begin{align*}
    \varphi_{Y_u}(s)
    & = \mathbb{E} [e^{is\langle X, u\rangle}] \\
    & = e^{is \langle c, u\rangle}  \, \mathbb{E} [e^{i\langle X_0, su\rangle}] \\
    & = e^{is\langle u,c\rangle}\varphi_{X_0}(su)
\end{align*}
and \eqref{eq:ball-cf-bessel} with $\|su\|_2=|s|$ yields \eqref{eq:projected-ball-cf}.
\end{proof}



\begin{corollary}[Common scalar reference]
\label{cor:scalar-reference}
After centering, \eqref{eq:projected-ball-cf} is independent of $u$, so all sketch directions use the same scalar reference characteristic function:
\[
    \varphi_{R, d}(s) = 2^{d/2}\Gamma\!\left(\frac d2+1\right)
    \frac{J_{d/2}(R|s|)}
         {(R|s|)^{d/2}},
    \qquad s\in\mathbb{R}.
\]
\end{corollary}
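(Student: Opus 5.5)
The plan is to read the corollary directly off \Cref{prop:projected-ball-cf}. In \eqref{eq:projected-ball-cf} the direction $u$ appears in exactly one place, the phase factor $e^{is\langle u,c\rangle}$. The remaining Bessel factor depends only on $R$, $d$ and $|s|$, because $\|su\|_2=|s|$ for every unit vector $u$. So it suffices to show that centering sets $c=0$.

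First I would make precise what ``after centering'' means for the reference. The empirical side matches residuals $r_i^{(v)}=z_i^{(v)}-\mu_q^{(v)}$ from \eqref{eq:residuals}. The appropriate reference is therefore the centered variable $X-\mathbb E[X]$, where $X\sim\operatorname{Unif}(\mathbb B_R^d(c))$. By symmetry of the ball about its center, $\mathbb E[X]=c$. Hence $X-\mathbb E[X]=X_0\sim\operatorname{Unif}(\mathbb B_R^d)$, which is exactly the decomposition $X=c+X_0$ used in the proof of \Cref{prop:projected-ball-cf}.

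Next I would apply \Cref{prop:projected-ball-cf} with $c=0$, or equivalently the intermediate identity $\varphi_{Y_u}(s)=\varphi_{X_0}(su)$ from its proof. This gives $\varphi_{\langle u,X_0\rangle}(s)=\varphi_{R,d}(s)$ for every $u\in\mathbb S^{d-1}$. Independence of $u$ can also be seen directly from rotational invariance: for any $u,u'\in\mathbb S^{d-1}$, pick an orthogonal $O$ with $Ou=u'$. Since $O^\top X_0\stackrel{d}{=}X_0$, we get $\langle u',X_0\rangle=\langle u,O^\top X_0\rangle\stackrel{d}{=}\langle u,X_0\rangle$, and therefore the characteristic functions agree. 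The value at $s=0$ is $1$ by continuity, as in \Cref{prop:ball-cf-bessel}.

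There is no real obstacle here. The only point needing care is to state that the centering constant is the ball's mean $c$, so that the phase $e^{is\langle u,c\rangle}$ cancels exactly. One might also remark that the reference is real and even in $s$, matching the remark after \Cref{prop:ball-cf-hypergeometric}.
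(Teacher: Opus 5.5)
Your proposal is correct and follows the paper's implicit argument: the paper gives no separate proof, and the corollary is read off \Cref{prop:projected-ball-cf} by taking $c=0$. This removes the only $u$-dependent factor $e^{is\langle u,c\rangle}$, while the Bessel factor depends on $u$ only through $\|su\|_2=|s|$; your rotational-invariance check via an orthogonal $O$ with $Ou=u'$ is a valid but redundant confirmation of the same fact.
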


\begin{proposition}[Projected density]
\label{prop:projected-ball-density}
In the setting of \Cref{prop:projected-ball-cf}, write $m_u=\langle u,c\rangle$. Then
\begin{equation}
    f_{Y_u}(y)
    =
    \frac{\Gamma(d/2+1)}
         {R\sqrt{\pi}\,\Gamma((d+1)/2)}
    \left(
        1-\frac{(y-m_u)^2}{R^2}
    \right)^{(d-1)/2},
    \qquad |y-m_u|<R
    \label{eq:projected-ball-density}
\end{equation}
and $f_{Y_u}(y)=0$ outside this interval.
\end{proposition}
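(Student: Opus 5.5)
We compute the density of $Y_u=\langle u,X\rangle$ directly by slicing the ball, reusing the cross-section computation from the proof of \Cref{lem:ball-cf-integral}. First reduce to the centered case: write $X=c+X_0$ with $X_0\sim\operatorname{Unif}(\mathbb B_R^d)$, so $Y_u=m_u+\langle u,X_0\rangle$. The density of $Y_u$ is then the density of $\langle u,X_0\rangle$ translated by $m_u$. Next use rotational invariance of $\operatorname{Unif}(\mathbb B_R^d)$ to take $u=e_1$. The task thus reduces to finding the marginal density of the first coordinate $x_1$ of a uniform point in $\mathbb B_R^d$.

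For fixed $x_1=s\in(-R,R)$, the slice of the ball is a $(d-1)$-ball of radius $\sqrt{R^2-s^2}$, with volume $V_{d-1}(R^2-s^2)^{(d-1)/2}$. Integrating the constant density $1/(V_dR^d)$ over this slice, via Fubini, gives
\[
f_{x_1}(s)=\frac{V_{d-1}}{V_dR^d}(R^2-s^2)^{(d-1)/2}=\frac{V_{d-1}}{V_d R}\Bigl(1-\frac{s^2}{R^2}\Bigr)^{(d-1)/2}
\]
for $|s|<R$, and $f_{x_1}(s)=0$ otherwise, since the slice is empty. The ratio of ball volumes is
\[
\frac{V_{d-1}}{V_d}=\frac{\Gamma(d/2+1)}{\sqrt\pi\,\Gamma((d+1)/2)},
\]
which is the same constant already obtained in \Cref{lem:ball-cf-integral}. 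Substituting $s=y-m_u$ yields \eqref{eq:projected-ball-density}.

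As an optional consistency check, we can confirm that \eqref{eq:projected-ball-density} is the density whose Fourier transform is \eqref{eq:projected-ball-cf}. Multiplying the density by $e^{isy}$ and integrating, then changing variables $y=m_u+Ru$, reproduces $e^{is m_u}$ times the integral in \eqref{eq:ball-cf-integral} with $z=R|s|$. One can also check normalization via the Beta integral $\int_{-1}^1(1-u^2)^{(d-1)/2}du=\sqrt\pi\,\Gamma((d+1)/2)/\Gamma(d/2+1)$.

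There is no real obstacle here. The only points needing care are the Fubini/slicing justification and the degenerate case $d=1$, where the exponent is $0$ and the formula gives the uniform density $1/(2R)$ on $(-R,R)$. Both are routine.
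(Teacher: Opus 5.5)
Your proof is correct and follows the paper's route: the paper's proof invokes the cross-section computation from the proof of \Cref{lem:ball-cf-integral} and notes that integrating the resulting density against $e^{isy}$ recovers \eqref{eq:projected-ball-cf}, which is exactly your slicing argument plus your consistency check. You have simply written out what the paper leaves implicit: the centering and rotation of $u$ to $e_1$, the volume ratio $V_{d-1}/V_d$, the normalization, and the $d=1$ case.
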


\begin{proof}
This follows from the cross-section argument in the proof of \Cref{lem:ball-cf-integral}. Integrating this density against $e^{isy}$ recovers \eqref{eq:projected-ball-cf}.
\end{proof}

\begin{remark}
For the centered unit ball, the hypergeometric expressions of \Cref{prop:ball-cf-hypergeometric} reduce to:
\begin{equation}
      \varphi_{Y_u}(s)
    =
    {}_0F_1\!\left(;\frac d2+1;-\frac{s^2}{4}\right)
    =
    e^{-i|s|}
    {}_1F_1\!\left(
        \frac{d+1}{2};d+1;2i|s|
    \right)
\end{equation}
\end{remark}


\subsubsection{Reference Scale Used in Training}

LocaLeReg centers each cluster and matches the resulting residuals to a common uniform ball distribution reference. The radius is chosen to give identity covariance.

\begin{lemma}[Covariance of the uniform ball]
\label{lem:ball-covariance}
If $X_0\sim\operatorname{Unif}(\mathbb B_R^d)$, then:
\begin{equation}
    \operatorname{Cov}(X_0)
    =
    \frac{R^2}{d+2}I_d
\end{equation}
\end{lemma}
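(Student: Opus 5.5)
The plan is to use symmetry to reduce the covariance to a single scalar, and then compute that scalar as a radial integral. The mean of $X_0$ is zero, since the density $\frac{1}{V_dR^d}\mathbf 1_{\mathbb B_R^d}$ is invariant under $x\mapsto -x$. Hence $\operatorname{Cov}(X_0)=\mathbb E[X_0X_0^\top]$.

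\textbf{Step 1: reduce to a multiple of the identity.} The uniform ball law is invariant under every orthogonal $Q$, so $\mathbb E[X_0X_0^\top]=Q\,\mathbb E[X_0X_0^\top]Q^\top$ for all $Q\in O(d)$. Two special choices of $Q$ finish this step:
\begin{itemize}
\item reflections in a single coordinate kill the off-diagonal entries;
\item coordinate permutations make the diagonal entries equal.
\end{itemize}
Therefore $\operatorname{Cov}(X_0)=\sigma^2 I_d$, where $\sigma^2=\tfrac1d\mathbb E\|X_0\|_2^2$.

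\textbf{Step 2: compute $\mathbb E\|X_0\|^2$ in polar coordinates.} Since $\operatorname{Vol}(\mathbb B_r^d)=V_d r^d$, the radius $\rho=\|X_0\|$ satisfies $\mathbb P(\rho\le r)=(r/R)^d$. Its density is therefore $d r^{d-1}/R^d$ on $[0,R]$, and
\[
\mathbb E\rho^2=\int_0^R r^2\,\frac{d\,r^{d-1}}{R^d}\,dr=\frac{d}{d+2}R^2 .
\]
Dividing by $d$ gives $\sigma^2=R^2/(d+2)$, which is the claim.

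As a cross-check one could instead use the projected density of \Cref{prop:projected-ball-density} with $c=0$. Substituting $y=Ru$, its second moment is $R^2$ times a ratio of Beta integrals, $\frac{B(3/2,(d+1)/2)}{B(1/2,(d+1)/2)}=\frac{1}{d+2}$. A second cross-check is to expand the ${}_0F_1$ form from \Cref{prop:ball-cf-hypergeometric}: it gives $\varphi(s)=1-\frac{R^2s^2}{2(d+2)}+O(s^4)$, and the $s^2$ coefficient again yields variance $R^2/(d+2)$ in every direction.

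None of these steps is a real obstacle. The only care point is deriving the radial law correctly from the volume scaling, so the route via $\mathbb P(\rho\le r)=(r/R)^d$ is the cleanest.
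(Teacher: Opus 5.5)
Your proposal is correct and takes the same route as the paper. The paper proves the lemma in one line, from rotational symmetry together with $\mathbb{E}\|X_0\|_2^2=dR^2/(d+2)$; you supply both ingredients explicitly, namely the reduction $\operatorname{Cov}(X_0)=\tfrac1d\,\mathbb{E}\|X_0\|_2^2\,I_d$ and the radial-law computation, and your Beta-ratio and ${}_0F_1$ cross-checks are also correct.
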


\begin{proof}
This follows from rotational symmetry and $\mathbb{E}\|X_0\|_2^2=dR^2/(d+2)$.
\end{proof}

\begin{corollary}[LocaLeReg reference]
\label{cor:localereg-reference}
The training target is:
\begin{equation}
     P_{\mathrm{ball}}
    =
    \operatorname{Unif}\!\left(B_{\sqrt{d+2}}^d\right)
\end{equation}
which has identity covariance and whose one-dimensional projections have characteristic function:
\begin{equation}
    \varphi_d(s)
    =
     2^{d/2}\Gamma\!\left(\frac d2+1\right)
    \frac{J_{d/2}(\sqrt{d+2}\,|s|)}
         {(\sqrt{d+2}\,|s|)^{d/2}},
    \qquad s\in\mathbb{R}
    \label{eq:localereg-reference-cf}
\end{equation}
\end{corollary}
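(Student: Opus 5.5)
The corollary combines \Cref{lem:ball-covariance} with \Cref{cor:scalar-reference}, so the plan has two short steps.

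\emph{Identity covariance.} I would apply \Cref{lem:ball-covariance} with $R=\sqrt{d+2}$, which gives $\operatorname{Cov}(X_0)=\frac{d+2}{d+2}I_d=I_d$. To make this self-contained, I would also justify the second moment used in that lemma. The radial density of $\|X_0\|_2$ is $d r^{d-1}/R^d$ on $[0,R]$, so
\[
\mathbb{E}\|X_0\|_2^2=\int_0^R r^2\,\frac{d r^{d-1}}{R^d}\,dr=\frac{dR^2}{d+2}.
\]
Rotational invariance then forces $\operatorname{Cov}(X_0)=cI_d$. Taking traces gives $c=R^2/(d+2)$, and the mean is zero by the symmetry $X_0\mapsto -X_0$.

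\emph{One-dimensional projections.} I would invoke \Cref{prop:projected-ball-cf} with center $c=0$. The phase factor $e^{is\langle u,c\rangle}$ is then identically $1$, so the projected characteristic function is
\[
2^{d/2}\Gamma\!\left(\tfrac d2+1\right)\frac{J_{d/2}(R|s|)}{(R|s|)^{d/2}}
\]
for every $u\in\mathbb S^{d-1}$. This $u$-independence is exactly \Cref{cor:scalar-reference}. Substituting $R=\sqrt{d+2}$ yields \eqref{eq:localereg-reference-cf}, with the value $\varphi_d(0)=1$ inherited by continuity from \Cref{prop:ball-cf-bessel}.

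Neither step is hard. The only point needing care is that the covariance lemma's short proof relies on the radial second-moment computation above, which I would spell out rather than cite.
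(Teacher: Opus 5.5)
Your proposal is correct and follows the paper's own proof, which simply sets $R=\sqrt{d+2}$ in \Cref{lem:ball-covariance} and \Cref{cor:scalar-reference}. Your explicit radial computation of $\mathbb{E}\|X_0\|_2^2=dR^2/(d+2)$ is also correct; it spells out what the paper's one-line proof of the covariance lemma only asserts.
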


\begin{proof}
Set $R=\sqrt{d+2}$ in \Cref{lem:ball-covariance} and \Cref{cor:scalar-reference}.
\end{proof}

For the $d=16$ projector used in our experiments, the target radius is $\sqrt{18}$ (see \Cref{sec:ball-radius} for more discussion on the scale equivalence of target radius).


\subsection{LocaLeJEPA Algorithm}
\label{sec:locale-algorithm}

LocaLeJEPA combines the multi-view prediction objective with distribution matching to a uniform ball within learned neighborhoods. The computation consists of constructing a shared neighborhood partition, centering each neighborhood independently, and averaging a sliced Epps--Pulley discrepancy over sufficiently populated neighborhoods.

\paragraph{Neighborhood Construction and Centering.}
For each image, we compute a reference embedding $\bar z_i$ by averaging its projected reference views. Cosine-similarity $K$-means is then applied to these reference embeddings, using at most ten iterations. This produces a partition $\{C_q\}_{q=1}^{K}$ of the images, shared across all views: every view of a given image belongs to the same neighborhood.

For a neighborhood $C_q$ containing $n_q=|C_q|$ images, we compute a separate mean within each view and form the residuals:
\begin{equation}
    \mu_q^{(v)}
    =
    \frac{1}{n_q}\sum_{i\in C_q}z_i^{(v)},
    \qquad
    r_i^{(v)}
    =
    z_i^{(v)}-\mu_q^{(v)}
\end{equation}
Gradients propagate through both the projected embeddings and their cluster mean, while the neighborhood assignments remain detached. The residuals are matched to
\begin{equation}
    P_{\mathrm{ball}}
    =
    \operatorname{Unif}\!\left(B_{\sqrt{d+2}}^d\right)
\end{equation}
where $d$ is the projector dimension. This reference has identity covariance, and centering allows the regularizer to constrain the distribution within each neighborhood without driving all neighborhood means to a common origin.

\paragraph{Sliced Distribution Matching.}
For a direction $u\in\mathbb{S}^{d-1}$, the empirical characteristic function of the projected residuals is:
\begin{equation}
    \widehat{\varphi}_{q,v,u}(s)
    =
    \frac{1}{n_q}
    \sum_{i\in C_q}
    e^{is\langle u,r_i^{(v)}\rangle}
\end{equation}
Let $\varphi_d(s)$ denote the characteristic function of a one-dimensional projection of $P_{\mathrm{ball}}$, as in \eqref{eq:localereg-reference-cf}. Using $S$ directions $\{u_\ell\}_{\ell=1}^{S}$ sampled uniformly from $\mathbb{S}^{d-1}$, we compute:
\begin{equation}
    D_{q, v}
    =
    \frac{n_q}{S}
    \sum_{\ell=1}^{S}
    \int_{\mathbb{R}}
    \left|
        \widehat{\varphi}_{q,v,u_\ell}(s)-\varphi_d(s)
    \right|^2 w(s)\,ds
    \label{eq:localereg-sliced-discrepancy}
\end{equation}
We use $S=256$ directions and the Gaussian frequency-weighting function $w(s)=e^{-s^2/2}$. The integral is approximated over $[-3,3]$ using the trapezoidal rule, exploiting symmetry to evaluate 17 equally spaced frequencies on $[0,3]$. \revred{ViT-L/14 uses 1024 directions and the range $[-5,5]$ (\Cref{app:imagenet-pretrain}).} For a fixed projector dimension, the same reference characteristic function is used for every cluster and view.

\paragraph{Minimum Neighborhood Size.}
Although the Epps--Pulley discrepancy is defined for any nonempty sample, very small neighborhoods provide limited information about the shape of a distribution. Centering further constrains their empirical geometry. For instance, a singleton becomes identically zero, while a two-point neighborhood reduces to a pair of opposite residuals. \rev{As a result}, a discrepancy computed from very few points need not reliably distinguish the uniform ball target from alternative distributions.
We therefore exclude neighborhoods containing fewer than four images. This \rev{minimum size is a heuristic choice. It is neither} a theoretical condition for the statistic to be defined \rev{nor} a guarantee of reliable distribution discrimination. 

\paragraph{Sensitivity to Sample Size and Dimension.}
To characterize this finite-sample limitation, we compare the statistic on independent samples drawn either from the uniform ball of radius $\sqrt{d+2}$ or from $\mathcal{N}(0,I_d)$. The two distributions have the same covariance, so the comparison tests sensitivity to differences beyond their first two population moments. Samples are evaluated directly, without clustering. For each setting, we perform 200 trials and summarize the separation between the resulting statistic values as:
\begin{equation}
    \mathrm{sep}
    =
    \frac{
        \bar D_{\mathrm{gauss}}-\bar D_{\mathrm{ball}}
    }{
        \sqrt{
            (s_{\mathrm{gauss}}^2+s_{\mathrm{ball}}^2)/2
        }
    }
    \label{eq:sep}
\end{equation}
where $\bar D$ and $s$ denote the empirical mean and standard
deviation of the sliced EP discrepancy across trials. 
As such, a positive value indicates that the ball receives a lower discrepancy on average.

At $d=16$, \Cref{tab:sensitivity-n} shows that separation with centering increases from $0.81$ at the minimum accepted size $n=4$ to $1.10$ at $n=8$ and $3.54$ at $n=256$. The corresponding uncentered values are lower, with separation remaining negative in the reported trials for $n\leq32$ and reaching $0.66$ at $n=256$. Thus, centering improves separation in this experiment, but the minimum accepted neighborhood size still provides only modest discrimination between these covariance-matched targets.

\Cref{tab:sensitivity-d} \rev{also} shows that discrimination weakens as the projector dimension increases. At $n=256$, separation decreases from $6.29$ at $d=4$ to $3.54$ at $d=16$, $0.91$ at $d=64$, and $0.16$ at $d=256$. This is expected: the one-dimensional marginal of the covariance-normalized uniform ball on $\mathbb{R}^d$ has density proportional to $\left(1 - t^2/(d+2)\right)^{(d-1)/2}$, which is also the one-dimensional marginal of the uniform distribution on the $(d+1)$-dimensional sphere of radius $\sqrt{d+2}$ in $\mathbb{R}^{d+2}$, and therefore converges to the standard Gaussian as $d$ grows~\rev{\citep{diaconis1987dozen}}.

\rev{These} observations expose a tradeoff between locality and statistical resolution. Increasing $K$ produces finer neighborhoods but reduces their typical size, approximately $N/K$ for a balanced partition of a batch of $N$ images. Larger neighborhoods provide more information for distribution matching, whereas the four-image threshold only excludes the smallest neighborhoods. Our sensitivity study characterizes this tradeoff under independent sampling, but it does not establish an optimal cluster size for learned neighborhoods.

\begin{table}[htbp]
    \caption{\emph{Sample size, Centering and Projector Dimension Sensitivity.} In this table, we show the sensitivity of the sliced Epps--Pulley discrepancy to sample size, centering, and projector dimension, with 200 trials per setting. ``Ball'' and ``Gaussian'' report the mean discrepancy $\bar D_{\mathrm{ball}}$ and $\bar D_{\mathrm{gauss}}$ for samples drawn from $\operatorname{Unif}(B_{\sqrt{d+2}}^d)$ and $\mathcal{N}(0,I_d)$, respectively. Both are evaluated against the same uniform ball distribution target. ``Centered'' denotes subtraction of the empirical sample mean before evaluating the discrepancy. (a) Sample size $n$ at $d=16$, with and without centering. (b) Projector dimension $d$ at $n=256$, with centering. ``sep.'' is the standardized difference defined in Equation~\eqref{eq:sep}\rev{, and} pooled $s$ is its denominator.}
    \label{tab:sensitivity}
    \centering\small
    \setlength{\tabcolsep}{4pt}
    \begin{subtable}[t]{0.58\linewidth}
        \centering
        \caption{Cluster size and centering, $d=16$.}
        \label{tab:sensitivity-n}
        \begin{tabular}{r rrr rrr}
            \toprule
                & \multicolumn{3}{c}{centered} & \multicolumn{3}{c}{uncentered}              \\
            \cmidrule(lr){2-4}\cmidrule(l){5-7}
            $n$ & ball  & Gaussian & sep.  & ball  & Gaussian & sep.    \\
            \midrule
            2   & 0.505 & 0.515    & 0.37  & 1.067 & 1.045    & $-0.29$ \\
            4   & 0.533 & 0.551    & 0.81  & 1.071 & 1.042    & $-0.33$ \\
            8   & 0.543 & 0.570    & 1.10  & 1.071 & 1.060    & $-0.11$ \\
            16  & 0.551 & 0.580    & 1.23  & 1.076 & 1.061    & $-0.15$ \\
            32  & 0.551 & 0.589    & 1.52  & 1.084 & 1.065    & $-0.20$ \\
            64  & 0.553 & 0.605    & 2.04  & 1.072 & 1.080    & 0.08    \\
            128 & 0.554 & 0.629    & 2.64  & 1.078 & 1.098    & 0.21    \\
            256 & 0.555 & 0.678    & 3.54  & 1.081 & 1.146    & 0.66    \\
            \bottomrule
        \end{tabular}
    \end{subtable}
    \hfill
    \begin{subtable}[t]{0.38\linewidth}
        \centering
        \caption{Embedding dimension, $n=256$.}
        \label{tab:sensitivity-d}
        \begin{tabular}{rrrrr}
            \toprule
            $d$ & ball  & Gaussian & pooled $s$ & sep. \\
            \midrule
            4   & 0.532 & 1.596    & 0.169      & 6.29 \\
            16  & 0.555 & 0.678    & 0.035      & 3.54 \\
            64  & 0.564 & 0.580    & 0.018      & 0.91 \\
            256 & 0.569 & 0.572    & 0.017      & 0.16 \\
            \bottomrule
        \end{tabular}
    \end{subtable}
\end{table}

\paragraph{Aggregation and Training Objective.}
Let $\mathcal{Q}
    =
    \{q:|C_q|\geq4\}$
index the retained neighborhoods. When $\mathcal{Q}$ is nonempty, the regularizer is averaged uniformly over retained neighborhoods and views:
\begin{equation}
     \mathcal{L}_{\mathrm{LocaLeReg}}
    =
    \frac{1}{|\mathcal{Q}|V}
    \sum_{q\in\mathcal{Q}}
    \sum_{v=1}^{V}
    D_{q,v}
\end{equation}
If no neighborhood meets the minimum size, we instead apply the same centered uniform ball distribution regularizer to the complete batch, separately within each view. The training objective is:
\begin{equation}
     \mathcal{L}
    =
    (1-\lambda)\mathcal{L}_{\mathrm{pred}}
    +
    \lambda\mathcal{L}_{\mathrm{LocaLeReg}}
\end{equation}
\Cref{alg:local-ballreg} summarizes the computation. The neighborhood count $K$ is either fixed or follows the epoch-dependent schedule described in \Cref{app:shared-protocol}.

\begin{algorithm}[htbp]
\caption{LocaLeJEPA objective with Uniform-ball local regularization}
\label{alg:local-ballreg}
\begin{algorithmic}[1]
\Require Projected embeddings
    $\{z_i^{(v)}\}_{i=1,\ldots,N;\,v=1,\ldots,V}
    \subset\mathbb{R}^d$
\Require Neighborhood count $K$, weight $\lambda$,
    sketch count $S$, minimal neighborhood size $n_{\mathrm{min}}$

\State Compute each image's reference embedding $\bar z_i$
\State $\displaystyle
    \mathcal{L}_{\mathrm{pred}}
    \gets
    \frac{1}{NVd}
    \sum_{i=1}^{N}\sum_{v=1}^{V}
    \|z_i^{(v)}-\bar z_i\|_2^2$

\State Compute a shared partition using $K$-means:
    \[
        \{C_1,\ldots,C_K\}
        \gets
        \operatorname{KMeans}
        \left(
            \{\operatorname{stopgrad}(\bar z_i)\}_{i=1}^{N},
            K
        \right)
    \]
\State $\mathcal{C}\gets\{C_q:|C_q|\geq n_{\mathrm{min}}\}$ 
\If{$\mathcal{C}=\varnothing$}
    \State $\mathcal{C}\gets\{\{1,\ldots,N\}\}$
        \Comment{Fall back to the complete batch}
\EndIf
\State $P_{\mathrm{ball}}\gets
    \operatorname{Unif}(B_{\sqrt{d+2}}^d)$
\State $\mathcal{L}_{\mathrm{LocaLeReg}}\gets0$

\For{each neighborhood $C\in\mathcal{C}$}
    \For{each view $v=1,\ldots,V$}
        \State $\displaystyle
            \mu_C^{(v)}
            \gets
            \frac{1}{|C|}
            \sum_{i\in C}z_i^{(v)}$
        \State $\displaystyle
            R_C^{(v)}
            \gets
            \{z_i^{(v)}-\mu_C^{(v)}:i\in C\}$
        \State $\displaystyle
            \mathcal{L}_{\mathrm{LocaLeReg}}
            \gets
            \mathcal{L}_{\mathrm{LocaLeReg}}
            +
            D_{q,v}$
    \EndFor
\EndFor

\State $\displaystyle
    \mathcal{L}_{\mathrm{LocaLeReg}}
    \gets
    \frac{\mathcal{L}_{\mathrm{LocaLeReg}}}
         {V|\mathcal{C}|}$
\State \Return $(1-\lambda)\mathcal{L}_{\mathrm{pred}}
    +\lambda\mathcal{L}_{\mathrm{LocaLeReg}}$
\end{algorithmic}
\end{algorithm}

\subsection{Scale Equivalence Across Different Ball Radii}
\label{sec:ball-radius}
As we have already mentioned, in our experiments we set $R=\sqrt{d+2}$, so that the target is whitened, i.e.\ has identity covariance. In fact, as we will see next, this choice \rev{only fixes} a unit of length\rev{ and is not a separate} hyperparameter.

\begin{lemma}\label{lem:radius-scale}
For a random vector $X\in\mathbb R^d$ and a unit vector $u$, let $\varphi_X(t)=\mathbb E\,[e^{\mathrm i t u^\top X}]$, $t\in \mathbb{R}$. Then for every $\sigma>0$ and $t\in\mathbb R$:
\begin{equation}
     \varphi_X(t/\sigma)=\varphi_{X/\sigma}(t)
\end{equation}
In particular, for $\sigma=R/\sqrt{d+2}$, $\varphi_{d,R}(t/\sigma)=\varphi_{d,\sqrt{d+2}}(t)$, and if the characteristic functions in the Epps--Pulley statistic are evaluated at $t/\sigma$ instead of $t$, with the Gaussian weights unchanged, the statistic at radius $R$ on residuals $r$ equals the statistic at $\sqrt{d+2}$ on $r/\sigma$.
\end{lemma}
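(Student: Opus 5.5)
The identity $\varphi_X(t/\sigma)=\varphi_{X/\sigma}(t)$ is immediate from the definition, so I would begin there. By definition, $\varphi_X(t/\sigma)=\mathbb E[e^{\mathrm i (t/\sigma) u^\top X}]=\mathbb E[e^{\mathrm i t u^\top (X/\sigma)}]=\varphi_{X/\sigma}(t)$. This uses only linearity of $u^\top$ and holds for every real $t$ and every $\sigma>0$. The same one-line computation applies verbatim to the empirical characteristic function of \Cref{eq:emp-cf}, since the empirical measure of the residuals is itself a distribution. Explicitly, $\widehat\varphi_{r}(t/\sigma)=|C_q|^{-1}\sum_{i}e^{\mathrm i t\langle r_i/\sigma,u\rangle}=\widehat\varphi_{r/\sigma}(t)$.

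Next I would identify the scaled reference. If $Y\sim\operatorname{Unif}(\mathbb B_R^d)$, then $Y/\sigma\sim\operatorname{Unif}(\mathbb B^d_{R/\sigma})$, because the map $x\mapsto x/\sigma$ sends the ball of radius $R$ onto the ball of radius $R/\sigma$ and pushes forward uniform measure to uniform measure (the Jacobian is constant). With $\sigma=R/\sqrt{d+2}$ we get $R/\sigma=\sqrt{d+2}$. The first step then gives $\varphi_{d,R}(t/\sigma)=\varphi_{d,\sqrt{d+2}}(t)$. As a cross-check, the same follows directly from \Cref{cor:scalar-reference}: the formula depends on $R$ and $s$ only through $R|s|$, and $R|t|/\sigma=\sqrt{d+2}\,|t|$.

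Finally I would substitute both identities into the Epps--Pulley statistic of \Cref{eq:local-discrepancy}, evaluated at the rescaled frequencies $t/\sigma$ with the weight $e^{-t^2/2}$ kept fixed. Pointwise in $t$ and $u$ the integrand satisfies
\[
\bigl|\widehat\varphi_{r,u}(t/\sigma)-\varphi_{d,R}(t/\sigma)\bigr|^2=\bigl|\widehat\varphi_{r/\sigma,u}(t)-\varphi_{d,\sqrt{d+2}}(t)\bigr|^2 .
\]
Integrating against the unchanged weight, averaging over $u$ (or over the $S$ sketched directions), and multiplying by the unchanged factor $|C_q|$ gives equality of the two statistics. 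This also holds for the trapezoidal discretization, since the quadrature nodes are in $t$.

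No step is a real obstacle. The only point needing care is bookkeeping: the rescaling must be applied to the frequency argument and not as a change of variables in the integral. A change of variables would alter the Gaussian weight and introduce a Jacobian factor $\sigma$, which is exactly why the statement keeps the weights unchanged.
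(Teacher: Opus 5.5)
Your proposal is correct and follows essentially the same route as the paper: the definitional identity $\varphi_X(t/\sigma)=\varphi_{X/\sigma}(t)$, applied to both the rescaled uniform-ball reference and the empirical characteristic function of the residuals, gives a pointwise-equal Epps--Pulley integrand, and the Gaussian weight is left unchanged. Your cross-check via the Bessel formula's dependence on $R|s|$ alone and your remark about the trapezoidal nodes are harmless extras.
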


\begin{proof}
The identity follows from the definition: $\mathbb E\,\left[e^{\mathrm i (t/\sigma)\,u^\top X}\right]=\mathbb E\,\left[e^{\mathrm i t\,u^\top (X/\sigma)}\right]$.

For $\sigma=R/\sqrt{d+2}$ and $X\sim\operatorname{Unif}\left(\mathbb B^d_R\right)$ we have $X/\sigma\sim\operatorname{Unif}\left(\mathbb B^d_{\sqrt{d+2}}\right)$, so the identity gives $\varphi_{d,R}(t/\sigma)=\varphi_{d,\sqrt{d+2}}(t)$. Applied to the empirical distribution of the residuals, it gives $\widehat\varphi_r(t/\sigma)=\widehat\varphi_{r/\sigma}(t)$, so the two statistics have the same integrand at every $t$ and $u$, and the Gaussian weights depend on neither $R$ nor $r$.
\end{proof}

The invariance loss also has to be rescaled accordingly by dividing it by $\sigma^2$:

\begin{corollary}\label{cor:radius-inv}
Let $\sigma=R/\sqrt{d+2}$, and let $\mathcal L^{R}_{\revred{\rm LocaLeReg}}$ evaluate the characteristic functions at $t/\sigma$ as in \Cref{lem:radius-scale}. For every batch of embeddings $z$:
\begin{equation}
    \mathcal L^{R}_{\revred{\rm LocaLeReg}}(z)=\mathcal L^{\sqrt{d+2}}_{\revred{\rm LocaLeReg}}(z/\sigma),
 \qquad
 \mathcal L_{\rm pred}(z)=\sigma^{2}\,\mathcal L_{\rm pred}(z/\sigma)
\end{equation}
\rev{Hence}, $\mathcal L_R=(1-\lambda)\,\sigma^{-2}\,\mathcal L_{\rm pred}+\lambda\,\mathcal L^{R}_{\revred{\rm LocaLeReg}}$ satisfies $\mathcal L_R(z)=\mathcal L_{\sqrt{d+2}}(z/\sigma)$.
\end{corollary}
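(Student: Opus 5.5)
The plan is to derive \Cref{cor:radius-inv} directly from \Cref{lem:radius-scale}, together with two structural facts. First, the neighborhood partition is invariant under $z\mapsto z/\sigma$. Second, the centering and averaging operations are linear.

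\textbf{Step 1: the regularizer.} Cosine-similarity $K$-means on $\bar z_i$ depends only on directions, and $\overline{(z/\sigma)}_i=\bar z_i/\sigma$. The assignments are therefore identical for $z$ and $z/\sigma$, provided the initialization is held fixed or is itself scale invariant. It follows that the clusters $C_q$, the retained set $\mathcal Q$ (which depends only on cluster sizes), and the weights $|C_q|$ all coincide.

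Centering is linear: the cluster means satisfy $\mu_q^{(v)}(z/\sigma)=\mu_q^{(v)}(z)/\sigma$, and hence the residuals satisfy $r_i^{(v)}(z/\sigma)=r_i^{(v)}(z)/\sigma$. Applying \Cref{lem:radius-scale} for each direction $u$ and each frequency $t$ gives
\[
\widehat\varphi_{q,v,u}^{\,r}(t/\sigma)=\widehat\varphi_{q,v,u}^{\,r/\sigma}(t),
\qquad
\varphi_{d,R}(t/\sigma)=\varphi_{d,\sqrt{d+2}}(t).
\]
The integrands of $D_{q,v}$ therefore agree pointwise. The Gaussian weight $e^{-t^2/2}$ is unchanged, so each $D_{q,v}$ agrees between the two settings. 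This holds both for the exact expectation over $u$ and for the sliced sum over $S$ fixed directions, and it holds for the trapezoidal grid as well, since that grid is likewise evaluated at $t/\sigma$.

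Averaging over $\mathcal Q$ and $v$ then gives $\mathcal L^R_{\rm LocaLeReg}(z)=\mathcal L^{\sqrt{d+2}}_{\rm LocaLeReg}(z/\sigma)$. The fallback case $\mathcal Q=\varnothing$ is identical, because the same argument applies to the whole batch as a single cluster.

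\textbf{Step 2: the prediction term.} $\mathcal L_{\rm pred}$ is a sum of squared norms of differences $z_i^{(v)}-\bar z_i$, and these differences are linear in $z$. It is therefore homogeneous of degree two: $\mathcal L_{\rm pred}(z)=\sigma^2\mathcal L_{\rm pred}(z/\sigma)$.

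\textbf{Step 3: combining.} Substituting both identities into $\mathcal L_R$ gives
\[
(1-\lambda)\sigma^{-2}\sigma^{2}\mathcal L_{\rm pred}(z/\sigma)+\lambda\mathcal L^{\sqrt{d+2}}_{\rm LocaLeReg}(z/\sigma)=\mathcal L_{\sqrt{d+2}}(z/\sigma).
\]

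\textbf{Main obstacle.} The only delicate point is Step 1's claim that the clustering is scale invariant. This holds because $K$-means uses cosine similarity, the clustering is stop-gradient, and its initialization does not depend on scale, for example when it is seeded by fixed random indices. I would state this assumption explicitly. Everything else is linear bookkeeping.
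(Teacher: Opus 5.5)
Your proposal is correct and follows the paper's proof essentially step for step: the $K$-means partition is invariant under $z\mapsto z/\sigma$, the residuals scale by $1/\sigma$ so that \Cref{lem:radius-scale} applies cluster by cluster, $\mathcal L_{\rm pred}$ is homogeneous of degree two, and the final statement follows from a weighted sum of the two identities. Your additional remarks on the retained set $\mathcal Q$, the fallback case, the quadrature grid, and the scale-invariant initialization make explicit some details that the paper leaves implicit.
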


\begin{proof}
Since $k$-means assignments do not change when all points are rescaled by the same factor, $z$ and $z/\sigma$ give the same clusters, with residuals divided by $\sigma$\rev{, and} \Cref{lem:radius-scale} applied to every cluster gives the first identity.

For the second, $\bar z_i/\sigma$ is the mean of the reference views of $z/\sigma$, so:
\begin{equation}
    \mathcal L_{\rm pred}(z/\sigma)
 =\frac1{NVd}\sum_{i=1}^N\sum_{v=1}^V\Bigl\lVert \frac{z_i^{(v)}}{\sigma}-\frac{\bar z_i}{\sigma}\Bigr\rVert^2
 =\frac{\sigma^{-2}}{NVd}\sum_{i=1}^N\sum_{v=1}^V\lVert z_i^{(v)}-\bar z_i\rVert^2
 =\sigma^{-2}\,\mathcal L_{\rm pred}(z)
\end{equation}
Multiplying the second identity by $(1-\lambda)\sigma^{-2}$, the first by $\lambda$, and adding gives the last statement.
\end{proof}

The effect of the two scale corrections is examined in a radius ablation (\Cref{tab:radius}). With both corrections, the run at $R=1$ solves the same problem as the run at $R=\sqrt{18}$ (\Cref{cor:radius-inv}), and the two agree closely. The gap of $4$ points in probe accuracy is not explained by the objective, which is identical, and we attribute it to optimization. With only the frequencies rescaled, the objective is the one at $\sqrt{18}$ with $\lambda$ replaced by $\lambda/(\lambda+(1-\lambda)\sigma^2)=0.818$. The regularizer is matched better than at $\sqrt{18}$, but the normalized invariance loss is $2.9$ times larger and the probe accuracy drops to $20.5$. Without either correction, the two errors partly cancel: the invariance loss is weighted $\sigma^{-2}=18$ times less, but the regularizer evaluated on the compressed frequency range is about $90$ times weaker against collapse. The representation does not collapse and its invariance is close to that at $\sqrt{18}$, but the probe accuracy is $16$ points lower. All runs use a single seed.

\begin{table}[thbp]
     \caption{\emph{Radius Comparison.} In this table, we show the Radius comparison over Imagenette, $K=4$, $N=256$, $\lambda=0.2$, 100 epochs. Freq.: characteristic functions evaluated at $t/\sigma$ (\Cref{lem:radius-scale})\rev{.} $\sigma^{-2}$: invariance loss divided by $\sigma^2$ (\Cref{cor:radius-inv}). At $R=\sqrt{18}$, $\sigma=1$ and both corrections are the identity. $^*$Without frequency rescaling, the discrepancy for standardized embeddings is evaluated at frequencies $\sigma t_j$ instead of $t_j$, while the Gaussian quadrature weights remain defined on $t_j$. Thus the regularizer values are not directly comparable.}
     \label{tab:radius}
\centering\small
     \begin{tabular}{lccrrrr}
         \toprule
         $R$ & freq. & $\sigma^{-2}$ & probe & $\mathcal L_{\rm pred}/\sigma^2$ & $\mathcal L_{\revred{\rm LocaLeReg}}$ & $\overline{\Sigma_{ii}}/\sigma^2$ \\
         \midrule
         $\sqrt{18}$ & \multicolumn{2}{c}{$\sigma=1$} & 63.7 & 0.249 & 0.654 & 1.08 \\
         \midrule
         $1$         & --         & --         & 47.5 & 0.294 & $0.021^{*}$ & 0.95 \\
         $1$         & \checkmark & --         & 20.5 & 0.722 & 0.566 & 1.03 \\
         $1$         & \checkmark & \checkmark & 59.7 & 0.283 & 0.668 & 1.07 \\
         \bottomrule
     \end{tabular}
\end{table}

\subsection{Complexity Analysis and Computational Cost}
\label{sec:cost-locality}
LocaLeJEPA evaluates its distribution-matching regularizer independently within
each neighborhood, introducing additional computation through clustering and
cluster-specific statistics. 







\begin{table}[t]
\centering
\caption{\emph{End-to-end Training Cost.}
Wall-clock time per step on one A100-40GB for ViT-S/8
($128{\times}128$, $V{=}4$, $d{=}16$, $N{=}128$, $K{=}8$,
$S{=}256$, $I{=}10$), averaged over 300 repeated steps.}
\label{tab:cost-step}

\vspace{0.3em}

\small
\setlength{\tabcolsep}{4pt}
\renewcommand{\arraystretch}{1.05}

\begin{tabular}{lrr}
\toprule
             & s/step & vs.\ SIGReg \\
\midrule
SIGReg       & 0.263  & ---        \\
\rev{LocaLeJEPA-SIGReg} & 0.277 & $+5.3\%$ \\
LocaLeJEPA   & 0.278  & $+5.7\%$ \\
\bottomrule
\end{tabular}

\end{table}

\textbf{Settings.} Let $N$ denote the batch size, $d$ the projector
dimension, $K$ the number of neighborhoods, $I$ the number of $K$-means
iterations, and $S$ the number of sketch directions. Because the neighborhoods
form a disjoint partition of the batch, the distribution statistic itself remains
linear in the total number of samples. 

\textbf{Theoretical Time and Memory Complexity.} Relative to global SIGReg, the additional
cost is dominated by $K$-means and by the cluster-specific sketch matrices,
yielding:
\begin{equation}
    O(INKd + KdS)
    \quad\text{time},
    \qquad
    O(NK + KdS)
    \quad\text{memory}
    \label{eq:overhead}
\end{equation}
where the $NK$ term corresponds to the $K$-means distance matrix and $KdS$ to
the sketch matrices retained for backpropagation. The additional cost is therefore
linear in $K$ and, for fixed $K$, linear in $N$.

\begin{table}[h]
    \centering
    \caption{\emph{Regularizer Cost as a Function of Batch and Neighborhood Size.}
    Forward--backward runtime and peak allocated memory on one A100-40GB in
    FP32. Values are medians over three seeds with $V{=}4$, $d{=}16$,
    $S{=}256$, $I{=}10$.}
    \label{tab:cost-reg}
    \small
    \setlength{\tabcolsep}{4pt}
    \begin{tabular}{@{}lrrrrr@{}}
        \toprule
                       & \multicolumn{3}{c}{Time (ms)}
                       & \multicolumn{2}{c}{Memory (MiB)} \\
        \cmidrule(lr){2-4}\cmidrule(l){5-6}
                       & SIGReg & \rev{LocaLeJEPA-SIGReg} & LocaLeJEPA
                       & SIGReg & Local \\
        \midrule
        \multicolumn{6}{@{}l}{\emph{$K=8$}} \\
        $N=32$ & 1.24 & 8.89  & 9.97  & 12.8  & 5.4  \\
        $N=64$         & 1.33 & 13.14 & 14.86 & 25.5  & 10.4 \\
        $N=128$        & 1.27 & 14.57 & 16.39 & 51.0  & 18.6 \\
        $N=256$        & 1.24 & 16.25 & 18.02 & 105.0 & 33.2 \\
        $N=512$        & 1.54 & 16.24 & 18.12 & 203.9 & 62.0 \\
        \midrule
        \multicolumn{6}{@{}l}{\emph{$N=256$}} \\
        $K=1$          & 1.28 & 3.18  & 3.33  & 108.0 & 108.0 \\
        $K=2$          & 1.25 & 8.63  & 9.03  & 108.0 & 67.2  \\
        $K=4$          & 1.27 & 11.22 & 12.22 & 108.0 & 45.6  \\
        $K=8$          & \rev{1.24} & \rev{16.25} & \rev{18.02} & \rev{105.0} & 33.2  \\
        $K=16$         & 1.24 & 25.77 & 29.67 & 108.0 & 28.8  \\
        $K=32$         & 1.29 & 44.41 & 51.59 & 108.0 & 26.9  \\
        \bottomrule
    \end{tabular}
\end{table}

\textbf{Empirical Time and Memory Complexity.} At the representative training configuration in
\Cref{tab:cost-step}, \rev{LocaLeJEPA-SIGReg} and LocaLeJEPA increase end-to-end
step time by only $5.3\%$ and $5.7\%$, respectively, relative to global
SIGReg. Peak reserved GPU memory is $22.3$~GiB for all three methods, \rev{so} the additional regularizer state is small relative to the memory required by
the backbone activations. The cost is more visible when isolating the regularizer itself. At $N{=}256$
and $K{=}8$, LocaLeJEPA requires $18.02$~ms for a forward--backward pass,
compared with $1.24$~ms for global SIGReg
(\Cref{tab:cost-reg}). Runtime increases with $K$, consistent with the
$K$-dependent terms in \eqref{eq:overhead}, while its dependence on $N$ is
comparatively weak over the tested range. \rev{Increasing} locality does not increase the peak memory of the
regularizer. At $N{=}256$, allocated memory decreases from $108.0$~MiB at
$K{=}1$ to $26.9$~MiB at $K{=}32$, and is lower than global SIGReg for every
$K\geq2$. This occurs because the largest intermediate tensors scale with the
size of an individual cluster\rev{, which is a fraction of} the full batch. In end-to-end
training, however, these differences are dominated by the backbone activations
and do not affect peak reserved GPU memory.
\section{Ablation Analysis}
\label{sec:ablations}

\definecolor{ablationgold}{HTML}{F4D675}
\definecolor{ablationsilver}{HTML}{DCE1E7}
\definecolor{ablationbronze}{HTML}{DEB18B}

In this section, we ablate three hyperparameters of LocaLeJEPA:
\begin{itemize}
    \item $\lambda$, the regularization weight
    \item $K$, the requested number of neighborhoods formed by $K$-means clustering
    \item $B$, the per-step batch size on which clustering and regularization are performed.
\end{itemize}
We also report $\widehat{K}$, the effective number of neighborhoods remaining after discarding clusters with fewer than four samples. Unlike the quantities above, $\widehat{K}$ is \rev{measured during training}.
Gold, silver and bronze denote the three highest distinct reported mean accuracies within each ranked table\rev{, and} ties at the reported precision share a color.

\paragraph{Training setup.}
We train ViT-S/8 on Imagenette at $128\times128$ resolution for 800 epochs. All LocaLeJEPA runs use a learning rate of $10^{-3}$, ten warm-up epochs, cosine decay to $10^{-6}$, gradient clipping at 1.0, and an effective optimizer batch size of 256. \revred{These runs are separate from the rotation-recovery encoders of \Cref{app:yoda}, whose local recipe keeps the learning rate constant after warm-up (\Cref{tab:yoda-pretrain}).} We compare per-step batch sizes of 64 and 256, with four and one gradient-accumulation steps, respectively, denoted by $64\times4$ and $256\times1$. We report final-epoch online linear-probe validation accuracy as mean $\pm$ standard deviation over three seeds.

\paragraph{The $\lambda \times K$ grid at a fixed batch size, $B=64\times 4$.}
Table~\ref{tab:grid} evaluates regularization weights $\lambda \in \{0.02,0.05,0.1,0.2,0.3\}$ and requested neighborhood counts $K \in \{1,2,4,8,16,32\}$ using the $64\times4$ batch configuration. The highest mean accuracy is obtained with $K=2$ at both $\lambda=0.05$ and $\lambda=0.1$, followed by $K=1$ and $K=4$ at $\lambda=0.1$. At each fixed $\lambda$, the best local configuration ($K>1$) has a higher reported mean accuracy than the global control ($K=1$), except at $\lambda=0.02$, where their rounded means are equal (Table~\ref{tab:better-locality}). These differences vary in magnitude and do not, by themselves, establish statistical significance.

\begin{table}[htbp]
\centering
\caption{\textit{Grid ablations $\lambda \times K$, on Imagenette, 800 epochs, $B=64\times4$.} Accuracy (\%): mean $\pm$ SD, $n=3$. $\dagger$ flags requested mean cluster size below four. }
\label{tab:grid}
\small
\setlength{\tabcolsep}{4pt}
\resizebox{\linewidth}{!}{%
\begin{tabular}{lcccccc}
\toprule
$\lambda$ & $K=1$ & $K=2$ & $K=4$ & $K=8$ & $K=16$ & $K=32$ \\
\midrule
0.02 & \makecell{$90.28 \pm 0.42$} & \makecell{$90.28 \pm 0.18$} & \makecell{$52.00 \pm 5.98$} & \makecell{$30.37 \pm 11.25$} & \makecell{$26.72 \pm 2.97$} & \makecell{$18.63 \pm 5.87$ $\dagger$} \\
0.05 & \makecell{$91.08 \pm 0.14$} & \cellcolor{ablationgold}\makecell{$91.52 \pm 0.13$} & \makecell{$91.21 \pm 0.14$} & \makecell{$86.84 \pm 2.46$} & \makecell{$64.20 \pm 14.80$} & \makecell{$61.04 \pm 9.43$ $\dagger$} \\
0.1 & \cellcolor{ablationsilver}\makecell{$91.38 \pm 0.34$} & \cellcolor{ablationgold}\makecell{$91.52 \pm 0.24$} & \cellcolor{ablationbronze}\makecell{$91.35 \pm 0.17$} & \makecell{$90.78 \pm 0.31$} & \makecell{$88.56 \pm 0.16$} & \makecell{$31.43 \pm 2.78$ $\dagger$} \\
0.2 & \makecell{$90.32 \pm 0.27$} & \makecell{$90.43 \pm 0.41$} & \makecell{$90.82 \pm 0.36$} & \makecell{$90.45 \pm 0.13$} & \makecell{$89.09 \pm 0.48$} & \makecell{$80.96 \pm 0.97$ $\dagger$} \\
0.3 & \makecell{$87.76 \pm 0.09$} & \makecell{$88.32 \pm 0.15$} & \makecell{$89.32 \pm 0.11$} & \makecell{$89.11 \pm 0.26$} & \makecell{$88.16 \pm 0.46$} & \makecell{$81.03 \pm 0.11$ $\dagger$} \\
\bottomrule
\end{tabular}
}
\end{table}

\begin{table}[htbp]
\centering
\caption{\textit{Best fixed local arm versus its $K=1$ control at each $\lambda$, $B=64\times4$.} Accuracy (\%): mean $\pm$ SD, $n=3$. $\Delta$ denotes local minus control accuracy in percentage points\rev{, and} ``Chosen $K$'' identifies the best local setting. Bold \rev{marks} the highest mean in each row, including ties at the displayed precision.}
\label{tab:better-locality}
\small
\setlength{\tabcolsep}{4pt}
\begin{tabular}{lcccc}
\toprule
$\lambda$ & $K=1$ & Best $K>1$ & Chosen $K$ & $\Delta$ \\
\midrule
0.02 & \makecell{$\mathbf{90.28 \pm 0.42}$} & \makecell{$\mathbf{90.28 \pm 0.18}$} & 2 & $-0.01$ \\
0.05 & \makecell{$91.08 \pm 0.14$} & \makecell{$\mathbf{91.52 \pm 0.13}$} & 2 & $+0.44$ \\
0.1 & \makecell{$91.38 \pm 0.34$} & \makecell{$\mathbf{91.52 \pm 0.24}$} & 2 & $+0.14$ \\
0.2 & \makecell{$90.32 \pm 0.27$} & \makecell{$\mathbf{90.82 \pm 0.36}$} & 4 & $+0.50$ \\
0.3 & \makecell{$87.76 \pm 0.09$} & \makecell{$\mathbf{89.32 \pm 0.11}$} & 4 & $+1.56$ \\
\bottomrule
\end{tabular}
\end{table}

Increasing $K$ sharply reduces accuracy under weak regularization. Larger regularization weights mitigate this decline over part of the grid, although \rev{strong} degradation remains at high $K$. In particular, requesting $K=32$ from a batch of 64 would induce $64/32=2$ samples per cluster for a balanced clustering, below the minimum cluster size of four. \rev{As a result}, many clusters may be discarded, leaving only a few neighborhoods contributing to the regularization loss. If no cluster contains at least four samples, the loss falls back to regularizing the entire per-step batch. This filtering provides a plausible contributing mechanism for the poor performance at large requested $K$, but the reported results do not establish it as the sole cause. Table~\ref{tab:effective-K} reports the realized neighborhood counts $\widehat{K}$ averaged over the final ten training epochs.

\begin{table}[htbp]
\centering
\caption{\textit{Reported realized \revred{neighborhood} counts at $\lambda=0.1$}. Fixed-$K$ counts average the last ten epochs\rev{, and} ramp entries describe the terminal part of the schedule.}
\label{tab:effective-K}
\small
\setlength{\tabcolsep}{4pt}
\begin{tabular}{lcccccccc}
\toprule
Batch & $K=1$ & $K=2$ & $K=4$ & $K=8$ & $K=16$ & $K=32$ & fwd & rev \\
\midrule
$64\times4$ & 1.0 & 2.0 & 3.9 & 6.9 & 8.7 & 5.7 & 9.0 & 2.0 \\
$256\times1$ & 1.0 & 2.0 & 4.0 & 7.9 & 15.4 & 27.8 & 15.5 & 2.0 \\
\bottomrule
\end{tabular}
\end{table}

\paragraph{Dynamic \revred{neighborhood} scheduling across $\lambda$ at $B=64\times4$.}
Table~\ref{tab:ramps} examines how increasing or decreasing the requested number of neighborhoods affects performance across regularization weights $\lambda$. The forward schedule increases $K$ from 2 to 16, whereas the reverse schedule decreases it from 16 to 2. 

The forward schedule at $\lambda=0.1$ achieves the highest mean accuracy among the $B=64\times4$ configurations, at $91.57\pm0.20\%$. However, it exceeds the best fixed local configuration by only 0.05 percentage points and the best global control by 0.19 percentage points\rev{. These} summaries alone do not establish a statistically significant improvement. Schedule direction has a much larger effect under weak regularization, as the forward schedule outperforms the reverse schedule by 47.38 percentage points at $\lambda=0.02$ and 18.88 percentage points at $\lambda=0.05$. The gap narrows at stronger regularization weights, with the reverse schedule achieving a slightly higher mean at $\lambda=0.3$. These results favor starting with fewer neighborhoods when regularization is weak, but do not establish \rev{one} schedule direction \rev{as better in all settings}.

\begin{table}[htbp]
\centering
\caption{\textit{Forward and reverse ramps, $B=64\times4$, 800 epochs.} Accuracy (\%): mean $\pm$ SD, $n=3$. }
\label{tab:ramps}
\small
\setlength{\tabcolsep}{4pt}
\begin{tabular}{lccccc}
\toprule
Schedule & $\lambda=0.02$ & $\lambda =0.05$ & $\lambda = 0.1$ & $\lambda= 0.2$ & $\lambda = 0.3$ \\
\midrule
$2\to16$ (fwd) & \makecell{$87.31 \pm 0.25$} & \cellcolor{ablationsilver}\makecell{$91.20 \pm 0.22$} & \cellcolor{ablationgold}\makecell{$91.57 \pm 0.20$} & \cellcolor{ablationbronze}\makecell{$90.45 \pm 0.25$} & \makecell{$88.76 \pm 0.23$} \\
$16\to2$ (rev) & \makecell{$39.93 \pm 6.75$} & \makecell{$72.32 \pm 5.20$} & \makecell{$89.04 \pm 0.43$} & \makecell{$89.48 \pm 0.49$} & \makecell{$88.84 \pm 0.27$} \\
\bottomrule
\end{tabular}
\end{table}

\paragraph{Ablation in $K$ for $\lambda=0.1$ and $B=256\times 1$.}
We further investigate the role of locality by increasing the per-step batch size from $B=64\times4$ to $B=256\times1$. The effective batch size remains unchanged, but clustering now operates on four times as many samples at each step. We fix $\lambda=0.1$, a competitive regularization strength in the preceding ablations, and vary the number of training clusters $K\in\{\rev{1,}2,4,8,16,32\}$.
As shown in Table~\ref{tab:k-ablation-online}, accuracy increases from \rev{$87.44\pm0.41\%$ at $K=1$ through $88.54\pm0.36\%$} at $K=2$ to \rev{$89.58\pm0.05\%$} at $K=8$, before decreasing slightly to \rev{$89.50\pm0.44\%$} and \rev{$88.99\pm0.27\%$} at $K=16$ and $K=32$, respectively. Compared with the $64\times4$ setting, the best observed local mean shifts from $K=2$ to $K=8$, although it is lower (\rev{$89.58\%$} versus $91.52\%$). The clearest difference \rev{appears with} fine partitions: at $K=32$, the $64\times4$ setting reaches only $31.43\pm2.78\%$, whereas the $256\times1$ setting retains \rev{$88.99\pm0.27\%$}. \rev{With} the larger per-step batch \rev{and $\lambda$ fixed at $0.1$}, the highest mean accuracies are attained at $K=8$ and $K=16$\rev{. This suggests} that finer local partitions become more effective when each clustering step has access to more samples\rev{. At fixed $\lambda$, however, a larger $K$ also lowers the effective regularization strength (\Cref{tab:strength-bands})}.

\begin{table}[htbp]
\revstart
\centering
\caption{\textit{Neighborhood-count ablation on Imagenette, 800 epochs, $B=256\times1$, $\lambda=0.1$.} Final-epoch online linear-probe accuracy (\%): mean $\pm$ SD over three pretraining seeds. The row \revred{$\widehat{K}$} gives the realized number of neighborhoods (mean of the last ten epochs, averaged over seeds).}
\label{tab:k-ablation-online}
\small
\setlength{\tabcolsep}{4pt}
\begin{tabular}{lcccccc}
\toprule
$\lambda$ & $K=1$ & $K=2$ & $K=4$ & $K=8$ & $K=16$ & $K=32$ \\
\midrule
0.1 & \makecell{$87.44 \pm 0.41$} & \makecell{$88.54 \pm 0.36$} & \cellcolor{ablationbronze}\makecell{$89.07 \pm 0.33$} & \cellcolor{ablationgold}\makecell{$89.58 \pm 0.05$} & \cellcolor{ablationsilver}\makecell{$89.50 \pm 0.44$} & \makecell{$88.99 \pm 0.27$} \\ 
\revred{$\widehat{K}$} & $1.0$ & $2.0$ & $4.0$ & $7.9$ & $15.5$ & $27.7$ \\ 
\bottomrule
\end{tabular}
\end{table}

{\revstart\paragraph{Effective regularization strength.}\label{sec:effective-strength}
Because the local discrepancies are weighted by neighborhood size and averaged over neighborhoods (\Cref{eq:objective,eq:local-discrepancy}), the regularizer acts with an effective strength $\lambda N/K$, where $N$ is the number of samples in each clustering step. \Cref{tab:strength-bands} groups every cell of the $800$-epoch grid, at both per-step batches, by this quantity. Every cell below $0.4$ collapses (at most $64.20\%$ online accuracy), and every cell between $0.64$ and $8$ is healthy (at least $85.65\%$). The post-hoc linear probe gives the same separation. Above $8$, accuracy declines again. The weight $\lambda$ and the number of neighborhoods $K$ therefore interact, and at fixed $\lambda$, comparing values of $K$ also compares strengths. With $\lambda$ chosen separately for each $K$, the best $K>1$ and $K{=}1$ are within $0.20$ points of each other on Imagenette ($+0.09$ at $B=64\times4$ and $-0.20$ at $B=256\times1$, post-hoc linear probe). On ImageNet-1K, in contrast, locality improves accuracy beyond its effect on strength (\Cref{sec:experiments}).\par}
\begin{table}[htbp]
\revstart
\centering
\caption{\textit{Effective regularization strength orders the Imagenette grid (800 epochs, both per-step batches).} Each cell of the $(B, \lambda, K)$ grid (three pretraining seeds) is placed by its effective strength $\lambda N/K$, with $N$ the per-step batch. For each band, the table gives the number of cells and the minimum / median / maximum of the cell means (\%), on the final-epoch online probe and on the post-hoc linear probe.}
\label{tab:strength-bands}
\small
\setlength{\tabcolsep}{3pt}
\resizebox{\linewidth}{!}{\begin{tabular}{l l c ccc ccc}
\toprule
 & & & \multicolumn{3}{c}{Online probe (\%)} & \multicolumn{3}{c}{Linear probe (\%)} \\
\cmidrule(lr){4-6}\cmidrule(lr){7-9}
$\lambda N/K$ & regime & cells & min & median & max & min & median & max \\
\midrule
$<0.4$ & collapse & 9 & $18.63$ & $45.53$ & $64.20$ & $33.02$ & $54.06$ & $68.56$ \\ 
$0.4$--$0.64$ & degraded & 5 & $73.41$ & $81.03$ & $88.56$ & $76.69$ & $82.39$ & $88.88$ \\ 
$0.64$--$8$ & healthy & 33 & $85.65$ & $90.25$ & $91.52$ & $86.17$ & $90.08$ & $91.44$ \\ 
$8$--$20$ & declining & 8 & $77.33$ & $88.04$ & $90.53$ & $79.20$ & $88.36$ & $90.30$ \\ 
$>20$ & over-regularized & 5 & $64.46$ & $78.13$ & $87.44$ & $69.53$ & $79.95$ & $87.56$ \\ 
\bottomrule
\end{tabular}}
\end{table}

\paragraph{Reference baselines under their published recipes with $B=64 \times 4$.}
Under their respective training recipes, LeJEPA and RDMReg achieve mean final-epoch online linear-probe accuracies of $89.29\%$ and $81.90\%$, respectively~(\Cref{tab:baselines}). Although their performance is worse than that of the local models ablated above, their differences from the local grid cannot be attributed to locality or target distribution alone, since optimizer settings, regularization weights and centering choices differ. \rev{The} $K=1$ LocaLeJEPA control is not equivalent to SIGReg, as \rev{it matches centered embeddings to the uniform ball, our entropy-maximizing target, while SIGReg matches raw embeddings to the isotropic Gaussian}.

\begin{table}[htbp]
\centering
\caption{\textit{Complete reference runs at $B=64\times4$, 800 epochs.} Final accuracy (\%), mean $\pm$ SD over three seeds. These use their own published recipes, not the controlled local-grid recipe.}
\label{tab:baselines}
\small
\setlength{\tabcolsep}{4pt}
\begin{tabular}{lc}
\toprule
Method & Final-epoch accuracy \\
\midrule
LeJEPA (SIGReg) & \makecell{$89.29 \pm 0.39$} \\
RDMReg & \makecell{$81.90 \pm 0.52$} \\
\bottomrule
\end{tabular}
\end{table}

\section{Additional Experimental Details}
\label{sec:additional-experiments}

This appendix complements \revred{\Cref{sec:experiments}} with the full experimental
specification underlying the reported results. We first consolidate the experimental
conventions shared across studies in \revred{\Cref{app:shared-protocol}}, including the
representation interface, training and checkpointing conventions, and frozen-feature
evaluation protocol. We then provide some more details on the baselines employed in \revred{\Cref{app:objectives}}, the experiment-specific settings and analyses
for the torus geometry-preservation study in \Cref{app:yoda}, followed by the
ImageNet-1K experiments across Euclidean, hyperbolic, and equivariant representation
spaces in \revred{\Cref{app:imagenet-protocol}}.

\subsection{Shared Experimental Protocol}
\label{app:shared-protocol}
Across all experiments, we adopt a common interface between self-supervised pretraining and downstream evaluation. The encoder produces a 512-dimensional representation, which is kept frozen for all downstream readouts, while the self-supervised regularizer is applied after a $2048$--$2048$--$16$ MLP\rev{ (ViT-L/14: $2048$--$2048$--$512$, LeJEPA's own)} 
projector. This separation allows us to evaluate the information retained by the
learned representation independently of the lower-dimensional space in which
distribution matching is enforced.

\textbf{Training Details.}
\rev{Each image contributes four views (ViT-L/14: two global and six local views). On ImageNet-1K, each optimizer step processes 256 images (ViT-L/14: 512) in shards of 64 images, one per GPU, without gradient accumulation. On Imagenette, the per-step batch is stated with each experiment ($64$ images with four accumulation steps, or $256$ with one).} Local neighborhood assignments and distribution statistics, however, are computed independently within each \rev{64-image shard on ImageNet-1K and within each per-step batch on Imagenette}. Experiment-specific optimization choices, including the number of training epochs, learning rate, and regularization parameters, are reported in the corresponding sections below.

\textbf{Online Evaluation and Checkpointing.}
Following~\cite{balestriero2025lejepa}, we train a lightweight classifier on detached encoder features during pretraining as an online diagnostic. Its gradients never propagate through the encoder or projector and therefore do not affect representation learning\rev{.} Each model is trained for the complete prescribed schedule\rev{. The ImageNet-1K tables and the Imagenette ablations of \Cref{sec:ablations} use the final checkpoint, at epoch 100 and epoch 800, respectively. The rotation-recovery study (\Cref{app:yoda}) evaluates each encoder at the checkpoint with the best online-probe accuracy. The COIL-100 analysis (\Cref{app:coil_experiment}) reports both the final checkpoint and the one with the best online-probe accuracy.} 

\textbf{Frozen-feature Evaluation.}
All downstream evaluations keep the pretrained encoder fixed and fit only the
specified readout. Reported differences therefore reflect information already
present in the frozen representation\rev{, since the encoder is not adapted} to a
particular downstream task.

\textbf{Dynamic Neighborhood Schedule.}
\revred{For the neighborhood count,} we also consider a \emph{dynamic variant} in which
the locality of the regularizer is progressively refined during pretraining. At
epoch $e$, the number of neighborhoods is:
\begin{equation}
    K(e)
    =
    \min\!\left(
        K_{\max},
        K_{\mathrm{start}}
        +
        \left\lfloor \frac{e}{E_{\mathrm{step}}} \right\rfloor
    \right)
\end{equation}
with $K_{\mathrm{start}}{=}1$ throughout. Training therefore begins with a single
neighborhood and introduces progressively finer local partitions as the
representation develops. We use the same scheduling rule across experiments,
while $K_{\max}$ and $E_{\mathrm{step}}$ are adapted to the corresponding
pretraining horizon and specified in each experiment-specific section.

\subsection{Distribution-Matching Objectives and Reference Distributions}
\label{app:objectives}

Our comparisons span distribution-matching objectives that differ in both the
domain on which regularization is imposed and the reference distribution toward
which representations are driven. We make this important distinction in our evaluation because LocaLeJEPA changes the regularization from a single global constraint to a collection of locally centered distribution-matching problems, while preserving the same self-supervised prediction objective.

\textbf{Global baselines.}
\emph{SIGReg}~\revred{\citep{balestriero2025lejepa}}, the regularizer underlying LeJEPA,
matches projected representations globally to an isotropic Gaussian.
\emph{SUSReg}~\revred{\citep{nicollier2026spherejepa}} instead promotes uniformity on the
hypersphere, providing a reference distribution adapted to spherical embeddings.
\emph{SubReg}~\revred{\citep{zhao2026subjepa}} applies Gaussian regularization within
frozen random feature subspaces \rev{of} the complete projected
representation. Finally, \emph{RDMReg}, introduced by Rectified LpJEPA
\revred{\citep{kuang2026rectifiedlpjepajointembeddingpredictive}}, matches rectified
representations to a generalized-Gaussian target\rev{. We} use $p{=}1$, corresponding
to the rectified Laplace distribution.

Except for RDMReg, these objectives can be written as:
\begin{equation}
    \mathcal L
    =
    (1-\lambda)\,\mathcal L_{\mathrm{pred}}
    +
    \lambda\,\mathcal R
\end{equation}
where $\mathcal L_{\mathrm{pred}}$ \rev{matches} the representations of the different
views of each image and $\mathcal R$ denotes the corresponding
distribution-matching regularizer. RDMReg instead follows the weighting proposed
by its authors:
\begin{equation}
    \mathcal L_{\mathrm{RDM}}
    =
    25\,\mathcal L_{\mathrm{pred}}
    +
    125\,\mathcal R_{\mathrm{RDM}}
\end{equation}
whose coefficients are absolute and therefore not directly comparable to the
mixing parameter $\lambda$ used by the other objectives.

\textbf{LocaLeJEPA and Local Control.}
LocaLeJEPA replaces the global reference constraint with distribution matching
within learned neighborhoods. \rev{Projected representations are averaged over views and partitioned once per step} by cosine-similarity $K$-means using at most ten iterations\rev{, and the partition is shared by all views}. In order to avoid uninformative clusters, we discard clusters containing fewer than four samples, and the regularizer is averaged
uniformly across the remaining clusters. Each cluster is centered independently, and the resulting residuals are matched to the uniform distribution on the $d$-dimensional Euclidean ball of radius:
\begin{equation}
    \revred{R}=\sqrt{d+2}
\end{equation}
which has identity covariance. With the $d{=}16$ projector used throughout our experiments, this gives $\revred{R}=\sqrt{18}$.

We \rev{also} evaluate \emph{LocaLeJEPA-SIGReg} as a local control. It uses the same neighborhood partition as LocaLeJEPA but replaces the centered bounded reference with Gaussian matching on the uncentered cluster representations. \rev{This} comparison tests whether local partitioning alone is sufficient
to explain the gains of LocaLeJEPA, while not separately isolating the effects of
centering and reference distribution.

\subsection{Toroidal Geometry Preservation}
\label{app:yoda}

This section specifies the experimental choices particular to the torus geometry-preservation study of \Cref{fig:yoda-torus-main}. We describe the
Imagenette pretraining configuration and circular-regression protocol, followed
by the complete ablations over neighborhood granularity, regularization strength,
and probe-label budget. Unless stated otherwise, training, checkpointing, and frozen-feature evaluation follow \Cref{app:shared-protocol}, while the compared regularizers are defined in \Cref{app:objectives}.

\textbf{Imagenette Pretraining.}
\label{app:yoda-pretrain}
All encoders evaluated in \Cref{tab:yoda-full} are pretrained from scratch
for 800 epochs on Imagenette, the ten-class ImageNet subset
\revred{\citep{Howard_Imagenette_2019}}, using ViT-S/8 variants with 384 hidden
dimensions. \revred{Besides the standard ViT, two variants keep every token representation on the unit hypersphere. nViT is a ViT built on the normalized Transformer of \citet{loshchilov2025ngpt}, and Spherical ViT replaces its learned positional embeddings with spherical-harmonic encodings.} Global SIGReg \citep{balestriero2025lejepa}, SUSReg \citep{nicollier2026spherejepa}, and SubReg \citep{zhao2026subjepa} use a learning rate of
$2\times10^{-3}$ and one warmup epoch, with $\lambda{=}0.02$.
RDMReg follows the same global optimization recipe, with the objective weighting
defined in \revred{\Cref{app:objectives}}. LocaLeJEPA and its local control use a
learning rate of $10^{-3}$, ten warmup epochs, and gradient clipping at $1.0$. We report the corresponding values of $\lambda$ and $K$ for each specific configuration. For the dynamic-neighborhood variant, we instantiate the schedule of \revred{\Cref{app:shared-protocol}} with $K_{\max}=8$ and $E_{\mathrm{step}}=100$, so that the partition evolves from $K{=}1$ to $K{=}8$, reaching its maximum granularity at epoch 700. All models are evaluated \rev{at the checkpoint with the best online-probe accuracy (\Cref{app:shared-protocol})}. Complete optimization settings are reported in Table~\ref{tab:yoda-pretrain}.
\begin{table}[ht]
\centering
\footnotesize
\caption{\emph{Imagenette Pretraining Settings.} The upper block lists settings shared by all encoders, while the remaining blocks give the optimization parameters and loss weights for each objective family.}
\label{tab:yoda-pretrain}
\begin{tabular}{ll}
\toprule
\multicolumn{2}{l}{\emph{Shared across all backbones}} \\
Dataset & Imagenette, 160\,px variant (10 classes) \\
Training horizon & 800 epochs \\
Batch / accumulation & 64 images per loss / 4 steps (256 per update) \\
Encoder & Small ViT variants, patch 8\rev{,} 384 internal / 512 output dims \\
Projector & MLP $2048$-$2048$-$16$ with BatchNorm \\
Views per image & 4 \\
Crop & \texttt{RandomResizedCrop} $128{\times}128$, scale $[0.08,1.0]$ \\
Augmentations & h-flip\rev{,} jitter $p{=}0.8$\rev{,} grayscale $p{=}0.2$\rev{,} \\
              & blur $p{=}0.5$\rev{,} solarize $p{=}\rev{0.1}$\rev{,} ImageNet norm \\
Optimizer & AdamW, weight decay $5\times10^{-2}$ \\
 & \rev{Spherical ViT and nViT: Adam, no weight decay} \\
\rev{LR schedule} & \rev{linear warm-up, then cosine decay to half the peak} \\
 & \rev{local recipe: constant after warm-up} \\
Online probe & LayerNorm $+$ Linear, LR $10^{-3}$, wd $10^{-7}$ \\
\midrule
\multicolumn{2}{l}{\emph{Global} (SIGReg / SUSReg / SubReg)} \\
LR / warmup / grad-clip & $2\times10^{-3}$ / 1 ep / - \\
Loss balance & $\lambda=0.02$ \\
\midrule
\multicolumn{2}{l}{\emph{Local} (LocaLeReg / SIGReg) - stabilized recipe} \\
LR / warmup / grad-clip & $10^{-3}$ / 10 ep / $1.0$ \\
Loss balance & $\lambda\in\{0.1,0.3\}$ in the reported angle table \\
\midrule
\multicolumn{2}{l}{\emph{Rectified LpJEPA} (RDMReg)} \\
LR / warmup / grad-clip & $2\times10^{-3}$ / 1 ep / - \\
Loss balance & $25\,\mathcal{L}_{\mathrm{inv}}+125\,\mathrm{RDMReg}$ \\
\bottomrule
\end{tabular}
\end{table}

\textbf{Circular Regression and Data Splits.}
\label{app:yoda-proto}
The bobblehead sequence contains 8{,}097 RGB frames of size $240\times320$, each associated with two independent rotation angles. Frames are resized to $128\times128$ and encoded once using the frozen encoder\rev{ at its best online-probe checkpoint}. All
subsequent analyses operate on these fixed representations. For each seed $s\in\{0,\ldots,9\}$, the frames are randomly permuted and split into 5{,}182 training, 1{,}296 validation, and 1{,}619 test examples. The same
ten splits are shared across all encoders, and the pretrained representation is held fixed throughout, so seeds vary only the downstream probe partition. For a probe-training budget $n$, we use the first $\min(n,5182)$ examples of the corresponding shuffled training pool. Budgets are therefore nested within each seed, while its validation and test sets remain unchanged. Features are standardized using statistics computed exclusively from the selected training subset. For each rotation coordinate $\theta$, an independent ridge regressor predicts $(\cos\theta,\sin\theta)$ together with an unregularized intercept. The ridge coefficient is selected from $  \left\{
    10^{-2},10^{-1},1,10,10^{2},10^{3}
    \right\}$, using angular MAE on the complete 1{,}296-frame validation split and is then evaluated once on the corresponding test set.

Given predictions $(\hat c,\hat s)$, we recover $\hat\theta=\operatorname{atan2}(\hat s,\hat c)$ and report the wrapped angular MAE:
\begin{equation}
    \operatorname{MAE}
    =
    \frac{180}{\pi N_{\mathrm{test}}}
    \sum_{i=1}^{N_{\mathrm{test}}}
    \left|
    \operatorname{Arg}
    \left(
        e^{\mathrm{i}(\hat\theta_i-\theta_i)}
    \right)
    \right|
\end{equation}
The reported $R^2$ pools residual sums of squares over the cosine and sine targets. Results are means and sample standard deviations over the ten shared probe splits. Because each configuration uses a fixed pretrained encoder, these deviations quantify probe-split variability \rev{and do not include} variability across independent pretraining runs. The random split holds out individual frames\rev{, not} complete objects, trajectories, or angular sectors. The experiment therefore measures recovery and interpolation of the two periodic factors on the observed sequence\rev{. It does not test} extrapolation to unseen objects or disjoint regions of $\mathbb T^2$.

\begin{table}[thbp]
\centering
\caption{\emph{Rotation recovery after Imagenette pretraining, All Configurations.} Same protocol as \rev{Figure}~\ref{fig:yoda-torus-main}, with every neighborhood count $K$, weight $\lambda$, and the local Gaussian control (LocaLeJEPA-SIGReg: cluster-wise SIGReg on uncentered embeddings).}
\label{tab:yoda-full}
\small
\setlength{\tabcolsep}{5pt}
\resizebox{\linewidth}{!}{\begin{tabular}{llcccc}
\toprule
 & & \multicolumn{2}{c}{Figurine 0} & \multicolumn{2}{c}{Figurine 1} \\
\cmidrule(lr){3-4}\cmidrule(lr){5-6}
Encoder & Regularizer & $R^2\uparrow$ & MAE ($^\circ$)$\downarrow$ & $R^2\uparrow$ & MAE ($^\circ$)$\downarrow$ \\
\midrule
\multicolumn{6}{l}{\emph{Global distribution matching}} \\
Spherical ViT & SUSReg & $0.9896 \pm 0.0003$ & $3.25 \pm 0.05$ & $0.9738 \pm 0.0010$ & $4.04 \pm 0.07$ \\
nViT & SIGReg & $0.9897 \pm 0.0003$ & $3.21 \pm 0.09$ & $0.9678 \pm 0.0015$ & $4.37 \pm 0.08$ \\
nViT & SubReg & $0.9891 \pm 0.0002$ & $3.26 \pm 0.03$ & $0.9643 \pm 0.0012$ & $4.63 \pm 0.11$ \\
ViT & SIGReg & $0.9930 \pm 0.0002$ & $2.75 \pm 0.06$ & $0.9775 \pm 0.0008$ & $3.85 \pm 0.08$ \\
ViT & SubReg & $0.9926 \pm 0.0002$ & $2.84 \pm 0.06$ & $0.9758 \pm 0.0010$ & $3.97 \pm 0.07$ \\
ViT & RDMReg & $0.9899 \pm 0.0001$ & $3.33 \pm 0.03$ & $0.9714 \pm 0.0012$ & $4.25 \pm 0.09$ \\
\midrule
\multicolumn{6}{l}{\emph{Local distribution matching}} \\
ViT & LocaLeJEPA ($\lambda{=}0.1$, $K{=}2$) & $\mathbf{0.9959 \pm 0.0001}$ & $\mathbf{2.13 \pm 0.04}$ & $0.9840 \pm 0.0007$ & $3.21 \pm 0.09$ \\
ViT & LocaLeJEPA ($\lambda{=}0.1$, $K{=}1{\to}8$) & $0.9938 \pm 0.0002$ & $2.58 \pm 0.05$ & $\mathbf{0.9841 \pm 0.0003}$ & $\mathbf{3.20 \pm 0.08}$ \\
ViT & LocaLeJEPA ($\lambda{=}0.1$, $K{=}4$) & $0.9946 \pm 0.0001$ & $2.44 \pm 0.04$ & $0.9821 \pm 0.0003$ & $3.42 \pm 0.07$ \\
ViT & LocaLeJEPA ($\lambda{=}0.3$, $K{=}2$) & $0.9919 \pm 0.0002$ & $2.89 \pm 0.04$ & $0.9808 \pm 0.0007$ & $3.48 \pm 0.07$ \\
ViT & LocaLeJEPA-SIGReg ($\lambda{=}0.1$, $K{=}4$) & $0.9919 \pm 0.0002$ & $2.94 \pm 0.07$ & $0.9770 \pm 0.0009$ & $3.84 \pm 0.08$ \\
\bottomrule
\end{tabular}}
\end{table}

\subsubsection{Complete Torus Results at Full Probe Budget}
\label{app:torus-complete-table}
In this subsection, we provide the complete full-budget comparison of Table~\ref{tab:yoda-budget-main} in \Cref{sec:experiments}, together with ablations over the regularization weight $\lambda \in \{0.1,0.3\}$ and the neighborhood count $K \in \{2,4,1{\to}8\}$. 

\textbf{Results.} As can be observed in Table~\ref{tab:yoda-full}, across the three LocaLeJEPA variants with $\lambda{=}0.1$, local matching improves both rotation coordinates over every global baseline. Fixed $K{=}2$ gives the
strongest overall recovery, reaching $2.13^\circ$ and $3.21^\circ$ MAE on
$\theta_0$ and $\theta_1$, respectively. Increasing the partition granularity to $K{=}4$ degrades both readouts to $2.44^\circ$ and $3.42^\circ$\rev{, so}
finer neighborhoods are not uniformly beneficial. The dynamic schedule $K{=}1{\to}8$ \rev{is worse than both fixed settings} on $\theta_0$ (\rev{$2.58^\circ$ versus $2.13^\circ$ and $2.44^\circ$}), while attaining the best $\theta_1$ result ($R^2{=}0.9841$,
MAE $3.20^\circ$). Thus, progressively increasing locality can match the best fixed setting on one factor, but does not improve the aggregate recovery over
$K{=}2$. Performance is also sensitive to the strength of the local regularizer. At fixed $K{=}2$, increasing $\lambda$ from $0.1$ to $0.3$ raises MAE from
$2.13^\circ$ to $2.89^\circ$ on $\theta_0$ and from $3.21^\circ$ to $3.48^\circ$ on $\theta_1$. The lower-weight configuration is therefore clearly preferred for this readout: with $\lambda{=}0.3$, performance on $\theta_0$ falls slightly below global ViT-SIGReg ($2.89^\circ$ versus $2.75^\circ$), although the advantage on $\theta_1$ remains ($3.48^\circ$ versus
$3.85^\circ$).

In this ablation, we also consider \emph{LocaLeJEPA-SIGReg} as introduced in \Cref{app:objectives}, which tests whether local neighborhoods alone are sufficient. Keeping $\lambda{=}0.1$ and $K{=}4$ but replacing the centered ball construction with cluster-wise SIGReg on \rev{uncentered} embeddings yields $2.94^\circ$ and $3.84^\circ$ MAE, effectively eliminating the improvement over global ViT-SIGReg. This shows that the gains cannot be explained by locality alone\rev{ and that the centered ball construction contributes to them}. However, the experiment does not disentangle \rev{the contributions of centering and of the reference distribution}.

\subsubsection{Probe Label-Budget Curves}
\label{app:yoda-lc}
In this subsection we extend \Cref{tab:yoda-budget-main} and ablate over multiple probe-training budgets, together with a subset of parameters $(\lambda, K) \in \{0.1,0.3\}\times\{2,4,1{\to}8\} $. We analyze up to ten budgets, from 5 labeled frames to the full 5{,}182-frame training pool, and report MAE and $R^2$ separately for both rotation coordinates. All values are means $\pm$ sample standard deviations over the same ten probe splits used in the main paper, with the pretrained encoder fixed throughout. 

\textbf{Results.} Across the two coordinates, LocaLeJEPA with $\lambda{=}0.1$ and fixed $K{=}2$ achieves the lowest mean angular error at every budget from 10 frames onward. The improvement becomes increasingly clear as the budget increases. Relative to global ViT-SIGReg, the two-coordinate mean MAE decreases from $39.2^\circ$ to $36.0^\circ$ with 10 labels, from $17.8^\circ$ to $15.5^\circ$
with 50, and from $11.0^\circ$ to $9.0^\circ$ with 100. At 500 frames and at full supervision, the corresponding errors decrease from $4.70^\circ$ to $3.80^\circ$ and from $3.30^\circ$ to $2.67^\circ$, respectively. The coordinate-wise curves \rev{are less uniform in the} low-data regime. For $\theta_0$, fixed-$K$ LocaLeJEPA attains the lowest MAE from 10 frames onward and remains best throughout the remainder of the curve
(\Cref{tab:yoda-lc-mae0}). For $\theta_1$, the leading method varies at small budgets: RDMReg is best with 5 frames, ViT-SubReg with 10 and 20, dynamic-$K$ LocaLeJEPA with 50, and the $\lambda{=}0.3$, $K{=}2$ variant with 100 (\Cref{tab:yoda-lc-mae1}). Once the probe has at least 500 training frames, dynamic-$K$ gives the lowest $\theta_1$ error, reaching $3.20^\circ$ at the full budget compared with $3.21^\circ$ for fixed $K{=}2$. Thus, the consistent advantage of fixed $K{=}2$ in the two-coordinate mean does not imply uniform dominance on each rotation factor individually. The $R^2$ curves mirror the MAE results. Fixed-$K$ LocaLeJEPA gives the strongest $\theta_0$ recovery from 10 frames onward, reaching $R^2=0.996$ at full supervision, while the preferred configuration for $\theta_1$ depends more strongly on the available label budget. At five training frames, performance is close to the floor of the task: no configuration exceeds $R^2=0.08$ on $\theta_0$, and all methods are near zero or negative on $\theta_1$. All comparisons in Table~\ref{tab:yoda-lc} concern variability of the downstream
probe only. A summary of the ablation over $\lambda$ and $K$ can be found also inside \Cref{tab:yoda-budget}.
\begin{table}[thbp]
\centering
\caption{\emph{Mean Rotation-Recovery Error across Probe-label Budgets.}
Wrapped angular MAE ($^\circ$) averaged over the two rotation coordinates as the number of labeled frames used to fit the ridge probe increases. Results are mean $\pm$ sample standard deviation over ten shared probe splits, with the two coordinates averaged within each split. Per-coordinate results are reported in Table~\ref{tab:yoda-lc}. Bold marks the lowest mean at each budget.}
\label{tab:yoda-budget}
\small
\setlength{\tabcolsep}{4pt}
\resizebox{\linewidth}{!}{\begin{tabular}{lccccc}
\toprule
Method & 10 & 50 & 100 & 500 & Full \\
\midrule
Spherical ViT SUSReg & $44.6 \pm 4.3$ & $20.2 \pm 2.0$ & $11.0 \pm 0.7$ & $4.91 \pm 0.09$ & $3.64 \pm 0.04$ \\
nViT SIGReg & $41.8 \pm 4.9$ & $23.1 \pm 2.0$ & $11.5 \pm 0.5$ & $5.23 \pm 0.10$ & $3.79 \pm 0.06$ \\
nViT SubReg & $46.0 \pm 4.5$ & $26.7 \pm 1.7$ & $14.6 \pm 0.7$ & $5.65 \pm 0.12$ & $3.94 \pm 0.05$ \\
ViT SIGReg & $39.2 \pm 3.6$ & $17.8 \pm 1.1$ & $11.0 \pm 0.4$ & $4.70 \pm 0.09$ & $3.30 \pm 0.05$ \\
ViT SubReg & $37.9 \pm 4.4$ & $18.0 \pm 1.7$ & $11.2 \pm 0.5$ & $4.70 \pm 0.10$ & $3.40 \pm 0.04$ \\
ViT RDMReg & $39.8 \pm 7.0$ & $19.8 \pm 2.0$ & $10.5 \pm 0.6$ & $5.47 \pm 0.08$ & $3.79 \pm 0.05$ \\
\midrule
ViT LocaLeJEPA ($\lambda{=}0.1$, $K{=}2$) & $\mathbf{36.0 \pm 3.8}$ & $\mathbf{15.5 \pm 1.2}$ & $\mathbf{9.0 \pm 0.3}$ & $\mathbf{3.80 \pm 0.05}$ & $\mathbf{2.67 \pm 0.05}$ \\
ViT LocaLeJEPA ($\lambda{=}0.1$, $K{=}1{\to}8$) & $37.5 \pm 4.3$ & $17.1 \pm 1.1$ & $10.9 \pm 0.5$ & $4.10 \pm 0.10$ & $2.89 \pm 0.04$ \\
ViT LocaLeJEPA ($\lambda{=}0.1$, $K{=}4$) & $37.6 \pm 4.7$ & $19.2 \pm 1.8$ & $9.6 \pm 0.4$ & $4.07 \pm 0.08$ & $2.93 \pm 0.03$ \\
ViT LocaLeJEPA ($\lambda{=}0.3$, $K{=}2$) & $40.3 \pm 3.3$ & $22.1 \pm 1.6$ & $9.7 \pm 0.3$ & $4.66 \pm 0.07$ & $3.19 \pm 0.04$ \\
ViT LocaLeJEPA-SIGReg ($\lambda{=}0.1$, $K{=}4$) & $41.5 \pm 5.0$ & $20.5 \pm 1.3$ & $9.8 \pm 0.8$ & $4.75 \pm 0.06$ & $3.39 \pm 0.05$ \\
\bottomrule
\end{tabular}}
\end{table}

\begin{table*}[h]
\centering
\caption{\emph{Rotation Recovery across Probe-label Budgets.}
Circular ridge probes are fitted on frozen representations using different labeled frame budgets $\in\{5, 10, 20, 50, 100, 500, 800, 2000, 4000, 5182 \}$, with \emph{Full=5182} denoting the complete training pool. Panels report angular MAE and $R^2$ separately for the two rotation coordinates. Ridge strength is selected on the same 1{,}296-frame validation set at every budget. Results are mean $\pm$ sample standard deviation over ten shared probe splits\rev{. Bold} marks the best mean in each column. Complete encoder configurations are given in \Cref{tab:yoda-full}.}
\label{tab:yoda-lc}
\small
\begin{subtable}{\textwidth}
\centering
\caption{Figurine 0. Test MAE ($^\circ$, $\downarrow$)}
\label{tab:yoda-lc-mae0}
\resizebox{\textwidth}{!}{\begin{tabular}{lcccccccccc}
\toprule
Method & 5 & 10 & 20 & 50 & 100 & 500 & 800 & 2000 & 4000 & Full \\
\midrule
\revred{Spherical ViT} SUSReg & $73.69 \pm 10.20$ & $52.70 \pm 7.03$ & $42.24 \pm 4.85$ & $18.90 \pm 2.34$ & $10.68 \pm 0.80$ & $4.53 \pm 0.15$ & $4.07 \pm 0.10$ & $3.57 \pm 0.06$ & $3.32 \pm 0.05$ & $3.25 \pm 0.05$ \\
\revred{nViT} SIGReg & $62.17 \pm 13.10$ & $42.51 \pm 7.38$ & $31.74 \pm 2.74$ & $18.84 \pm 3.00$ & $9.43 \pm 0.60$ & $4.37 \pm 0.15$ & $3.94 \pm 0.11$ & $3.49 \pm 0.07$ & $3.28 \pm 0.09$ & $3.21 \pm 0.09$ \\
\revred{nViT} SubReg & $66.39 \pm 8.12$ & $50.91 \pm 7.13$ & $39.64 \pm 4.80$ & $25.49 \pm 2.25$ & $12.94 \pm 0.87$ & $4.77 \pm 0.07$ & $4.18 \pm 0.06$ & $3.58 \pm 0.06$ & $3.33 \pm 0.03$ & $3.26 \pm 0.03$ \\
\revred{ViT} SIGReg & $60.46 \pm 9.80$ & $40.02 \pm 7.95$ & $25.79 \pm 6.50$ & $15.24 \pm 0.95$ & $9.50 \pm 0.61$ & $4.01 \pm 0.13$ & $3.59 \pm 0.11$ & $3.06 \pm 0.08$ & $2.83 \pm 0.07$ & $2.75 \pm 0.06$ \\
\revred{ViT} SubReg & $56.99 \pm 8.76$ & $41.60 \pm 7.97$ & $29.86 \pm 3.99$ & $18.30 \pm 2.44$ & $10.17 \pm 0.85$ & $3.97 \pm 0.12$ & $3.57 \pm 0.06$ & $3.14 \pm 0.07$ & $2.92 \pm 0.06$ & $2.84 \pm 0.06$ \\
\revred{ViT} RDMReg & $69.09 \pm 13.24$ & $44.97 \pm 12.60$ & $29.27 \pm 4.07$ & $16.73 \pm 2.34$ & $9.14 \pm 0.54$ & $4.94 \pm 0.11$ & $4.42 \pm 0.10$ & $3.77 \pm 0.06$ & $3.43 \pm 0.04$ & $3.33 \pm 0.03$ \\
\midrule
\revred{ViT LocaLeJEPA} ($\lambda{=}0.1$, $K{=}2$) & $60.98 \pm 10.18$ & $\mathbf{34.90 \pm 6.01}$ & $\mathbf{20.18 \pm 2.96}$ & $\mathbf{12.37 \pm 1.30}$ & $\mathbf{7.31 \pm 0.53}$ & $\mathbf{3.08 \pm 0.08}$ & $\mathbf{2.71 \pm 0.06}$ & $\mathbf{2.33 \pm 0.04}$ & $\mathbf{2.17 \pm 0.04}$ & $\mathbf{2.13 \pm 0.04}$ \\
\revred{ViT LocaLeJEPA} ($\lambda{=}0.1$, $K{=}1{\to}8$) & $\mathbf{56.74 \pm 8.90}$ & $39.87 \pm 8.14$ & $27.53 \pm 6.69$ & $16.91 \pm 1.07$ & $10.08 \pm 0.64$ & $3.74 \pm 0.07$ & $3.29 \pm 0.08$ & $2.83 \pm 0.06$ & $2.63 \pm 0.06$ & $2.58 \pm 0.05$ \\
\revred{ViT LocaLeJEPA} ($\lambda{=}0.1$, $K{=}4$) & $59.09 \pm 8.22$ & $40.83 \pm 7.16$ & $27.27 \pm 4.44$ & $17.94 \pm 3.03$ & $7.98 \pm 0.37$ & $3.48 \pm 0.07$ & $3.09 \pm 0.04$ & $2.66 \pm 0.04$ & $2.50 \pm 0.04$ & $2.44 \pm 0.04$ \\
\revred{ViT LocaLeJEPA} ($\lambda{=}0.3$, $K{=}2$) & $60.37 \pm 8.25$ & $45.13 \pm 5.35$ & $29.27 \pm 4.11$ & $21.55 \pm 1.78$ & $9.12 \pm 0.54$ & $4.50 \pm 0.12$ & $3.99 \pm 0.07$ & $3.30 \pm 0.05$ & $2.98 \pm 0.04$ & $2.89 \pm 0.04$ \\
\revred{ViT LocaLeJEPA}-SIGReg ($\lambda{=}0.1$, $K{=}4$) & $63.91 \pm 11.89$ & $43.39 \pm 6.62$ & $31.05 \pm 4.63$ & $18.92 \pm 1.83$ & $8.97 \pm 1.13$ & $4.24 \pm 0.10$ & $3.83 \pm 0.08$ & $3.30 \pm 0.07$ & $3.02 \pm 0.07$ & $2.94 \pm 0.07$ \\
\bottomrule
\end{tabular}}
\end{subtable}
\vspace{4pt}
\begin{subtable}{\textwidth}
\centering
\caption{Figurine 1. Test MAE ($^\circ$, $\downarrow$)}
\label{tab:yoda-lc-mae1}
\resizebox{\textwidth}{!}{\begin{tabular}{lcccccccccc}
\toprule
Method & 5 & 10 & 20 & 50 & 100 & 500 & 800 & 2000 & 4000 & Full \\
\midrule
\revred{Spherical ViT} SUSReg & $44.72 \pm 16.31$ & $36.42 \pm 5.34$ & $30.06 \pm 3.67$ & $21.50 \pm 2.25$ & $11.30 \pm 0.90$ & $5.29 \pm 0.13$ & $4.90 \pm 0.11$ & $4.35 \pm 0.08$ & $4.12 \pm 0.06$ & $4.04 \pm 0.07$ \\
\revred{nViT} SIGReg & $49.03 \pm 12.13$ & $41.14 \pm 5.63$ & $37.05 \pm 4.63$ & $27.41 \pm 2.95$ & $13.48 \pm 0.84$ & $6.08 \pm 0.11$ & $5.40 \pm 0.10$ & $4.71 \pm 0.09$ & $4.44 \pm 0.09$ & $4.37 \pm 0.08$ \\
\revred{nViT} SubReg & $48.08 \pm 12.75$ & $41.04 \pm 5.51$ & $38.27 \pm 6.95$ & $27.95 \pm 2.71$ & $16.34 \pm 1.14$ & $6.53 \pm 0.22$ & $5.78 \pm 0.17$ & $5.03 \pm 0.12$ & $4.72 \pm 0.10$ & $4.63 \pm 0.11$ \\
\revred{ViT} SIGReg & $42.66 \pm 10.10$ & $38.45 \pm 3.45$ & $30.76 \pm 3.04$ & $20.40 \pm 2.30$ & $12.47 \pm 0.64$ & $5.39 \pm 0.17$ & $4.86 \pm 0.16$ & $4.25 \pm 0.09$ & $3.95 \pm 0.08$ & $3.85 \pm 0.08$ \\
\revred{ViT} SubReg & $43.27 \pm 8.44$ & $\mathbf{34.18 \pm 3.03}$ & $\mathbf{25.52 \pm 4.82}$ & $17.79 \pm 2.18$ & $12.18 \pm 0.71$ & $5.42 \pm 0.15$ & $4.91 \pm 0.10$ & $4.33 \pm 0.09$ & $4.05 \pm 0.07$ & $3.97 \pm 0.07$ \\
\revred{ViT} RDMReg & $\mathbf{37.11 \pm 5.14}$ & $34.71 \pm 5.75$ & $30.30 \pm 2.91$ & $22.87 \pm 2.30$ & $11.84 \pm 1.26$ & $6.01 \pm 0.12$ & $5.48 \pm 0.08$ & $4.76 \pm 0.09$ & $4.38 \pm 0.09$ & $4.25 \pm 0.09$ \\
\midrule
\revred{ViT LocaLeJEPA} ($\lambda{=}0.1$, $K{=}2$) & $43.38 \pm 10.17$ & $37.11 \pm 4.53$ & $27.84 \pm 5.95$ & $18.71 \pm 2.38$ & $10.67 \pm 0.65$ & $4.52 \pm 0.08$ & $4.05 \pm 0.09$ & $3.49 \pm 0.09$ & $3.26 \pm 0.09$ & $3.21 \pm 0.09$ \\
\revred{ViT LocaLeJEPA} ($\lambda{=}0.1$, $K{=}1{\to}8$) & $43.61 \pm 10.21$ & $35.14 \pm 4.66$ & $26.95 \pm 2.95$ & $\mathbf{17.21 \pm 1.78}$ & $11.81 \pm 1.02$ & $\mathbf{4.47 \pm 0.19}$ & $\mathbf{4.01 \pm 0.10}$ & $\mathbf{3.48 \pm 0.08}$ & $\mathbf{3.26 \pm 0.08}$ & $\mathbf{3.20 \pm 0.08}$ \\
\revred{ViT LocaLeJEPA} ($\lambda{=}0.1$, $K{=}4$) & $41.99 \pm 9.89$ & $34.37 \pm 3.81$ & $27.69 \pm 4.20$ & $20.50 \pm 2.38$ & $11.15 \pm 0.81$ & $4.65 \pm 0.15$ & $4.20 \pm 0.10$ & $3.70 \pm 0.09$ & $3.48 \pm 0.08$ & $3.42 \pm 0.07$ \\
\revred{ViT LocaLeJEPA} ($\lambda{=}0.3$, $K{=}2$) & $39.12 \pm 4.90$ & $35.41 \pm 3.49$ & $31.10 \pm 2.79$ & $22.64 \pm 1.94$ & $\mathbf{10.36 \pm 0.51}$ & $4.83 \pm 0.12$ & $4.36 \pm 0.11$ & $3.82 \pm 0.09$ & $3.56 \pm 0.07$ & $3.48 \pm 0.07$ \\
\revred{ViT LocaLeJEPA}-SIGReg ($\lambda{=}0.1$, $K{=}4$) & $42.44 \pm 7.86$ & $39.61 \pm 5.12$ & $32.31 \pm 3.89$ & $21.98 \pm 1.77$ & $10.55 \pm 0.81$ & $5.25 \pm 0.07$ & $4.78 \pm 0.10$ & $4.22 \pm 0.10$ & $3.92 \pm 0.09$ & $3.84 \pm 0.08$ \\
\bottomrule
\end{tabular}}
\end{subtable}
\vspace{4pt}
\begin{subtable}{\textwidth}
\centering
\caption{Figurine 0. Test $R^2$ \revred{($\uparrow$)}}
\label{tab:yoda-lc-r2-0}
\resizebox{\textwidth}{!}{\begin{tabular}{lcccccccccc}
\toprule
Method & 5 & 10 & 20 & 50 & 100 & 500 & 800 & 2000 & 4000 & Full \\
\midrule
\revred{Spherical ViT} SUSReg & $-0.270 \pm 0.213$ & $0.182 \pm 0.207$ & $0.324 \pm 0.258$ & $0.743 \pm 0.045$ & $0.897 \pm 0.010$ & $0.980 \pm 0.001$ & $0.984 \pm 0.001$ & $0.987 \pm 0.000$ & $0.989 \pm 0.000$ & $0.990 \pm 0.000$ \\
\revred{nViT} SIGReg & $-0.058 \pm 0.294$ & $0.352 \pm 0.188$ & $0.545 \pm 0.043$ & $0.759 \pm 0.042$ & $0.920 \pm 0.008$ & $0.981 \pm 0.001$ & $0.985 \pm 0.001$ & $0.988 \pm 0.000$ & $0.989 \pm 0.000$ & $0.990 \pm 0.000$ \\
\revred{nViT} SubReg & $-0.227 \pm 0.315$ & $0.169 \pm 0.162$ & $0.378 \pm 0.141$ & $0.612 \pm 0.043$ & $0.857 \pm 0.014$ & $0.977 \pm 0.001$ & $0.982 \pm 0.000$ & $0.987 \pm 0.000$ & $0.989 \pm 0.000$ & $0.989 \pm 0.000$ \\
\revred{ViT} SIGReg & $-0.034 \pm 0.234$ & $0.362 \pm 0.198$ & $0.642 \pm 0.122$ & $0.825 \pm 0.021$ & $0.915 \pm 0.009$ & $0.985 \pm 0.001$ & $0.988 \pm 0.001$ & $0.991 \pm 0.000$ & $0.993 \pm 0.000$ & $0.993 \pm 0.000$ \\
\revred{ViT} SubReg & $\mathbf{0.077 \pm 0.258}$ & $0.315 \pm 0.285$ & $0.583 \pm 0.066$ & $0.772 \pm 0.043$ & $0.907 \pm 0.011$ & $0.985 \pm 0.000$ & $0.988 \pm 0.000$ & $0.991 \pm 0.000$ & $0.992 \pm 0.000$ & $0.993 \pm 0.000$ \\
\revred{ViT} RDMReg & $-0.301 \pm 0.419$ & $0.219 \pm 0.285$ & $0.588 \pm 0.085$ & $0.792 \pm 0.040$ & $0.920 \pm 0.009$ & $0.978 \pm 0.001$ & $0.982 \pm 0.001$ & $0.987 \pm 0.000$ & $0.989 \pm 0.000$ & $0.990 \pm 0.000$ \\
\midrule
\revred{ViT LocaLeJEPA} ($\lambda{=}0.1$, $K{=}2$) & $-0.040 \pm 0.256$ & $\mathbf{0.486 \pm 0.140}$ & $\mathbf{0.741 \pm 0.045}$ & $\mathbf{0.869 \pm 0.022}$ & $\mathbf{0.947 \pm 0.007}$ & $\mathbf{0.991 \pm 0.000}$ & $\mathbf{0.993 \pm 0.000}$ & $\mathbf{0.995 \pm 0.000}$ & $\mathbf{0.996 \pm 0.000}$ & $\mathbf{0.996 \pm 0.000}$ \\
\revred{ViT LocaLeJEPA} ($\lambda{=}0.1$, $K{=}1{\to}8$) & $0.029 \pm 0.231$ & $0.366 \pm 0.189$ & $0.595 \pm 0.109$ & $0.785 \pm 0.027$ & $0.900 \pm 0.012$ & $0.986 \pm 0.000$ & $0.989 \pm 0.000$ & $0.992 \pm 0.000$ & $0.994 \pm 0.000$ & $0.994 \pm 0.000$ \\
\revred{ViT LocaLeJEPA} ($\lambda{=}0.1$, $K{=}4$) & $0.025 \pm 0.223$ & $0.390 \pm 0.146$ & $0.639 \pm 0.062$ & $0.753 \pm 0.087$ & $0.938 \pm 0.006$ & $0.989 \pm 0.001$ & $0.991 \pm 0.000$ & $0.994 \pm 0.000$ & $0.994 \pm 0.000$ & $0.995 \pm 0.000$ \\
\revred{ViT LocaLeJEPA} ($\lambda{=}0.3$, $K{=}2$) & $-0.032 \pm 0.211$ & $0.318 \pm 0.100$ & $0.586 \pm 0.069$ & $0.687 \pm 0.031$ & $0.916 \pm 0.009$ & $0.980 \pm 0.001$ & $0.985 \pm 0.000$ & $0.989 \pm 0.000$ & $0.991 \pm 0.000$ & $0.992 \pm 0.000$ \\
\revred{ViT LocaLeJEPA}-SIGReg ($\lambda{=}0.1$, $K{=}4$) & $-0.049 \pm 0.233$ & $0.303 \pm 0.165$ & $0.559 \pm 0.126$ & $0.751 \pm 0.028$ & $0.922 \pm 0.017$ & $0.983 \pm 0.001$ & $0.986 \pm 0.000$ & $0.990 \pm 0.000$ & $0.991 \pm 0.000$ & $0.992 \pm 0.000$ \\
\bottomrule
\end{tabular}}
\end{subtable}
\vspace{4pt}
\begin{subtable}{\textwidth}
\centering
\caption{Figurine 1. Test $R^2$ \revred{($\uparrow$)}}
\label{tab:yoda-lc-r2-1}
\resizebox{\textwidth}{!}{\begin{tabular}{lcccccccccc}
\toprule
Method & 5 & 10 & 20 & 50 & 100 & 500 & 800 & 2000 & 4000 & Full \\
\midrule
\revred{Spherical ViT} SUSReg & $-0.173 \pm 0.397$ & $0.045 \pm 0.234$ & $0.241 \pm 0.110$ & $0.505 \pm 0.057$ & $0.810 \pm 0.025$ & $0.952 \pm 0.002$ & $0.959 \pm 0.001$ & $0.969 \pm 0.001$ & $0.973 \pm 0.001$ & $0.974 \pm 0.001$ \\
\revred{nViT} SIGReg & $-0.238 \pm 0.261$ & $-0.093 \pm 0.238$ & $0.098 \pm 0.168$ & $0.357 \pm 0.037$ & $0.742 \pm 0.020$ & $0.938 \pm 0.003$ & $0.951 \pm 0.002$ & $0.962 \pm 0.002$ & $0.967 \pm 0.002$ & $0.968 \pm 0.002$ \\
\revred{nViT} SubReg & $-0.217 \pm 0.300$ & $0.011 \pm 0.094$ & $0.097 \pm 0.153$ & $0.323 \pm 0.112$ & $0.655 \pm 0.032$ & $0.929 \pm 0.004$ & $0.944 \pm 0.002$ & $0.958 \pm 0.002$ & $0.963 \pm 0.001$ & $0.964 \pm 0.001$ \\
\revred{ViT} SIGReg & $-0.083 \pm 0.233$ & $-0.044 \pm 0.336$ & $0.273 \pm 0.120$ & $0.560 \pm 0.061$ & $0.771 \pm 0.016$ & $0.955 \pm 0.002$ & $0.964 \pm 0.001$ & $0.973 \pm 0.001$ & $0.976 \pm 0.001$ & $0.978 \pm 0.001$ \\
\revred{ViT} SubReg & $-0.106 \pm 0.210$ & $\mathbf{0.166 \pm 0.146}$ & $\mathbf{0.424 \pm 0.132}$ & $0.638 \pm 0.040$ & $0.778 \pm 0.025$ & $0.954 \pm 0.002$ & $0.962 \pm 0.001$ & $0.971 \pm 0.001$ & $0.975 \pm 0.001$ & $0.976 \pm 0.001$ \\
\revred{ViT} RDMReg & $\mathbf{0.005 \pm 0.150}$ & $0.148 \pm 0.144$ & $0.291 \pm 0.089$ & $0.456 \pm 0.080$ & $0.786 \pm 0.040$ & $0.940 \pm 0.003$ & $0.951 \pm 0.002$ & $0.964 \pm 0.001$ & $0.970 \pm 0.001$ & $0.971 \pm 0.001$ \\
\midrule
\revred{ViT LocaLeJEPA} ($\lambda{=}0.1$, $K{=}2$) & $-0.093 \pm 0.215$ & $0.093 \pm 0.130$ & $0.380 \pm 0.139$ & $0.594 \pm 0.056$ & $0.821 \pm 0.019$ & $0.966 \pm 0.001$ & $0.973 \pm 0.001$ & $0.981 \pm 0.001$ & $0.983 \pm 0.001$ & $0.984 \pm 0.001$ \\
\revred{ViT LocaLeJEPA} ($\lambda{=}0.1$, $K{=}1{\to}8$) & $-0.094 \pm 0.247$ & $0.122 \pm 0.129$ & $0.388 \pm 0.096$ & $\mathbf{0.646 \pm 0.046}$ & $0.791 \pm 0.029$ & $\mathbf{0.968 \pm 0.002}$ & $\mathbf{0.974 \pm 0.001}$ & $\mathbf{0.981 \pm 0.000}$ & $\mathbf{0.983 \pm 0.000}$ & $\mathbf{0.984 \pm 0.000}$ \\
\revred{ViT LocaLeJEPA} ($\lambda{=}0.1$, $K{=}4$) & $-0.040 \pm 0.195$ & $0.149 \pm 0.143$ & $0.369 \pm 0.091$ & $0.564 \pm 0.052$ & $0.804 \pm 0.023$ & $0.966 \pm 0.002$ & $0.972 \pm 0.001$ & $0.979 \pm 0.000$ & $0.981 \pm 0.000$ & $0.982 \pm 0.000$ \\
\revred{ViT LocaLeJEPA} ($\lambda{=}0.3$, $K{=}2$) & $-0.014 \pm 0.124$ & $0.162 \pm 0.087$ & $0.285 \pm 0.065$ & $0.471 \pm 0.064$ & $\mathbf{0.830 \pm 0.016}$ & $0.961 \pm 0.002$ & $0.969 \pm 0.001$ & $0.977 \pm 0.001$ & $0.980 \pm 0.001$ & $0.981 \pm 0.001$ \\
\revred{ViT LocaLeJEPA}-SIGReg ($\lambda{=}0.1$, $K{=}4$) & $-0.070 \pm 0.191$ & $0.021 \pm 0.184$ & $0.248 \pm 0.136$ & $0.483 \pm 0.062$ & $0.821 \pm 0.020$ & $0.952 \pm 0.003$ & $0.962 \pm 0.001$ & $0.971 \pm 0.001$ & $0.976 \pm 0.001$ & $0.977 \pm 0.001$ \\
\bottomrule
\end{tabular}}
\end{subtable}
\end{table*}

\subsubsection{Spatial Structure of Rotation Recovery}
\label{app:yoda-torus}
We also investigate the visualization of the spatial structure of the rotation recovery experiment to expose where the errors occur over the underlying angular domain, a factor that the aggregate probe metrics do not reveal. For that purpose, we use
$(\theta_0,\theta_1)$ as coordinates on $\mathbb T^2$ and embed the resulting torus in $\mathbb R^3$ for visualization. These plots should be interpreted as visualizations of \emph{coordinate recovery}.
Their geometry is determined \rev{only} by the ground-truth or probe-decoded angles\rev{. They are not} a direct embedding of the encoder features themselves. They therefore show how faithfully the periodic factors are recovered across the angular domain, but do not by themselves establish the topology of the learned representation space. 

\begin{figure}[t]
\centering

\begin{tikzpicture}
\node[anchor=south west, inner sep=0] (img) at (0,0) {%
    \includegraphics[width=\textwidth]{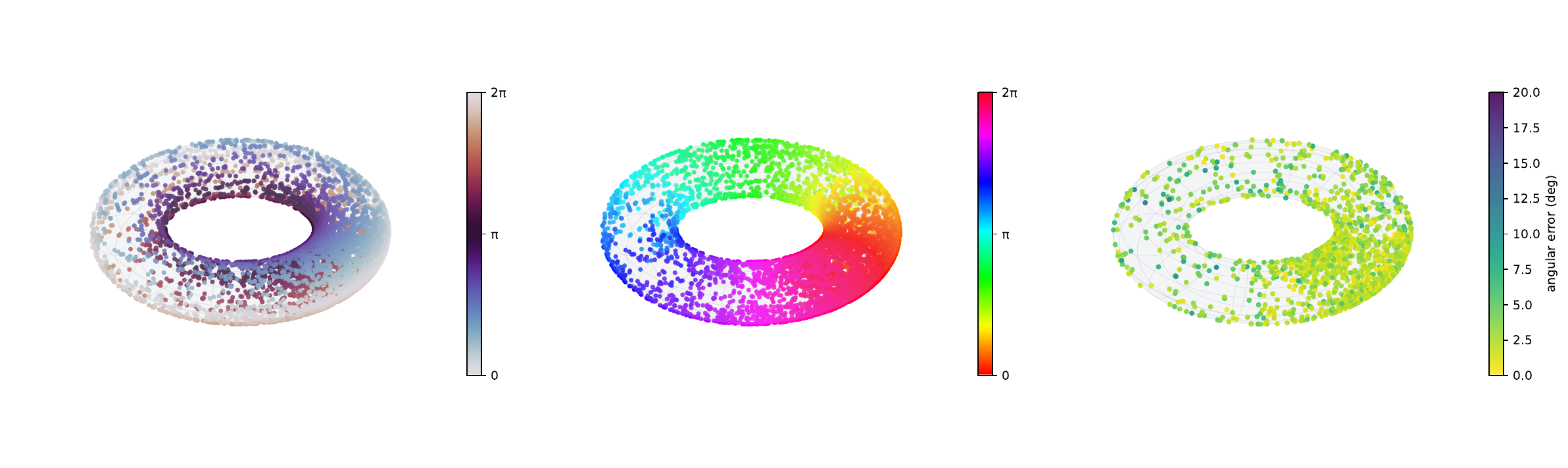}%
};

\begin{scope}[
    x={(img.south east)},
    y={(img.north west)}
]
    \node[
        anchor=north,
        font=\bfseries\small,
        fill=white,
        fill opacity=0.8,
        text opacity=1,
        inner sep=1pt
    ] at (0.155,0.175) {(a)};

    \node[
        anchor=north,
        font=\bfseries\small,
        fill=white,
        fill opacity=0.8,
        text opacity=1,
        inner sep=1pt
    ] at (0.485,0.175) {(b)};

    \node[
        anchor=north,
        font=\bfseries\small,
        fill=white,
        fill opacity=0.8,
        text opacity=1,
        inner sep=1pt
    ] at (0.815,0.175) {(c)};
\end{scope}
\end{tikzpicture}

\caption{\emph{Ground-truth and Recovered Angular Coordinates.}
(a,b) The 8{,}097 frames positioned on $\mathbb T^2$ using the ground-truth
rotation coordinates and colored by $\theta_0$ and $\theta_1$, respectively.
(c) Held-out frames positioned using angles decoded from frozen LocaLeJEPA
features ($\lambda{=}0.1$, $K{=}2$), with color indicating per-frame angular
error (mean $2.7^\circ$).}
\label{fig:yoda-torus}
\end{figure}

\textbf{Results.}
Figure~\ref{fig:yoda-torus} provides the basic construction. The first two panels
show the ground-truth torus colored by the two periodic coordinates, while the final
panel places held-out frames according to the angles decoded from fixed-$K$
LocaLeJEPA ($\lambda{=}0.1$, $K{=}2$) and colors them by their angular error. The
decoded coordinates cover the full angular domain, with most residual error remaining
small relative to the range of the two periodic variables. Figure~\ref{fig:yoda-torus-compare} extends this view across all evaluated encoders and probe-label budgets. Rows correspond to representations and columns to 10, 50, 100, 500, and all 5{,}182 probe-training frames. Each panel reports the MAE for the same held-out split and colors individual predictions using a shared angular
error scale. The progression across columns makes the effect of probe supervision visible directly: sparse-label probes produce more dispersed and higher-error reconstructions, whereas increasing the training budget progressively concentrates the predictions around the underlying toroidal coordinate structure. To compare methods spatially, Figure~\ref{fig:yoda-torus-adv} uses fixed-$K$
LocaLeJEPA as the reference and visualizes the per-frame difference:
\begin{equation}
    \Delta_{\mathrm{err}}
    =
    \operatorname{err}(\mathrm{competitor})
    -
    \operatorname{err}(\mathrm{LocaLeJEPA})
\end{equation}
Positive values therefore indicate frames on which fixed-$K$ LocaLeJEPA is more
accurate. Each panel \rev{also} reports the mean error difference and the fraction
of held-out frames won by LocaLeJEPA. At the 10-frame budget the comparison is mixed,
whereas from 50 frames onward the mean difference is positive against every displayed
competitor on this split. At the full budget, the mean advantage remains positive
throughout the grid and LocaLeJEPA is more accurate on $53$--$69\%$ of held-out
frames, depending on the competitor. All three visualizations use one fixed probe split, whereas
\Cref{app:yoda-lc} reports averages over ten shared splits.

\begin{figure}[p]
\centering
\includegraphics[
    width=\textwidth,
    height=0.82\textheight,
    keepaspectratio
]{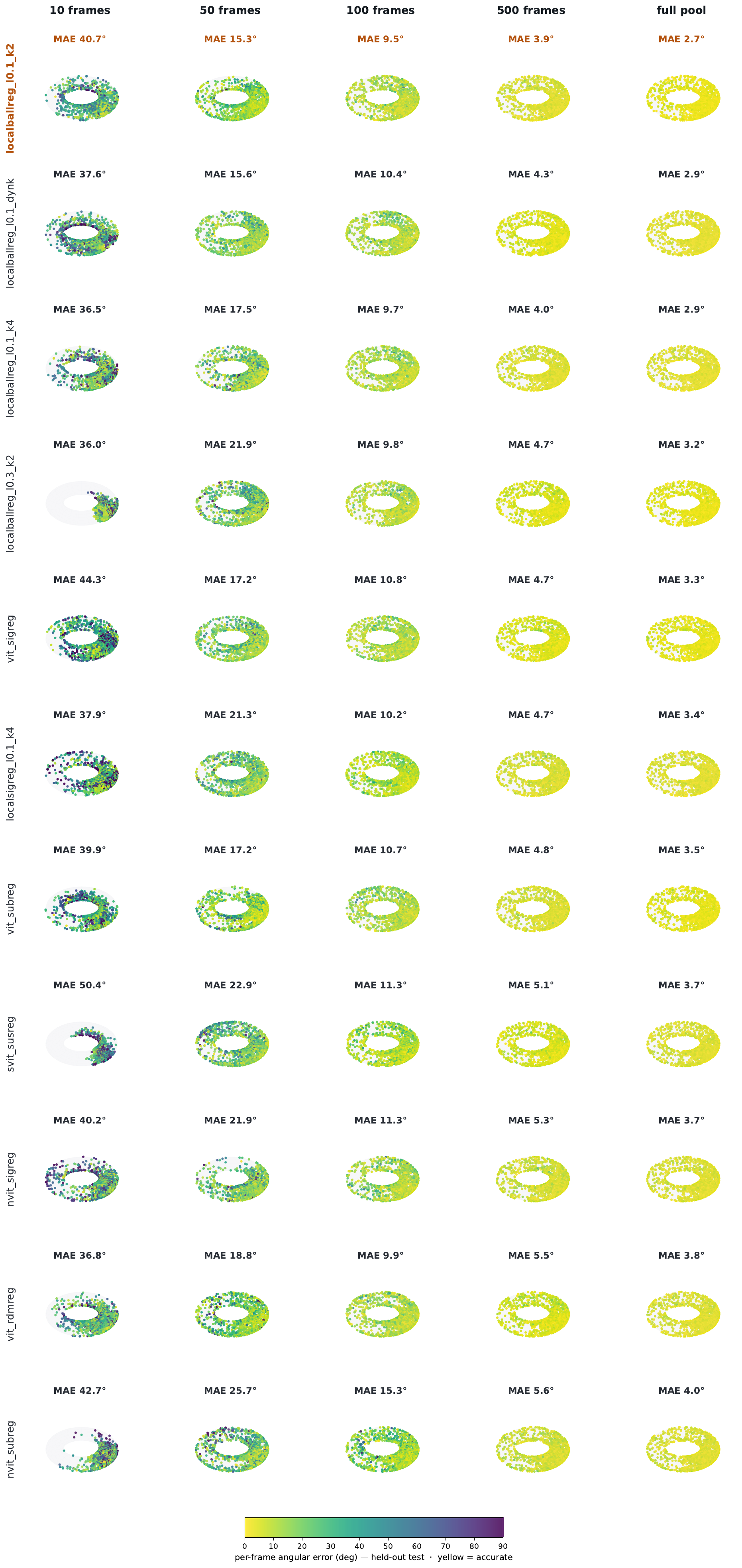}
\caption{\emph{Spatial Rotation Recovery across Encoders and Probe-label Budgets.}
Held-out frames are positioned according to their probe-decoded rotation coordinates
and colored by angular error using a common scale across all panels. Rows compare
encoders and columns vary the number of labeled frames used to fit the probe.
Visualizations correspond to one shared probe split. The aggregate results over ten splits are reported in \Cref{app:yoda-lc}.}
\label{fig:yoda-torus-compare}
\end{figure}

\begin{figure}[p]
\centering
\includegraphics[
    width=\textwidth,
    height=0.82\textheight,
    keepaspectratio
]{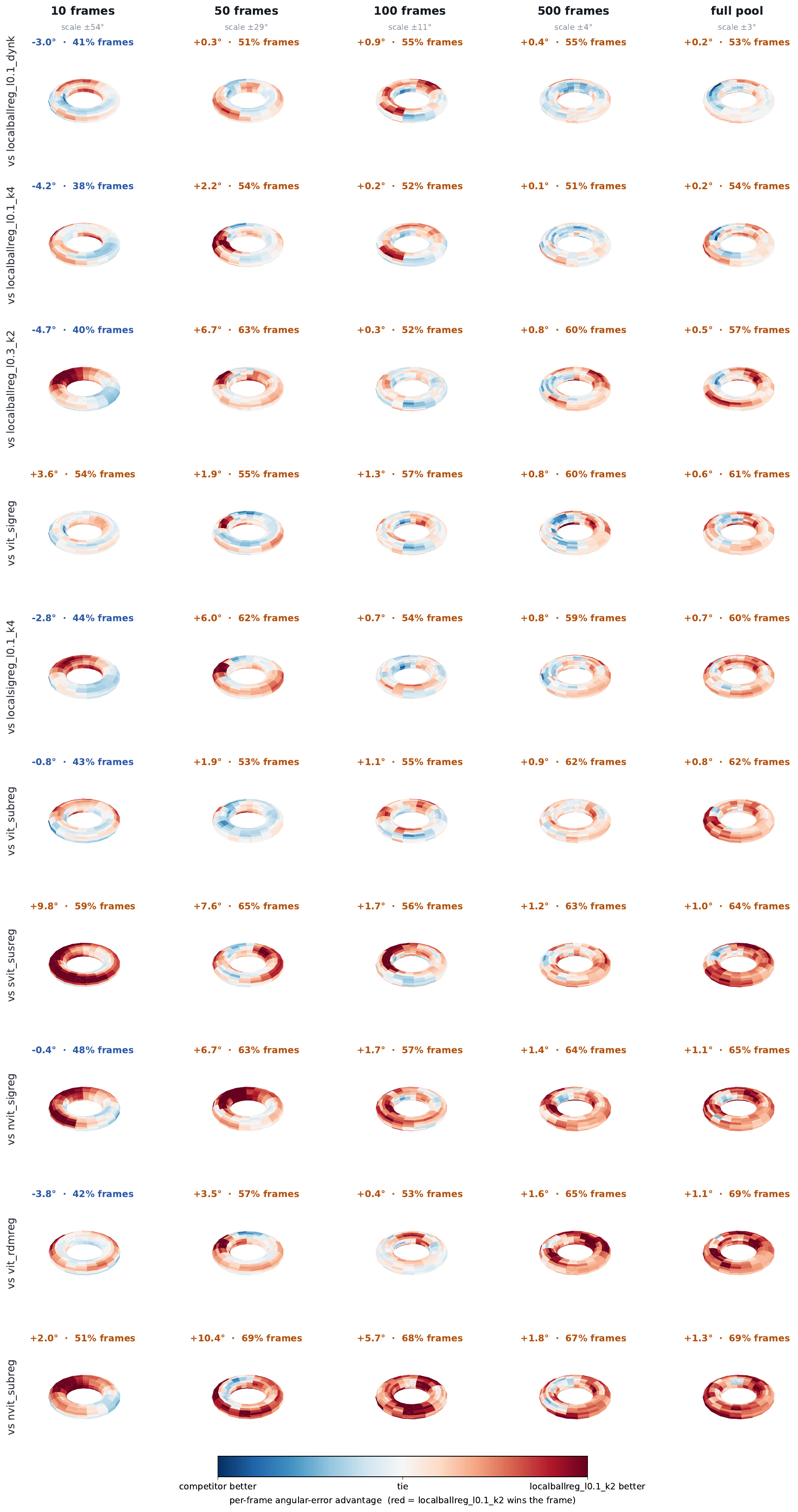}
\caption{\emph{Spatial Distribution of the LocaLeJEPA Recovery Advantage.}
Regional mean of
$\operatorname{err}(\mathrm{competitor})
-\operatorname{err}(\mathrm{LocaLeJEPA})$
relative to fixed-$K$ LocaLeJEPA ($\lambda{=}0.1$, $K{=}2$).
Positive values (red) favor LocaLeJEPA, while negative values (blue) favor the
competitor. Panel titles report the mean error difference and the fraction of
held-out frames on which LocaLeJEPA obtains lower error. \emph{Full} denotes the
complete 5{,}182-frame probe-training pool. All panels use the same fixed probe
split.}
\label{fig:yoda-torus-adv}
\end{figure}

\clearpage
\newpage
\subsection{ImageNet-1K: Training and Evaluation Protocol}
\label{app:imagenet-protocol}
This section specifies the experimental choices particular to the ImageNet-1K study underlying \Cref{subsec:imagenet-classification,subsec:imagenet-transfer-main}. We first describe the three encoder geometries and their optimization settings, and then detail the frozen-feature classification\rev{ and} downstream-transfer protocols. Shared training, checkpointing, and representation conventions follow \Cref{app:shared-protocol}, while the compared distribution-matching objectives are defined in \Cref{app:objectives}.

\subsubsection{ImageNet-1K: Pretraining Details}
\label{app:imagenet-pretrain}
Following the experimental setup of \citep{balestriero2025lejepa}, we pretrain all models for 100 epochs on the 1.28M-image ImageNet-1K \citep{deng2009imagenet} training split. 

\textbf{Encoder Families.} We consider three encoder families spanning distinct representation geometries. The \emph{Euclidean} encoder is a standard ViT-S/8 (21.7M parameters). The \emph{Hyperbolic} encoder uses the same ViT-S/8 backbone and maps its output to a Poincar\'e ball of curvature $-0.1$, with feature clipping at radius
$2.3$ \citep{ermolov2022hyperbolic}. Finally, the \emph{$D_4$-equivariant} encoder is a 24.4M-parameter ResNet-50 implemented with escnn \citep{klee2023comparison}, retaining the regular representation at its output \rev{without} pooling to an invariant feature. All three encoders feed the common Euclidean projector defined in \Cref{app:shared-protocol}. \rev{We also train a ViT-L/14 (303.2M parameters) at $238\times238$ pixels, with LeJEPA's published ViT-L settings for both methods.} 

\textbf{Details.} All runs use AdamW with cosine learning-rate decay and weight decay
$5\times10^{-2}$. Global SIGReg \rev{at LeJEPA's published settings} and RDMReg use one warmup epoch\rev{, a learning rate of $2\times10^{-3}$ for all three encoders, and decay to half the peak rate}. LocaLeJEPA\rev{, its local control and LeJEPA (our settings)} use a learning rate of $10^{-3}$, ten warmup epochs, \rev{decay to $10^{-3}$ of the peak rate,} and gradient clipping at $1.0$. SIGReg uses $\lambda{=}0.02$\rev{ at LeJEPA's published settings and $\lambda{=}0.1$ in LeJEPA (our settings)}, while the LocaLeJEPA values of $\lambda$ and $K$ are reported with the corresponding configurations.
RDMReg uses the objective weighting defined in \Cref{app:objectives}. For the dynamic-neighborhood variant, we instantiate the schedule of \Cref{app:shared-protocol} with $K_{\max}=10$, and $E_{\mathrm{step}}=10$, so that the partition evolves from $K{=}1$ to $K{=}10$ over the 100-epoch training schedule, reaching its maximum granularity at epoch 90. Following the shared checkpointing protocol described in \Cref{app:shared-protocol}, every downstream evaluation uses the final epoch-100 checkpoint. \rev{For ViT-L/14, both methods use LeJEPA's published ViT-L settings, with a learning rate of $5\times10^{-4}$, ten warmup epochs, decay to $10^{-3}$ of the peak rate, batches of 512 images, $\lambda{=}0.05$\revred{,} 1024 slicing directions\revred{ and an integration range of $[-5,5]$}.}

\subsubsection{ImageNet-1K Frozen-Feature Readouts}
\label{app:imagenet-readouts}

Following the in-domain evaluation protocol of
SPHERE-JEPA \citep{nicollier2026spherejepa}, the ImageNet-1K training split
forms the feature bank and the validation split is used for evaluation.
Images are resized and center-cropped to $128\times128$ \rev{($238\times238$ for ViT-L/14)} and normalized using
ImageNet statistics.

\textbf{$k$-NN and Linear Probes.}
For what concerns \emph{$k$-NN probe}, the frozen representations are $\ell_2$-normalized and compared using cosine similarity. Class votes are weighted according to $\exp\!\left(\operatorname{sim}/0.07\right)$. We evaluate $k\in\{10,20,100,200\}$ and use $k{=}20$ in the main paper. \rev{Across the tested configurations, this value is within $0.81$ percentage points of the best setting, and the ordering of methods is the same at every $k$ on ViT-S/8 and ViT-L/14. On the other two encoders, three pairs of runs that are within one point of each other swap places.} Instead, in the \emph{linear probe}, frozen representations are standardized using statistics computed from the ImageNet-1K training split. We fit a single affine classifier for 50 epochs using SGD with momentum $0.9$, zero weight decay, batch size $4096$, and cosine learning-rate decay. The learning rate is selected from $\{0.01,0.03,0.1,0.3,1,3,10\}$, 
using a 2\% holdout from the training split. After selection, the classifier is refitted on the complete training split and evaluated once on the ImageNet-1K validation set. Both readouts operate on the same frozen epoch-100 representations. \rev{ViT-S/8 results for LeJEPA, LeJEPA (our settings) and LocaLeJEPA with $K{=}1$ (at both weights) and $K{=}4$ are means over three pretraining seeds. All other ImageNet-1K results are single pretraining runs, and all results use one linear-probe seed.}

\begin{table}[t]
\revstart
\centering
\caption{\emph{ImageNet-1K Frozen-feature Classification across All Configurations.} Top-1 $k$-NN and linear-probe accuracy (\%) on the ImageNet-1K validation set using the final-epoch checkpoints and one linear-probe seed. ViT-S/8 rows other than RDMReg are mean $\pm$ SD over three pretraining seeds, and all other rows are single runs. \textbf{Bold} denotes the best mean within each encoder family.}
\label{tab:in1k-indomain-full}
\small
\setlength{\tabcolsep}{4pt}
\resizebox{\linewidth}{!}{\begin{tabular}{l cccc cc}
\toprule
 & \multicolumn{4}{c}{$k$-NN top-1 (\%)} & \multicolumn{2}{c}{Linear (\%)} \\
\cmidrule(lr){2-5}\cmidrule(lr){6-7}
Method & $k{=}10$ & $k{=}20$ & $k{=}100$ & $k{=}200$ & top-1 & top-5 \\
\midrule
\multicolumn{7}{l}{\emph{Plain ViT-S/8 (Euclidean, 21.7M)}} \\ 
LeJEPA & $32.1 \pm 0.36$ & $32.8 \pm 0.33$ & $31.3 \pm 0.41$ & $29.7 \pm 0.35$ & $44.6 \pm 0.23$ & $70.8 \pm 0.17$ \\ 
LeJEPA (our settings) & $38.6 \pm 0.15$ & $38.9 \pm 0.09$ & $36.8 \pm 0.16$ & $35.3 \pm 0.15$ & $51.6 \pm 0.28$ & $77.3 \pm 0.36$ \\ 
Rectified LpJEPA (RDMReg) & $11.6$ & $12.6$ & $13.4$ & $13.1$ & $25.7$ & $48.5$ \\ 
LocaLeJEPA ($\lambda{=}0.1$, $K{=}1$) & $45.4 \pm 0.23$ & $45.4 \pm 0.22$ & $42.9 \pm 0.19$ & $41.2 \pm 0.18$ & $59.7 \pm 0.22$ & $83.5 \pm 0.09$ \\ 
LocaLeJEPA ($\lambda{=}0.025$, $K{=}1$) & $47.8 \pm 0.22$ & $47.7 \pm 0.26$ & $45.0 \pm 0.12$ & $43.1 \pm 0.17$ & $59.8 \pm 0.29$ & $83.6 \pm 0.24$ \\ 
LocaLeJEPA ($\lambda{=}0.1$, $K{=}4$) & $\mathbf{50.1 \pm 0.22}$ & $\mathbf{50.1 \pm 0.12}$ & $\mathbf{47.5 \pm 0.24}$ & $\mathbf{45.7 \pm 0.23}$ & $\mathbf{62.1 \pm 0.27}$ & $\mathbf{85.1 \pm 0.16}$ \\ 
\midrule
\multicolumn{7}{l}{\emph{Hyperbolic ViT-S/8 (Poincar\'e ball, 21.7M)}} \\ 
LeJEPA & $32.1$ & $32.6$ & $31.3$ & $29.9$ & $44.0$ & $69.8$ \\ 
LeJEPA (our settings) & $39.1$ & $39.4$ & $37.3$ & $35.7$ & $51.6$ & $77.5$ \\ 
Rectified LpJEPA (RDMReg) & $12.1$ & $13.1$ & $13.8$ & $13.6$ & $26.0$ & $48.7$ \\ 
LocaLeJEPA ($\lambda{=}0.1$, $K{=}1$) & $45.3$ & $45.7$ & $43.0$ & $41.3$ & $59.5$ & $83.4$ \\ 
LocaLeJEPA ($\lambda{=}0.025$, $K{=}1$) & $47.8$ & $48.0$ & $45.0$ & $43.0$ & $59.9$ & $83.3$ \\ 
LocaLeJEPA ($\lambda{=}0.1$, $K{=}2$) & $48.6$ & $48.5$ & $46.0$ & $44.2$ & $61.4$ & $84.6$ \\ 
LocaLeJEPA ($\lambda{=}0.1$, $K{=}4$) & $50.5$ & $50.6$ & $47.9$ & $46.1$ & $61.9$ & $84.9$ \\ 
LocaLeJEPA ($\lambda{=}0.1$, $K{=}1{\to}10$) & $\mathbf{51.0}$ & $\mathbf{50.9}$ & $\mathbf{48.4}$ & $\mathbf{46.6}$ & $\mathbf{62.2}$ & $\mathbf{85.1}$ \\ 
LocaLeJEPA ($\lambda{=}0.3$, $K{=}2$) & $38.7$ & $39.0$ & $37.5$ & $36.0$ & $51.7$ & $77.6$ \\ 
LocaLeJEPA-SIGReg ($\lambda{=}0.1$, $K{=}4$) & $35.1$ & $35.4$ & $33.7$ & $32.2$ & $48.2$ & $73.8$ \\ 
\midrule
\multicolumn{7}{l}{\emph{$D_4$-equivariant ResNet-50 (24.4M)}} \\ 
LeJEPA & $33.9$ & $34.5$ & $33.2$ & $31.7$ & $43.4$ & $70.7$ \\ 
LeJEPA (our settings) & $39.5$ & $39.7$ & $38.3$ & $36.9$ & $48.8$ & $75.7$ \\ 
Rectified LpJEPA (RDMReg) & $22.7$ & $23.4$ & $22.6$ & $21.8$ & $31.3$ & $56.9$ \\ 
LocaLeJEPA ($\lambda{=}0.1$, $K{=}1$) & $43.7$ & $43.9$ & $42.6$ & $41.2$ & $52.4$ & $79.0$ \\ 
LocaLeJEPA ($\lambda{=}0.025$, $K{=}1$) & $44.0$ & $44.5$ & $42.6$ & $40.9$ & $52.4$ & $78.7$ \\ 
LocaLeJEPA ($\lambda{=}0.1$, $K{=}2$) & $45.8$ & $46.1$ & $44.6$ & $43.2$ & $54.3$ & $79.9$ \\ 
LocaLeJEPA ($\lambda{=}0.1$, $K{=}4$) & $\mathbf{47.0}$ & $\mathbf{47.5}$ & $45.8$ & $\mathbf{44.6}$ & $\mathbf{55.2}$ & $80.6$ \\ 
LocaLeJEPA ($\lambda{=}0.1$, $K{=}1{\to}10$) & $\mathbf{47.0}$ & $47.4$ & $\mathbf{45.9}$ & $44.3$ & $55.1$ & $\mathbf{80.8}$ \\ 
LocaLeJEPA ($\lambda{=}0.3$, $K{=}2$) & $40.0$ & $40.4$ & $39.1$ & $37.7$ & $48.7$ & $75.8$ \\ 
LocaLeJEPA-SIGReg ($\lambda{=}0.1$, $K{=}4$) & $34.2$ & $34.8$ & $33.6$ & $32.4$ & $43.2$ & $69.5$ \\ 
\midrule
\multicolumn{7}{l}{\emph{Plain ViT-L/14 (Euclidean, 303.2M, LeJEPA's ViT-L settings)}} \\ 
LeJEPA & $45.8$ & $46.0$ & $44.0$ & $42.3$ & $54.7$ & $79.9$ \\ 
LocaLeJEPA ($\lambda{=}0.05$, $K{=}1$) & $46.3$ & $46.5$ & $44.6$ & $43.0$ & $54.8$ & $80.1$ \\ 
LocaLeJEPA ($\lambda{=}0.05$, $K{=}4$) & $\mathbf{47.2}$ & $\mathbf{47.4}$ & $\mathbf{45.5}$ & $\mathbf{43.8}$ & $\mathbf{61.0}$ & $\mathbf{84.2}$ \\ 
\bottomrule
\end{tabular}}
\end{table}

\textbf{Downstream Transfer.}
\label{app:imagenet-transfer}
We evaluate the same frozen ImageNet-pretrained encoders on seven downstream
datasets spanning textures, fine-grained recognition, and generic object
classification: DTD \citep{cimpoi14describing}, FGVC-Aircraft
\citep{maji13fine-grained}, CIFAR-10 and CIFAR-100
\citep{krizhevsky2009learning}, Flowers-102
\citep{nilsback2008automated}, Food-101 \citep{bossard14}, and
Oxford-IIIT Pets \citep{parkhi12a}. We use DTD partition~1 train+val (3{,}760 images, 47 classes), Aircraft
trainval (6{,}667/100), CIFAR-10 (50{,}000/10), CIFAR-100
(50{,}000/100), Flowers-102 train+val (2{,}040/102), Food-101
(75{,}750/101), and Pets trainval (3{,}680/37). The evaluation uses the
corresponding official test split.  Images are resized and center-cropped to $128\times128$\revred{ ($238\times238$ for ViT-L/14)}, with no augmentation
applied during probe training. Frozen representations are standardized and a
single linear classifier is optimized for 100 epochs using SGD with momentum
$0.9$, zero weight decay, batch size $256$, and cosine learning-rate decay.
The learning rate is selected from
$\{10^{-3},3\times10^{-3},\ldots,30\}$ using a 10\% holdout from the
training partition, after which the classifier is refitted on the complete
training split. We report top-1 test accuracy\rev{, and} \emph{Avg} denotes the
unweighted mean across the seven datasets.




\subsubsection{ImageNet-1K: Full Results}
\label{app:imagenet-full}
\Cref{tab:in1k-indomain-full,tab:in1k-transfer-full} expand the comparison to the complete set of ImageNet-1K configurations detailed in \revred{\Cref{subsec:imagenet-classification}}, including a subset of parameters over regularization strength and neighborhood granularity.

\textbf{Regularization Strength.}
\rev{At $K{=}2$, $\lambda{=}0.1$ gives stronger ImageNet classification than $\lambda{=}0.3$ on both non-Euclidean encoder families, by $9.6$ / $9.6$ points ($k$-NN / linear) on the hyperbolic and $5.8$ / $5.5$ on the equivariant encoder. Its transfer average is also higher, by $7.2$ and $3.5$ points. At $K{=}2$, $\lambda{=}0.3$ has an effective strength of $9.6$, above the range that trains well on Imagenette (\Cref{sec:effective-strength}). The weight of the local term needs tuning together with $K$.}

\textbf{Neighborhood granularity.}
\rev{At $\lambda{=}0.1$, the dynamic schedule stays within $0.4$ points of fixed $K{=}4$ on both readouts. Fixed $K{=}2$ trails $K{=}4$ by $2.0$ and $1.4$ points in $k$-NN on the hyperbolic and equivariant encoders, with linear accuracy within $1.0$ point. Transfer averages follow the same pattern: the dynamic schedule is $0.5$ points above $K{=}4$ on the hyperbolic and $0.5$ points below it on the equivariant encoder, and $K{=}2$ is up to $1.1$ points below $K{=}4$. The step from one neighborhood to four is analyzed in \Cref{sec:experiments}.}

{\revstart\textbf{Locality across encoder depth.} To test whether the benefit of locality depends on encoder capacity, we also train ViT-S/8 with $6$ and $3$ transformer blocks instead of $12$ (\Cref{tab:depth}, pretraining seed $0$). At matched effective strength ($K{=}4$ at $\lambda{=}0.1$ against $K{=}1$ at $\lambda{=}0.025$), locality adds $2.2$ / $2.1$, $1.7$ / $1.9$ and $1.2$ / $0.6$ points ($k$-NN / linear) at depths $12$, $6$ and $3$. It is positive at every depth and smallest for the shallowest encoder. The difference between depths $12$ and $6$ is within the seed-to-seed variation of this gain, while the difference between depths $12$ and $3$ is not.\par}
\begin{table}[t]
\revstart
\centering
\caption{\emph{Locality across encoder depth (ImageNet-1K, ViT-S/8, 100 epochs).} $k$-NN ($k{=}20$) / linear top-1 (\%) on the ImageNet-1K validation set, final-epoch checkpoints, pretraining seed 0. $K{=}1$ at $\lambda{=}0.025$ matches the effective regularization strength of $K{=}4$ at $\lambda{=}0.1$.}
\label{tab:depth}
\small
\begin{tabular}{l ccc}
\toprule
Depth (blocks) & $K{=}1$, $\lambda{=}0.1$ & $K{=}1$, $\lambda{=}0.025$ & $K{=}4$, $\lambda{=}0.1$ \\
\midrule
12 & $45.7$ / $59.9$ & $47.9$ / $60.0$ & $50.1$ / $62.1$ \\ 
6 & $41.6$ / $52.9$ & $42.7$ / $53.1$ & $44.4$ / $55.0$ \\ 
3 & $32.0$ / $40.0$ & $32.0$ / $40.2$ & $33.2$ / $40.8$ \\ 
\bottomrule
\end{tabular}
\end{table}

{\revstart\textbf{ViT-L/14 at matched strength.} On ViT-L/14, $K{=}1$ at a quarter of the weight ($\lambda{=}0.0125$, matching $K{=}4$ at $\lambda{=}0.05$) does not train stably. Its online probe is at $0.27\%$ after six epochs, against $8.62$--$15.56\%$ for the other three ViT-L/14 runs, and it ends at $25.06$ / $35.14$ ($k$-NN / linear). We therefore report locality at matched strength for the three $128$-pixel encoders only. On ViT-L/14, $K{=}4$ adds $0.8$ / $6.2$ points over $K{=}1$ at the same weight.\par}

\begin{table}[t]
\revstart
\centering
\caption{\emph{Transfer Linear Probing across All Configurations.} Top-1 accuracy (\%) on the official test splits of seven downstream datasets using frozen final-epoch ImageNet-1K representations and one linear-probe seed. ViT-S/8 rows other than RDMReg are mean $\pm$ SD over three pretraining seeds, and all other rows are single runs. \textbf{Bold} marks the best mean within each encoder family and column.}
\label{tab:in1k-transfer-full}
\small
\setlength{\tabcolsep}{3pt}
\resizebox{\linewidth}{!}{\begin{tabular}{l ccccccc c}
\toprule
Method & DTD & Aircraft & CIFAR-10 & CIFAR-100 & Flowers & Food & Pets & Avg \\
\midrule
\multicolumn{9}{l}{\emph{Plain ViT-S/8 (Euclidean, 21.7M)}} \\ 
LeJEPA & $60.0 \pm 0.61$ & $28.4 \pm 0.47$ & $86.1 \pm 0.47$ & $62.6 \pm 1.33$ & $75.4 \pm 0.38$ & $51.0 \pm 0.82$ & $59.8 \pm 1.09$ & $60.5 \pm 0.52$ \\ 
LeJEPA (our settings) & $64.1 \pm 0.64$ & $30.3 \pm 0.87$ & $88.7 \pm 0.18$ & $66.0 \pm 0.19$ & $78.9 \pm 0.22$ & $56.4 \pm 0.29$ & $65.9 \pm 0.52$ & $64.3 \pm 0.13$ \\ 
Rectified LpJEPA (RDMReg) & $44.2$ & $18.4$ & $74.7$ & $49.1$ & $52.4$ & $38.8$ & $37.4$ & $45.0$ \\ 
LocaLeJEPA ($\lambda{=}0.1$, $K{=}1$) & $69.3 \pm 0.32$ & $38.4 \pm 0.43$ & $91.7 \pm 0.32$ & $71.5 \pm 0.87$ & $83.7 \pm 0.20$ & $63.5 \pm 0.08$ & $73.6 \pm 0.81$ & $70.3 \pm 0.04$ \\ 
LocaLeJEPA ($\lambda{=}0.025$, $K{=}1$) & $\mathbf{70.0 \pm 0.48}$ & $40.2 \pm 1.21$ & $91.9 \pm 0.22$ & $72.5 \pm 0.46$ & $85.2 \pm 0.18$ & $64.1 \pm 0.11$ & $73.4 \pm 0.75$ & $71.1 \pm 0.28$ \\ 
LocaLeJEPA ($\lambda{=}0.1$, $K{=}4$) & $69.4 \pm 1.47$ & $\mathbf{42.1 \pm 0.61}$ & $\mathbf{92.2 \pm 0.34}$ & $\mathbf{73.6 \pm 0.37}$ & $\mathbf{85.3 \pm 0.34}$ & $\mathbf{66.0 \pm 0.13}$ & $\mathbf{77.3 \pm 1.79}$ & $\mathbf{72.3 \pm 0.29}$ \\ 
\midrule
\multicolumn{9}{l}{\emph{Hyperbolic ViT-S/8 (Poincar\'e ball, 21.7M)}} \\ 
LeJEPA & $61.5$ & $28.2$ & $85.9$ & $62.6$ & $75.2$ & $50.7$ & $58.0$ & $60.3$ \\ 
LeJEPA (our settings) & $60.8$ & $31.4$ & $88.6$ & $66.1$ & $78.7$ & $56.7$ & $66.7$ & $64.1$ \\ 
Rectified LpJEPA (RDMReg) & $45.0$ & $15.9$ & $73.6$ & $49.1$ & $53.1$ & $38.9$ & $36.6$ & $44.6$ \\ 
LocaLeJEPA ($\lambda{=}0.1$, $K{=}1$) & $68.8$ & $37.5$ & $92.0$ & $71.4$ & $83.7$ & $63.8$ & $73.3$ & $70.1$ \\ 
LocaLeJEPA ($\lambda{=}0.025$, $K{=}1$) & $69.5$ & $39.6$ & $92.4$ & $72.7$ & $84.5$ & $64.0$ & $73.4$ & $70.9$ \\ 
LocaLeJEPA ($\lambda{=}0.1$, $K{=}2$) & $69.3$ & $41.5$ & $\mathbf{92.7}$ & $\mathbf{73.4}$ & $\mathbf{85.6}$ & $65.0$ & $77.1$ & $72.1$ \\ 
LocaLeJEPA ($\lambda{=}0.1$, $K{=}4$) & $69.1$ & $41.5$ & $92.3$ & $72.7$ & $85.3$ & $65.3$ & $\mathbf{78.6}$ & $72.1$ \\ 
LocaLeJEPA ($\lambda{=}0.1$, $K{=}1{\to}10$) & $\mathbf{70.1}$ & $\mathbf{42.5}$ & $92.6$ & $73.3$ & $\mathbf{85.6}$ & $\mathbf{65.9}$ & $78.5$ & $\mathbf{72.6}$ \\ 
LocaLeJEPA ($\lambda{=}0.3$, $K{=}2$) & $64.3$ & $30.5$ & $88.9$ & $66.3$ & $79.5$ & $57.9$ & $66.4$ & $64.8$ \\ 
LocaLeJEPA-SIGReg ($\lambda{=}0.1$, $K{=}4$) & $57.3$ & $30.8$ & $87.9$ & $65.1$ & $77.8$ & $55.9$ & $65.0$ & $62.8$ \\ 
\midrule
\multicolumn{9}{l}{\emph{$D_4$-equivariant ResNet-50 (24.4M)}} \\ 
LeJEPA & $55.4$ & $23.7$ & $86.2$ & $60.6$ & $63.9$ & $40.6$ & $52.2$ & $54.7$ \\ 
LeJEPA (our settings) & $59.5$ & $25.7$ & $88.9$ & $63.9$ & $67.4$ & $45.5$ & $61.1$ & $58.8$ \\ 
Rectified LpJEPA (RDMReg) & $48.0$ & $23.0$ & $80.5$ & $53.0$ & $58.0$ & $31.6$ & $42.4$ & $48.1$ \\ 
LocaLeJEPA ($\lambda{=}0.1$, $K{=}1$) & $60.2$ & $25.7$ & $90.2$ & $65.9$ & $67.8$ & $47.5$ & $65.0$ & $60.3$ \\ 
LocaLeJEPA ($\lambda{=}0.025$, $K{=}1$) & $61.3$ & $25.3$ & $90.5$ & $66.2$ & $66.9$ & $47.1$ & $66.3$ & $60.5$ \\ 
LocaLeJEPA ($\lambda{=}0.1$, $K{=}2$) & $\mathbf{61.8}$ & $28.4$ & $90.2$ & $66.5$ & $69.7$ & $49.0$ & $67.9$ & $61.9$ \\ 
LocaLeJEPA ($\lambda{=}0.1$, $K{=}4$) & $61.3$ & $\mathbf{29.3}$ & $\mathbf{91.2}$ & $\mathbf{69.1}$ & $\mathbf{71.5}$ & $\mathbf{49.5}$ & $69.1$ & $\mathbf{63.0}$ \\ 
LocaLeJEPA ($\lambda{=}0.1$, $K{=}1{\to}10$) & $61.5$ & $27.4$ & $90.9$ & $67.7$ & $70.4$ & $49.4$ & $\mathbf{69.9}$ & $62.5$ \\ 
LocaLeJEPA ($\lambda{=}0.3$, $K{=}2$) & $55.9$ & $25.4$ & $88.3$ & $62.9$ & $66.5$ & $46.6$ & $63.7$ & $58.5$ \\ 
LocaLeJEPA-SIGReg ($\lambda{=}0.1$, $K{=}4$) & $48.4$ & $25.5$ & $88.8$ & $60.6$ & $52.5$ & $25.9$ & $65.3$ & $52.4$ \\ 
\midrule
\multicolumn{9}{l}{\emph{Plain ViT-L/14 (Euclidean, 303.2M, LeJEPA's ViT-L settings)}} \\ 
LeJEPA & $62.3$ & $20.1$ & $90.5$ & $66.2$ & $68.2$ & $57.2$ & $64.8$ & $61.3$ \\ 
LocaLeJEPA ($\lambda{=}0.05$, $K{=}1$) & $61.3$ & $17.3$ & $90.4$ & $65.9$ & $66.8$ & $56.3$ & $62.6$ & $60.1$ \\ 
LocaLeJEPA ($\lambda{=}0.05$, $K{=}4$) & $\mathbf{66.0}$ & $\mathbf{26.1}$ & $\mathbf{92.3}$ & $\mathbf{70.6}$ & $\mathbf{74.0}$ & $\mathbf{62.8}$ & $\mathbf{66.4}$ & $\mathbf{65.5}$ \\ 
\bottomrule
\end{tabular}}
\end{table}

\textbf{Locality versus reference distribution.}
LocaLeJEPA-SIGReg preserves the same local partitioning as LocaLeJEPA while
replacing the centered uniform-ball target with Gaussian matching on uncentered
clusters. \rev{Trained with the same settings, it is $15.2$ / $13.7$ points ($k$-NN / linear) below LocaLeJEPA on the hyperbolic and $12.7$ / $12.0$ on the equivariant encoder, and its transfer average is $9.3$ and $10.6$ points lower. It also falls below LeJEPA (our settings). Local partitioning alone does not explain the gain}, although,
as discussed in \revred{\Cref{app:objectives}}, this control does not separately
identify the effects of centering and reference distribution.

\textbf{RDMReg.}
RDMReg trails global SIGReg on the ImageNet classification readouts across the evaluated encoder families\rev{, by $12.0$--$18.9$ points in linear top-1 accuracy and $11.0$--$20.2$ in $k$-NN, with the same optimization settings for both}. This provides an additional distribution-matching
baseline with a \rev{very} different reference geometry and shows that
replacing the Gaussian target alone is not sufficient to reproduce the
LocaLeJEPA improvements.

\section{Qualitative Embedding Analysis}
\label{sec:qualitative-emb-analysis}
In this section, we present a series of qualitative analyses and comparisons between the embeddings attained by the different JEPA recipes. In \Cref{app:coil_experiment}, we complement the results presented in \Cref{fig:coil-grid-main}, showing the recovery of topology in latent space. In \Cref{app:pca}, we provide extended details about \Cref{fig:pca-teaser}, which shows that embeddings \rev{capture semantic structure} across geometric non-Euclidean embeddings. Lastly, in \Cref{app:knn-neighborhood} we perform a class consistency analysis across \rev{neighborhoods} coming from different regularizations.

\subsection{Topology Preservation on COIL Dataset}
\label{app:coil_experiment}
\paragraph{Data and training.}
COIL-100 \citep{nene1996coil} contains 7200 images of 100 objects, each \revred{taken} on a turntable at 72 poses $5^\circ$ apart. For a fixed object the frames sample a closed curve $\theta \mapsto x_\theta$, $\theta \in S^1$, in image space, and we ask whether an encoder $f$ maps this curve to a closed curve in embedding space or collapses it. We train ViT-S/8 encoders with LocaLeReg and SIGReg for 100 epochs with appearance-only augmentations (\revred{color} jitter, grayscale, Gaussian blur) and \revred{analyze} the 512-d encoder output.

\paragraph{Persistent homology.}
To measure whether the 72 embeddings $\{f(x_\theta)\}$ of an object lie on a loop we use persistent homology \citep{edelsbrunner2002topological,zomorodian2005computing}. For a scale $r \ge 0$, the Vietoris--Rips complex $\mathrm{VR}_r$ contains a simplex for every set of points whose pairwise distances are at most $r$. These complexes are nested as $r$ grows, and each topological feature appears at some scale $b$ (its birth) and disappears at a scale $d$ (its death): in dimension 0 the features are connected components, in dimension 1 they are loops. The persistence diagram plots every feature as the point $(b, d)$, and its lifetime $d - b$ is the range of scales over which it exists. For a sampled closed curve, the loop is born once consecutive samples connect, and dies once the hole fills in\rev{. A} long lifetime therefore indicates a \rev{real} hole, while features close to the diagonal are likely due to noise\rev{. This} is made precise by the stability of persistence diagrams \citep{chazal2009gromov}. We compute diagrams with Ripser \citep{bauer2021ripser} on the full 512-d embeddings after dividing all distances by the median pairwise distance among the 72 points of the object, so that lifetimes are comparable across representations of different scale and dimension.

\paragraph{Results.}
In pixel space the median longest $H_1$ lifetime over the 100 objects is 0.387, so the loop is present in the input. But encoders shorten it: under either regularizer the median is at most 0.196 for every seed and checkpoint. Over 3 training seeds, the median lifetime at the final weights is higher under LocaLeReg for every seed ($0.190 \pm 0.005$ against $0.175 \pm 0.002$, mean and standard deviation over seeds)\rev{. At} the checkpoint with the best accuracy of the online linear probe, it is higher for two of the three ($0.188 \pm 0.007$ against $0.181 \pm 0.012$). \Cref{fig:coil-grid} shows examples for several objects: for each method, the persistence diagram of the object's 72 embeddings and their projection on the first 2 principal components, colored by rotation angle. The same runs also measure how well each representation separates the 100 objects (\Cref{tab:coil-identity}): the probe accuracy on held-out poses (every 6-th angle of every object) and the nearest-neighbor purity, the fraction of frames whose nearest neighbor in embedding space belongs to the same object, computed on 24 frames per object. Both are higher under LocaLeReg than under SIGReg for each of the 3 seeds, at the final weights and at the best checkpoint.

\begin{table}[htbp]
  \centering
  \caption{Object identity on COIL-100, mean $\pm$ standard deviation over 3 seeds. The best checkpoint is the one with the highest probe accuracy.}
  \label{tab:coil-identity}
  \begin{tabular}{lcccc}
    \toprule
    & \multicolumn{2}{c}{Probe accuracy} & \multicolumn{2}{c}{NN purity} \\
    \cmidrule(lr){2-3} \cmidrule(lr){4-5}
    & final & best & final & best \\
    \midrule
    LocaLeReg & $0.384 \pm 0.047$ & $0.436 \pm 0.026$ & $0.492 \pm 0.004$ & $0.500 \pm 0.006$ \\
    SIGReg    & $0.264 \pm 0.016$ & $0.360 \pm 0.022$ & $0.431 \pm 0.023$ & $0.469 \pm 0.018$ \\
    \bottomrule
  \end{tabular}
\end{table}

\begin{figure}[htbp]
  \centering
  \includegraphics[width=\textwidth]{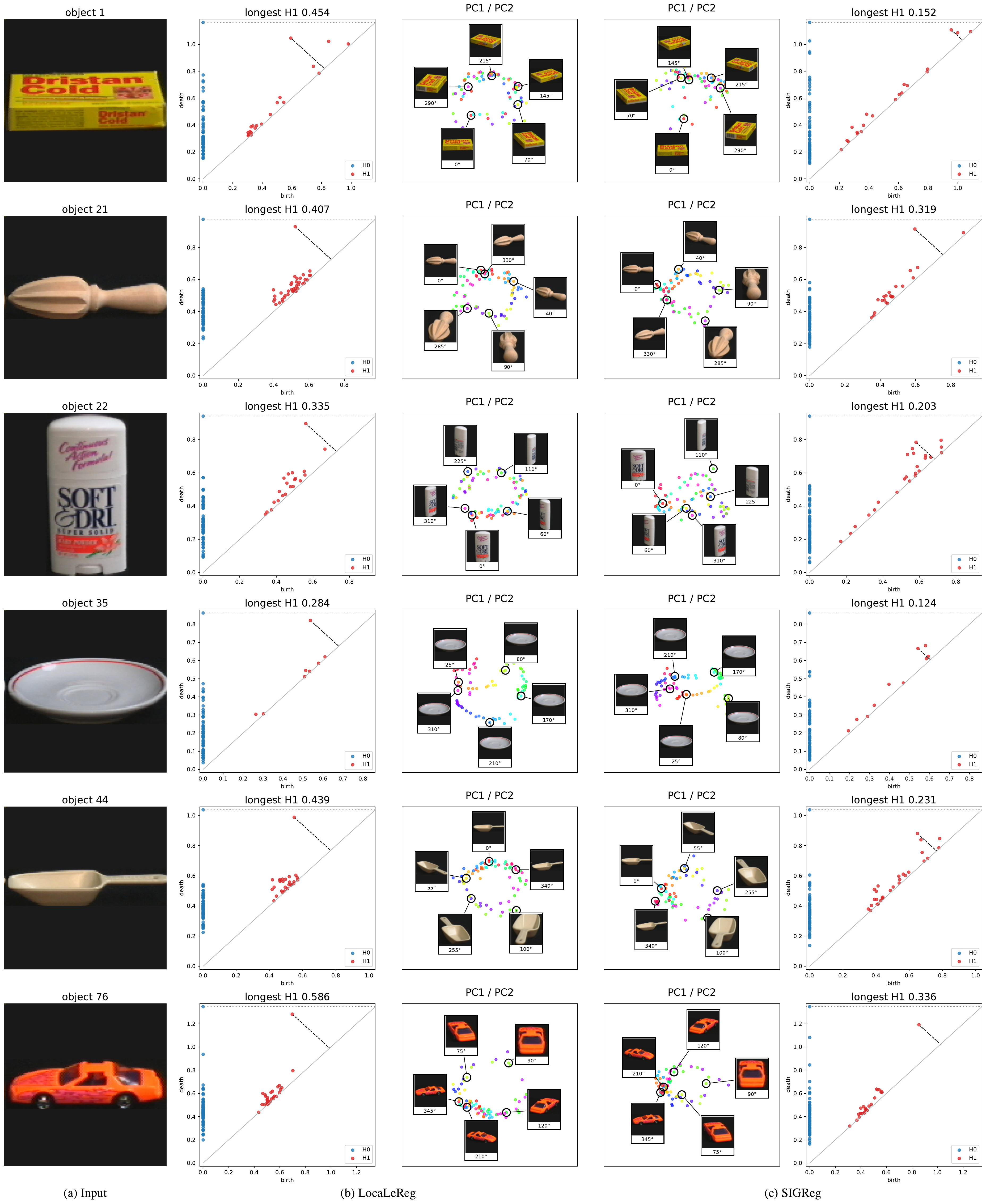}
  \caption{Per-object geometry of COIL-100 embeddings (ViT-S/8, 100 epochs, $\lambda=0.1$\rev{,} LocaLeReg with \revred{$K=8$} clusters).
Each row is one object: (a) its frame at $0^\circ$\rev{.} (b) LocaLeReg and (c) SIGReg, each with the persistence diagram and the
2-d PCA projection of the embeddings of the object's 72 frames.
\emph{Persistence}: Vietoris--Rips diagram of the 72 embeddings, rescaled to unit median pairwise distance\rev{. The} dashed segment
marks the most persistent $H_1$ class. \emph{PC1/PC2}: points are colored by rotation angle.}
  \label{fig:coil-grid}
\end{figure}


\subsection{Patch-Level Representation Structure}
\label{app:pca}

The quantitative frozen-feature evaluations characterize global representation
quality but do not reveal how spatial information is organized before pooling.
We therefore complement them with a qualitative PCA visualization of
patch-level features, following the general visualization strategy used in
LeJEPA \revred{\citep{balestriero2025lejepa}} and DINOv2
\revred{\citep{oquab2023dinov2}}.

\begin{figure}[htb]
    \centering
    \includegraphics[
        width=\linewidth
    ]{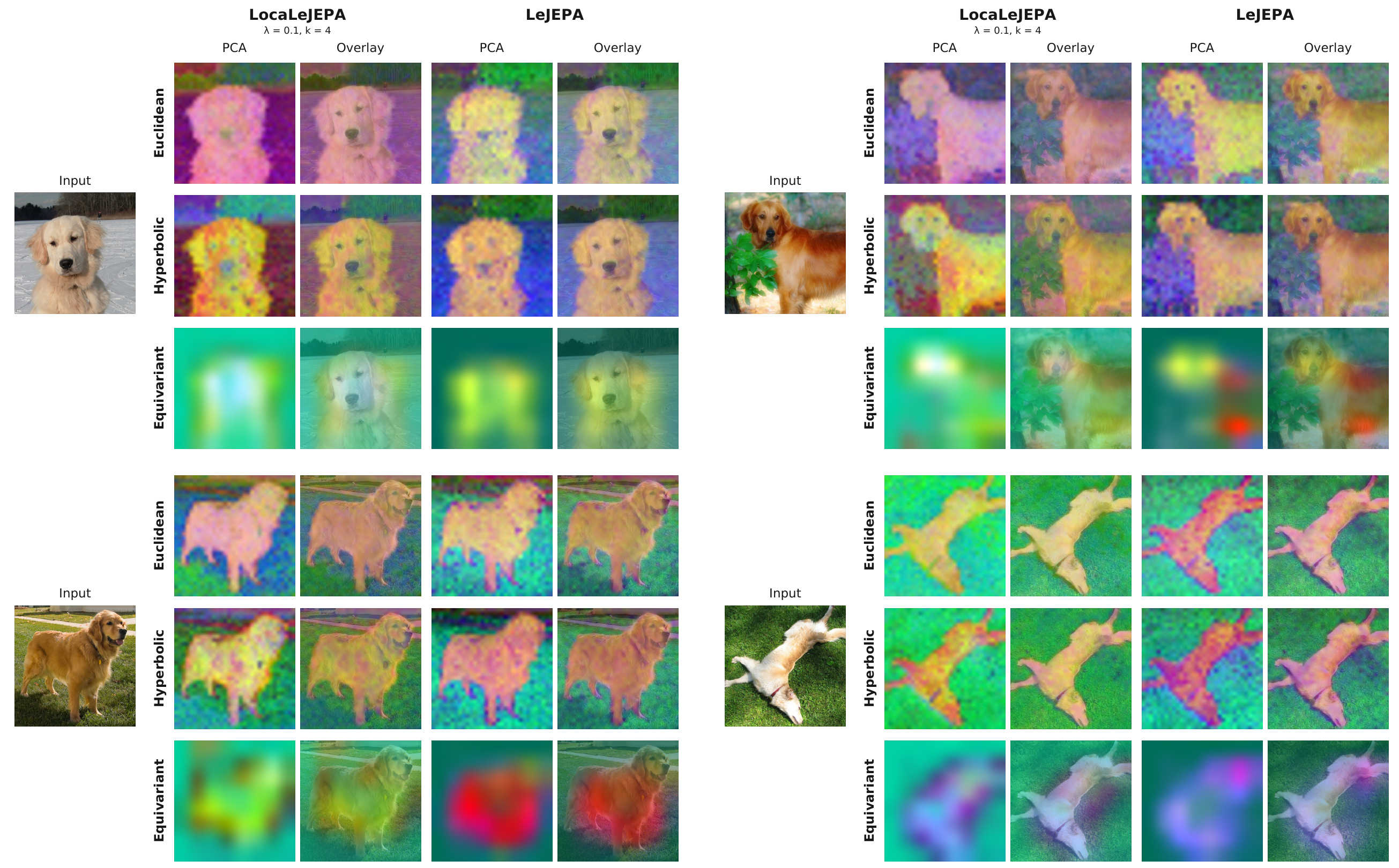}
    \caption{\emph{Patch-level Principal Components for LocaLeJEPA and LeJEPA.}
    Frozen spatial features from the same ImageNet-1K validation images are
    projected onto their first three principal components and mapped to RGB.
    Each model uses an independently fitted PCA basis, so colors are directly
    comparable across images within a model but not across models.}
    \label{fig:pca-vs-sigreg}
\end{figure}

\textbf{Results.} For each backbone--objective pair, we select five ImageNet-1K validation images
from the golden-retriever class (class 207) and extract spatial features from
the frozen encoder immediately before global aggregation. For the Euclidean
and hyperbolic ViT-S/8 encoders, we use the 384-dimensional tokens after the
final transformer normalization, excluding prefix tokens, pooling, and output
heads. At the $128\times128$ input resolution, these features form a
$16\times16$ spatial grid. For the hyperbolic encoder, features are taken
immediately before the Poincar\'e-ball mapping. For the $D_4$-equivariant
ResNet-50, we use the 512-channel equivariant feature map immediately before
average pooling. For each model, spatial features from the five images are concatenated into an $N\times d$ matrix, where the $N$ spatial locations are observations
and the $d$ feature channels are variables. PCA is fitted once to this matrix, and the
three leading components are retained. We fix the sign of each component by
requiring its largest-magnitude loading to be positive and independently rescale
the resulting component values to $[0,1]$. The three channels are then mapped
to RGB and reshaped to the native spatial grid. For visualization only, the
result is upsampled to $512\times512$ using nearest-neighbor interpolation and
overlaid on the input image at 55\% opacity. No foreground masking or
thresholding is applied. \Cref{fig:pca-vs-sigreg} compares LocaLeJEPA
($\lambda{=}0.1$, $K{=}4$) with the corresponding SIGReg baseline on four
representative images. Both objectives exhibit visible object--background
structure in their leading principal components. These qualitative plots allow the PCA to identify directions of feature variance \rev{without using} semantic labels, and each model uses an independently fitted PCA basis. \rev{As a result},
colors are comparable across images within a row, but not across independently
fitted models, and the visualization should not be interpreted as a quantitative
segmentation or localization result.

\subsection{Class Consistency of Frozen k-NN Neighborhoods}
\label{app:knn-neighborhood}

\rev{On the three $128$-pixel encoders, the largest ImageNet-1K improvement of LocaLeJEPA over LeJEPA (our settings)} appears under the
$k$-NN readout (\Cref{tab:in1k-indomain-full}), which depends directly on
the local geometry of the frozen representation. We therefore examine whether
this improvement is reflected in the composition of the neighborhoods on which
the classifier operates. Specifically, we ask whether the nearest ImageNet-1K
training examples of a validation image are more likely to share its semantic
class under LocaLeJEPA than under SIGReg. We use exactly the representation space employed by the main $k$-NN evaluation: the $128\times128$ evaluation view, frozen 512-dimensional encoder features, cosine similarity, and the complete 1.28M-image ImageNet-1K training split as the retrieval bank. \rev{This analysis uses encoders from earlier pretraining runs, not those reported in \Cref{tab:in1k-main}.}
\begin{table}[t]
\centering
\caption{\emph{Class Consistency of Frozen $k$-NN Neighborhoods by Semantic Group.}
Mean top-20 same-class neighborhood consistency on the ImageNet-1K validation set, using the full training split as the retrieval bank. LocaLeJEPA uses $\lambda{=}0.1$ and $K{=}4$. Group sizes are shown in parentheses, and the final row reports the number of individual classes for which LocaLeJEPA yields higher consistency. \textbf{Bold} denotes the higher value within each encoder family.}
\label{tab:knn-neighbor-groups}
\small
\setlength{\tabcolsep}{4pt}
\resizebox{\linewidth}{!}{\begin{tabular}{l cc cc cc}
\toprule
 & \multicolumn{2}{c}{ViT-S/8} & \multicolumn{2}{c}{H-ViT-S/8} & \multicolumn{2}{c}{$D_4$-RN50} \\
\cmidrule(lr){2-3}\cmidrule(lr){4-5}\cmidrule(lr){6-7}
 & SIGReg & LocaLeJEPA & SIGReg & LocaLeJEPA & SIGReg & LocaLeJEPA \\
\midrule
Fish (16) & 0.25 & \textbf{0.34} & 0.26 & \textbf{0.35} & 0.28 & \textbf{0.33} \\
Birds (38) & 0.13 & \textbf{0.22} & 0.13 & \textbf{0.22} & 0.14 & \textbf{0.19} \\
Amphibians (8) & 0.11 & \textbf{0.21} & 0.11 & \textbf{0.21} & 0.11 & \textbf{0.22} \\
Reptiles (36) & 0.08 & \textbf{0.15} & 0.09 & \textbf{0.16} & 0.10 & \textbf{0.17} \\
Invertebrates (86) & 0.16 & \textbf{0.26} & 0.16 & \textbf{0.26} & 0.18 & \textbf{0.25} \\
Marine mammals (4) & 0.29 & \textbf{0.40} & 0.28 & \textbf{0.41} & 0.26 & \textbf{0.31} \\
Dogs (118) & 0.07 & \textbf{0.12} & 0.07 & \textbf{0.12} & 0.06 & \textbf{0.12} \\
Wild canids (12) & 0.09 & \textbf{0.16} & 0.09 & \textbf{0.16} & 0.09 & \textbf{0.15} \\
Cats (13) & 0.22 & \textbf{0.31} & 0.22 & \textbf{0.31} & 0.20 & \textbf{0.30} \\
Bears (4) & 0.16 & \textbf{0.26} & 0.17 & \textbf{0.27} & 0.15 & \textbf{0.29} \\
Other mammals (63) & 0.12 & \textbf{0.20} & 0.12 & \textbf{0.21} & 0.13 & \textbf{0.21} \\
Objects, food, scenes (602) & 0.14 & \textbf{0.20} & 0.14 & \textbf{0.21} & 0.14 & \textbf{0.20} \\
\midrule
All 1000 classes & 0.13 & \textbf{0.20} & 0.13 & \textbf{0.21} & 0.13 & \textbf{0.20} \\
LocaLeJEPA higher & \multicolumn{2}{c}{989 / 1000} & \multicolumn{2}{c}{984 / 1000} & \multicolumn{2}{c}{971 / 1000} \\
\bottomrule
\end{tabular}}
\end{table}

\begin{table}[t]
\centering
\caption{\emph{Class Consistency of Frozen $k$-NN Neighborhoods by Class.}
Mean top-20 same-class consistency over the 50 ImageNet-1K validation images of each class. LocaLeJEPA uses $\lambda{=}0.1$ and $K{=}4$. \textbf{Bold} denotes the higher value within each encoder family. The final row reports the average over all 1{,}000 classes.}
\label{tab:knn-neighbor-classes}
\small
\setlength{\tabcolsep}{4pt}
\resizebox{0.7\linewidth}{!}{\begin{tabular}{l cc cc cc}
\toprule
 & \multicolumn{2}{c}{ViT-S/8} & \multicolumn{2}{c}{H-ViT-S/8} & \multicolumn{2}{c}{$D_4$-RN50} \\
\cmidrule(lr){2-3}\cmidrule(lr){4-5}\cmidrule(lr){6-7}
 & SIGReg & LocaLeJEPA & SIGReg & LocaLeJEPA & SIGReg & LocaLeJEPA \\
\midrule
\multicolumn{7}{l}{\emph{Bears, reptiles, marine mammals}} \\
Polar bear & 0.19 & \textbf{0.31} & 0.22 & \textbf{0.32} & 0.14 & \textbf{0.26} \\
Brown bear & 0.11 & \textbf{0.20} & 0.12 & \textbf{0.23} & 0.09 & \textbf{0.22} \\
Komodo & 0.14 & \textbf{0.25} & 0.21 & \textbf{0.31} & 0.29 & \textbf{0.33} \\
Triceratops & 0.20 & \textbf{0.35} & 0.20 & \textbf{0.40} & 0.33 & \textbf{0.52} \\
Alligator & 0.05 & \textbf{0.08} & 0.05 & \textbf{0.07} & 0.05 & \textbf{0.10} \\
Cobra & 0.05 & \textbf{0.11} & 0.06 & \textbf{0.10} & 0.10 & \textbf{0.24} \\
Grey whale & 0.31 & \textbf{0.41} & 0.31 & \textbf{0.42} & \textbf{0.30} & 0.27 \\
Orca & 0.40 & \textbf{0.56} & 0.39 & \textbf{0.58} & 0.37 & \textbf{0.50} \\
Dugong & 0.38 & \textbf{0.49} & 0.32 & \textbf{0.49} & 0.27 & \textbf{0.38} \\
Sea lion & 0.06 & \textbf{0.12} & 0.09 & \textbf{0.13} & 0.08 & \textbf{0.11} \\
\midrule
\multicolumn{7}{l}{\emph{Other animals}} \\
Goldfish & 0.18 & \textbf{0.35} & 0.19 & \textbf{0.34} & 0.26 & \textbf{0.29} \\
Great white shark & 0.35 & \textbf{0.40} & 0.38 & \textbf{0.43} & 0.25 & \textbf{0.32} \\
Bald eagle & 0.16 & \textbf{0.27} & 0.14 & \textbf{0.25} & 0.14 & \textbf{0.22} \\
Flamingo & 0.28 & \textbf{0.43} & 0.31 & \textbf{0.46} & 0.29 & \textbf{0.38} \\
King penguin & 0.32 & \textbf{0.57} & 0.29 & \textbf{0.53} & 0.45 & \textbf{0.68} \\
Golden retriever & 0.04 & \textbf{0.09} & 0.04 & \textbf{0.08} & 0.04 & \textbf{0.08} \\
Tabby cat & 0.19 & \textbf{0.30} & 0.20 & \textbf{0.31} & 0.23 & \textbf{0.33} \\
Tiger & 0.32 & \textbf{0.47} & 0.32 & \textbf{0.42} & 0.24 & \textbf{0.34} \\
Zebra & 0.77 & \textbf{0.85} & 0.79 & \textbf{0.85} & 0.72 & \textbf{0.78} \\
African elephant & 0.25 & \textbf{0.35} & 0.23 & \textbf{0.34} & 0.30 & \textbf{0.32} \\
Giant panda & 0.46 & \textbf{0.61} & 0.54 & \textbf{0.66} & 0.49 & \textbf{0.64} \\
Ladybug & 0.05 & \textbf{0.16} & 0.06 & \textbf{0.13} & 0.07 & \textbf{0.10} \\
Monarch butterfly & 0.51 & \textbf{0.72} & 0.59 & \textbf{0.77} & 0.33 & \textbf{0.52} \\
\midrule
\multicolumn{7}{l}{\emph{Objects, food, scenes}} \\
Acoustic guitar & 0.06 & \textbf{0.12} & 0.07 & \textbf{0.13} & 0.05 & \textbf{0.12} \\
Airliner & 0.53 & \textbf{0.58} & 0.52 & \textbf{0.59} & 0.50 & \textbf{0.52} \\
Ambulance & 0.34 & \textbf{0.36} & 0.31 & \textbf{0.38} & 0.22 & \textbf{0.33} \\
Castle & 0.25 & \textbf{0.28} & 0.22 & \textbf{0.27} & 0.16 & \textbf{0.23} \\
Grand piano & 0.09 & \textbf{0.22} & 0.08 & \textbf{0.21} & 0.07 & \textbf{0.23} \\
Laptop & 0.04 & \textbf{0.10} & 0.04 & \textbf{0.10} & 0.04 & \textbf{0.06} \\
School bus & 0.12 & \textbf{0.20} & 0.15 & \textbf{0.22} & 0.10 & \textbf{0.15} \\
Sports car & 0.17 & \textbf{0.22} & 0.14 & \textbf{0.24} & 0.19 & \textbf{0.23} \\
Teapot & 0.05 & \textbf{0.07} & \textbf{0.07} & 0.07 & 0.06 & \textbf{0.13} \\
Banana & 0.06 & \textbf{0.12} & 0.06 & \textbf{0.15} & 0.07 & \textbf{0.11} \\
Pizza & 0.22 & \textbf{0.37} & 0.23 & \textbf{0.34} & 0.18 & \textbf{0.26} \\
Volcano & 0.25 & \textbf{0.32} & 0.23 & \textbf{0.28} & 0.27 & \textbf{0.33} \\
Daisy & 0.72 & \textbf{0.84} & 0.78 & \textbf{0.83} & \textbf{0.85} & 0.83 \\
\midrule
All 1000 classes & 0.13 & \textbf{0.20} & 0.13 & \textbf{0.21} & 0.13 & \textbf{0.20} \\
\bottomrule
\end{tabular}}
\end{table}

\textbf{Qualitative Retrieval.}
\Cref{fig:knn-retrieval} visualizes the eight nearest training examples
for three query images under the Euclidean ViT-S/8. As shown, for the polar-bear query, the number of same-class neighbors increases from four under SIGReg to seven under LocaLeJEPA\rev{. For the Komodo dragon, it rises from one to five, and for the triceratops, from three to four.} Several SIGReg neighbors instead share strong visual attributes with the query (such as color or
background) without sharing its class. The aggregate analysis below evaluates
whether this pattern extends beyond these examples.
\begin{figure}[htbp]
    \centering
    \includegraphics[width=\linewidth]{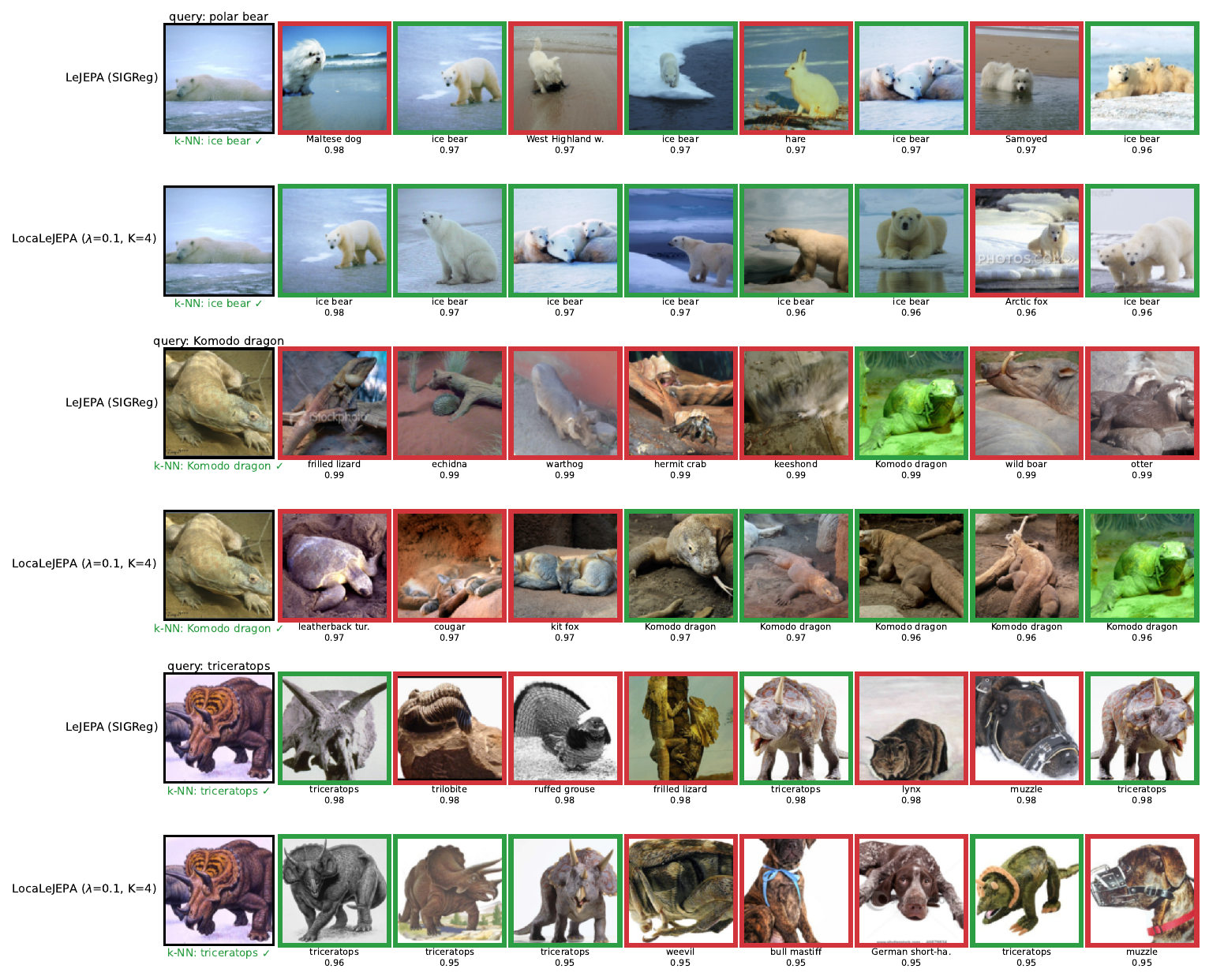}
    \caption{\emph{Nearest-neighbor Retrieval in Frozen ViT-S/8 Features.}
    For each preselected query, we show the eight nearest ImageNet-1K training
    images under cosine similarity for SIGReg and LocaLeJEPA
    ($\lambda{=}0.1$, $K{=}4$). Green borders denote the query class and red
    borders a different class, with labels reporting class and cosine similarity.}
    \label{fig:knn-retrieval}
\end{figure}

\textbf{Neighborhood Class Consistency.}
For a validation image $x$ with class label $c(x)$, let
$\mathcal N_{\revred{k}}(x)$ denote its \revred{$k$} nearest training examples under cosine
similarity. We define the neighborhood class consistency as:
\begin{equation}
    q_{\revred{k}}(x)
    =
    \frac{1}{\revred{k}}
    \sum_{y\in\mathcal N_{\revred{k}}(x)}
    \mathbbm{1}\!\left[c(y)=c(x)\right]
    \label{eq:knn-class-consistency}
\end{equation}
Thus, $q_{\revred{k}}(x){=}1$ when every retrieved neighbor belongs to the query class and
$q_{\revred{k}}(x){=}0$ when none does. We use \revred{$k{=}20$}, matching the headline
$k$-NN readout, and average the statistic either over the 50 validation images
of each class or over the complete 50k-image validation set.

\textbf{Results.}
\Cref{tab:knn-neighbor-groups} reports the statistic over all 1{,}000
ImageNet-1K classes, grouped by WordNet category. Under SIGReg, approximately
$13\%$ of the top-20 neighborhood shares the query class for each of the three
encoder families. LocaLeJEPA increases this fraction to approximately $20\%$
for the Euclidean ViT-S/8 and $D_4$-equivariant ResNet-50, and to $21\%$ for
the hyperbolic ViT-S/8. LocaLeJEPA increases class consistency in every reported WordNet group and in 989, 984, and 971 of the 1{,}000 individual classes for the Euclidean,
hyperbolic, and equivariant encoders, respectively.
\Cref{tab:knn-neighbor-classes} provides a representative 36-class
breakdown, where LocaLeJEPA yields higher consistency in 105 of the 108
class--encoder comparisons. Absolute neighborhood consistency varies \rev{widely} across semantic groups.
Fine-grained families containing many visually similar categories, such as dog
breeds and reptiles, exhibit lower class purity under both objectives, whereas
visually \rev{distinct} categories such as zebra, daisy, and monarch butterfly
show \rev{much} higher values. This behavior is expected for ImageNet-1K:
the weighted $k$-NN prediction requires only that the correct class receive the
largest aggregate vote, not that a majority of retrieved samples share the
query label. \rev{Hence}, \revred{the $k$-NN top-1 accuracies of these encoders} are compatible with neighborhoods whose
majority consists of other classes. Together, the retrieval examples and full-validation statistics show that the
$k$-NN improvement of LocaLeJEPA is accompanied by a systematic increase in
local semantic class consistency across architectures and categories. This
analysis directly characterizes the neighborhoods used by the classifier
\rev{and needs no} additional learned readout.


\end{document}